\newtheorem{theorem}{Theorem}
\newtheorem{lemma}{Lemma}
\newtheorem{definition}{Definition}
\newtheoremstyle{noparens}%
  {}{}%
  {\itshape}{}%
  {\bfseries}{.}%
  { }%
  {\thmname{#1}\thmnumber{ #2}\mdseries\thmnote{ #3}}
\theoremstyle{noparens}
\newtheorem{theoremNoParens}[theorem]{Theorem}
\begin{document}
\begin{sloppypar}
\title{Towards Efficient  Local Causal Structure Learning}
%
%
% author names and IEEE memberships
% note positions of commas and nonbreaking spaces ( ~ ) LaTeX will not break
% a structure at a ~ so this keeps an author's name from being broken across
% two lines.
% use \thanks{} to gain access to the first footnote area
% a separate \thanks must be used for each paragraph as LaTeX2e's \thanks
% was not built to handle multiple paragraphs
%
%
%\IEEEcompsocitemizethanks is a special \thanks that produces the bulleted
% lists the Computer Society journals use for "first footnote" author
% affiliations. Use \IEEEcompsocthanksitem which works much like \item
% for each affiliation group. When not in compsoc mode,
% \IEEEcompsocitemizethanks becomes like \thanks and
% \IEEEcompsocthanksitem becomes a line break with idention. This
% facilitates dual compilation, although admittedly the differences in the
% desired content of \author between the different types of papers makes a
% one-size-fits-all approach a daunting prospect. For instance, compsoc
% journal papers have the author affiliations above the "Manuscript
% received ..."  text while in non-compsoc journals this is reversed. Sigh.

\author{Shuai~Yang,~\IEEEmembership{}
        Hao~Wang,~\IEEEmembership{}
        Kui~Yu$^{*}$,~\IEEEmembership{}
        Fuyuan~Cao,~\IEEEmembership{}
        and~Xindong~Wu~\IEEEmembership{}% <-this % stops a space
\IEEEcompsocitemizethanks{\IEEEcompsocthanksitem S. Yang, H. Wang and  K. Yu  are with the Key Laboratory of Knowledge Engineering With Big Data of Ministry of Education, Hefei University of Technology, China, also with the School of Computer Science and Information Engineering, Hefei University of Technology, Hefei 230601, China. E-mail: yangs@mail.hfut.edu.cn; \{jsjxwangh, yukui\}@hfut.edu.cn.% <-this % stops an unwanted space
\IEEEcompsocthanksitem F. Cao is with the School of Computer and Information Technology, Shanxi University, Taiyuan 030006, China. E-mail: cfy@sxu.edu.cn.
\IEEEcompsocthanksitem X. Wu is with the Key Laboratory of Knowledge Engineering With Big Data of Ministry of Education, Hefei University of Technology, Hefei 230601,
China, also with Mininglamp Academy of Sciences, Mininglamp Technology, Beijing, 100102, China. E-mail: wuxindong@mininglamp.com.}
\thanks{Manuscript received month day, year; revised month day, year. (*Corresponding author: Kui Yu.)}
}

% note the % following the last \IEEEmembership and also \thanks -
% these prevent an unwanted space from occurring between the last author name
% and the end of the author line. i.e., if you had this:
%
% \author{....lastname \thanks{...} \thanks{...} }
%                     ^------------^------------^----Do not want these spaces!
%
% a space would be appended to the last name and could cause every name on that
% line to be shifted left slightly. This is one of those "LaTeX things". For
% instance, "\textbf{A} \textbf{B}" will typeset as "A B" not "AB". To get
% "AB" then you have to do: "\textbf{A}\textbf{B}"
% \thanks is no different in this regard, so shield the last } of each \thanks
% that ends a line with a % and do not let a space in before the next \thanks.
% Spaces after \IEEEmembership other than the last one are OK (and needed) as
% you are supposed to have spaces between the names. For what it is worth,
% this is a minor point as most people would not even notice if the said evil
% space somehow managed to creep in.

% The paper headers
\markboth{Journal of \LaTeX\ Class Files,~Vol.~14, No.~8, August~2020}%
{Shell \MakeLowercase{\textit{et al.}}: Bare Demo of IEEEtran.cls for Computer Society Journals}
% The only time the second header will appear is for the odd numbered pages
% after the title page when using the twoside option.
%
% *** Note that you probably will NOT want to include the author's ***
% *** name in the headers of peer review papers.                   ***
% You can use \ifCLASSOPTIONpeerreview for conditional compilation here if
% you desire.

% The publisher's ID mark at the bottom of the page is less important with
% Computer Society journal papers as those publications place the marks
% outside of the main text columns and, therefore, unlike regular IEEE
% journals, the available text space is not reduced by their presence.
% If you want to put a publisher's ID mark on the page you can do it like
% this:
%\IEEEpubid{0000--0000/00\$00.00~\copyright~2015 IEEE}
% or like this to get the Computer Society new two part style.
%\IEEEpubid{\makebox[\columnwidth]{\hfill 0000--0000/00/\$00.00~\copyright~2015 IEEE}%
%\hspace{\columnsep}\makebox[\columnwidth]{Published by the IEEE Computer Society\hfill}}
% Remember, if you use this you must call \IEEEpubidadjcol in the second
% column for its text to clear the IEEEpubid mark (Computer Society jorunal
% papers don't need this extra clearance.)

% use for special paper notices
%\IEEEspecialpapernotice{(Invited Paper)}

% for Computer Society papers, we must declare the abstract and index terms
% PRIOR to the title within the \IEEEtitleabstractindextext IEEEtran
% command as these need to go into the title area created by \maketitle.
% As a general rule, do not put math, special symbols or citations
% in the abstract or keywords.
\IEEEtitleabstractindextext{%
\begin{abstract}
Local causal structure learning aims to discover and  distinguish direct causes (parents) and direct effects (children) of a variable of interest from data. While emerging successes have been made, existing methods need to search a large  space to distinguish direct causes from direct effects of a target variable \emph{T}. To tackle this issue, we propose a novel Efficient Local Causal Structure learning algorithm, named ELCS. Specifically, we first propose the concept of N-structures, then design an efficient Markov Blanket (MB) discovery subroutine to integrate MB learning with N-structures to learn the MB of \emph{T} and simultaneously distinguish  direct causes from direct effects of \emph{T}. With the proposed MB subroutine, ELCS starts from the target variable, sequentially finds MBs of variables connected to the target variable and simultaneously constructs local causal structures over MBs until the direct causes and direct effects of the target variable have been distinguished. Using eight Bayesian networks the extensive experiments have validated that ELCS achieves better accuracy and efficiency than the state-of-the-art algorithms.
\end{abstract}

\begin{IEEEkeywords}
 Bayesian network, Markov Blanket, Local causal structure learning.
\end{IEEEkeywords}}

% make the title area
\maketitle

% To allow for easy dual compilation without having to reenter the
% abstract/keywords data, the \IEEEtitleabstractindextext text will
% not be used in maketitle, but will appear (i.e., to be "transported")
% here as \IEEEdisplaynontitleabstractindextext when the compsoc
% or transmag modes are not selected <OR> if conference mode is selected
% - because all conference papers position the abstract like regular
% papers do.
\IEEEdisplaynontitleabstractindextext
% \IEEEdisplaynontitleabstractindextext has no effect when using
% compsoc or transmag under a non-conference mode.

% For peer review papers, you can put extra information on the cover
% page as needed:
% \ifCLASSOPTIONpeerreview
% \begin{center} \bfseries EDICS Category: 3-BBND \end{center}
% \fi
%
% For peerreview papers, this IEEEtran command inserts a page break and
% creates the second title. It will be ignored for other modes.
\IEEEpeerreviewmaketitle

\IEEEraisesectionheading{\section{Introduction}\label{sec:introduction}}
\IEEEPARstart{C}AUSAL discovery has always been an important goal  in many scientific fields, such as medicine, computer science and bioinformatics \cite{YuTKDD20,CaiQZZH19,Huang0GG19,MarxV19a}. There has been a great deal of recent interest in discovering causal relationships between variables, since it is not only helpful  to reveal the underlying data generating mechanism, but also to improve classification  and prediction  performance in both static and non-static environments  \cite{yu2019multi}. However, in many real-world scenarios, it is difficult to  discover causal relationships between variables since true causality can only be identified using  controlled experimentation \cite{GourevitchBF06}.

\par Statistical approaches are useful in generating testable causal hypotheses which can accelerate the causal discovery process \cite{ChoiCN20}.  Learning a Bayesian network (BN) from observational data is the popular method for causal structure learning and causal inference. The structure of a BN takes the form of a directed acyclic graph (DAG) in which nodes of the DAG represent the variables and edges represent dependence between variables. A DAG implies causal concepts, since they code potential causal relationships between variables: the existence of a directed edge \emph{X}$\rightarrow$\emph{Y} means that \emph{X} is a direct cause of \emph{Y}, and the absence of a directed edge \emph{X}$\rightarrow$\emph{Y} means that \emph{X} cannot be a direct cause of \emph{Y} \cite{maathuis2009}. When a directed edge \emph{X}$\rightarrow$\emph{Y} in a BN indicates that \emph{X} is a direct cause of \emph{Y}, in this case, the BN is known as a causal Bayesian network. Given a set of conditional dependencies from observational data and a corresponding DAG model, we can infer a causal Bayesian network using intervention calculus \cite{pearl2009causality}. Then learning BN structures (i.e. DAGs) from observational data is the most important step for causal structure learning.

\par In recent years, many  causal structure learning  (i.e. DAG learning) methods have been designed \cite{MarxV19b}, which can be roughly divided into  global causal  structure learning and local causal structure learning. The first type of methods aims to learn the casual structure of all variables, such as MMHC \cite{TsamardinosBA06}, NOTEARS \cite{ZhengARX18} and DAG-GNN \cite{YuCGY19}. However, in many practical scenarios, it is not necessary to waste time  to learn a global structure when we are only interested in the causal relationships around a given  variable.  To tackle this issue, the second type of methods is proposed, with the aim to discover and distinguish the direct causes (parents) and direct effects (children) of a target variable, such as  PCD-by-PCD \cite{YinZWHZG08} and CMB \cite{GaoJ15}.

PCD-by-PCD (PCD means Parents, Children and some Descendants) \cite{YinZWHZG08} and  CMB (Causal Markov Blanket) \cite{GaoJ15} first learn the PCD  or MB (Markov Blanket) of a target variable and construct a local structure among the target variable and the variables in the PCD or MB, then sequentially learn PCDs or MBs of the variables connected to the target variable and simultaneously construct local structures among variables in  PCDs or MBs until the parents  and children  of the target variable have been distinguished.

\par While emerging successes have been made, existing local causal structure learning methods suffer from the following limitations. They need to search a large space to distinguish parents from children of a target variable. That is to say, existing local causal structure learning methods not only need to learn the PCD or MB of the target variable, but also may need to learn  PCDs or MBs of the variables connected to the target variable. In the worst case (e.g. the target variable has all single ancestors) all existing methods may be required to learn PCDs or MBs of all variables in a dataset. This leads to that existing local causal structure learning methods are often computationally expensive or even infeasible especially with a large-sized BN. For instance, as shown in Fig. \ref{Example1}, there is an N-structure (see Definition \ref{def6}) formed by four variables \emph{T}, \emph{A}, \emph{B}, and \emph{C}. Given the target variable \emph{T}, in order to determine the causal relationship between \emph{T} and \emph{B}: PCD-by-PCD is required to learn the PCD of \emph{T} and PCDs of \emph{A} and \emph{B}. For CMB, if only the MB of \emph{T} is learnt, the edge direction between \emph{T} and \emph{B} cannot be determined since there is no V-structures around B.  CMB needs to further learn the MBs of \emph{B} and \emph{A} to orient the edge between \emph{T} and \emph{B}.  In a word, both PCD-by-PCD and  CMB need to search a large space to determine the edge direction between \emph{T} and \emph{B}. A larger search space will result in performing more conditionally independence (CI) tests for discovering the causal relationships around a given variable. More CI tests not only increase computational time, but also lead to more unreliable tests. It will be beneficial to  local causal structure learning if  the  edge direction between \emph{B} and \emph{T} can be determined in learning the PCD or MB of the target variable \emph{T} without learning PCDs or MBs of the other variables.

\begin{figure}
  \centering
  % Requires \usepackage{graphicx}
  \includegraphics[width=4.5cm]{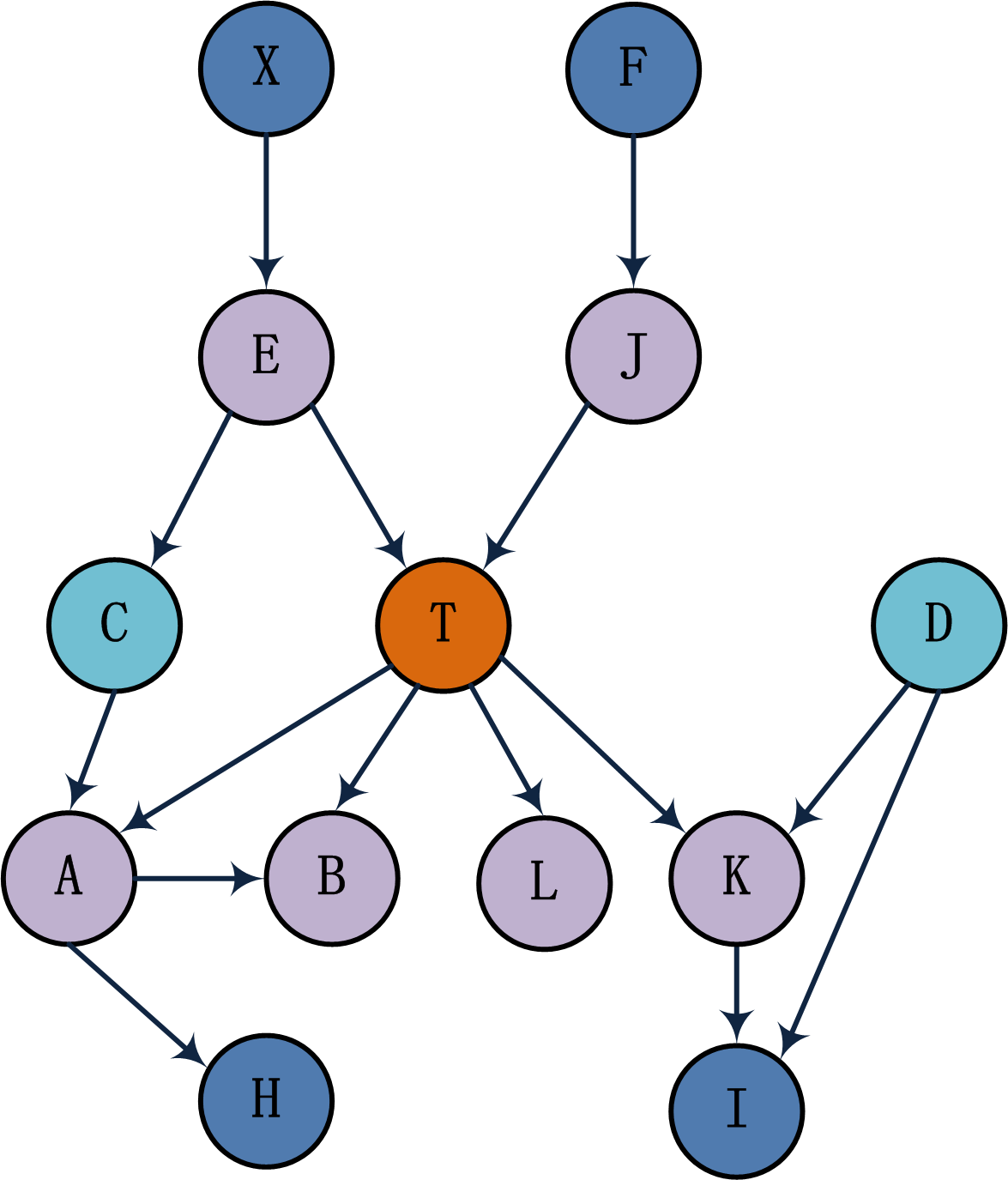}
  \caption{A sample Bayesian network. The MB  of \emph{T} includes \emph{E} and \emph{J} (parents), \emph{A}, \emph{B}, \emph{L} and \emph{K} (children), \emph{C} and \emph{D} (spouses).}
  \label{Example1}
\end{figure}

\par Then a question naturally arises: can we reduce the search space in determining the edge directions between a given variable and its children to speed up the local causal structure learning? To address this problem, our main contributions of the paper are summarized as follows.
\begin{itemize}
\item We propose the concept of N-structures, a special local structure for edge directions in local causal structure learning. Then we propose a new local causal structure learning, called ELCS. Through leveraging the N-structures, ELCS learns the MBs of the variables as few as possible to distinguish parents from children of a given variable as many as possible, which improves the efficiency of local causal structure learning and simultaneously reduces the impact of unreliable CI tests.
\item  To integrate MB learning with N-structures to infer edge directions as many as possible during the MB learning procedure, we design an efficient MB discovery subroutine (EMB) and its efficient version EMB-II. EMB not only is able to learn the MB of a variable, but also has an ability to distinguish parents from children of the variable.
\item  We have conducted extensive experiments on eight benchmark BNs, and have compared ELCS with five existing causal structure learning algorithms, including three state-of-the-art global structure learning and two local structure learning algorithms, to demonstrate the effectiveness and efficiency of the ELCS algorithm.
\end{itemize}
\par The remainder of this paper is organized as follows. Section 2 reviews the related work, and  Section 3 gives the notations and definitions. Section 4  describes the proposed ELCS algorithm in detail. Section 5 reports and discusses the experimental results. Section 6 summarizes the paper.

\section{Related Work}
\label{relatedwork}
Our work focuses on local causal structure learning and is also related to MB learning and global causal structure learning. So this section briefly introduces the related work in the three areas.

\par \textbf{MB learning}. Learning Markov Blanket (MB) plays an essential part  in the skeleton learning during BN structure learning. Existing MB learning methods can be categorized into two types: constraint-based methods and score-based methods. The former employs  independence tests  to find the MB of  a given variable \cite{CCMB,BAMB}, whereas the latter learns the MB using score-based BN structure learning  algorithms \cite{NiinimakiP12,S2TMB}.

\par Constraint-based methods can be roughly grouped into simultaneous MB learning and divide-and-conquer MB learning. Given the target variable \emph{T}, the simultaneous MB learning algorithm aims to learn  parents, children, and spouses of \emph{T} simultaneously, and does not distinguish  spouses of \emph{T} from its PC, such as  GSMB \cite{GSMB}, IAMB \cite{IAMB}, Inter-IAMB \cite{tsamardinos2003algorithms} and Fast-IAMB \cite{FastIAMB}.  To reduce the  sample requirement of the simultaneous MB learning algorithm, the divide-and-conquer MB learning algorithm is proposed, with the aim to find PC and spouses of the target variable separately. The representative divide-and-conquer MB learning algorithms include CCMB \cite{CCMB}, BAMB \cite{BAMB}, MMMB \cite{TsamardinosAS03}, HITON-MB \cite{AliferisTS03}, PCMB \cite{PCMB} and STMB \cite{GaoJ17}. Recently, a comprehensive review of the state-of-the-art MB learning algorithms are discussed in \cite{yu2020causality}.

\par However, existing MB learning methods only learn a local skeleton around a target variable and do not distinguish parents from children in the learnt MB of a target variable.
\par \textbf{Global causal structure learning}. A large amount of methods have been designed for global causal structure learning. Recent methods can be roughly categorized into two types:  local-to-global structure learning methods and continuous optimization based learning methods. The local-to-global structure learning approach, such as MMHC \cite{TsamardinosBA06}, SSL$\pm$C/G \cite{NiinimakiP12} and GSBN \cite{MargaritisT99}, first learns the MB or PC  of each variable, then constructs a skeleton of a DAG using  the  learnt MBs or PCs, and finally  orients edges of  the learnt skeleton using score-based or constraint-based causal learning algorithms.  Instead of learning the MB of each variable first, GGSL \cite{GaoFC17} starts with a randomly selected variable, and then  uses a score-based MB learning algorithm to gradually expand the learnt structure through a series of local structure learning steps. Based on GGSL, a parallel BN structure learning algorithm (PSL) is designed to improve the efficiency \cite{GaoW18}.

\par Recently,  several continuous optimization based learning approaches  have been proposed for global causal structure learning \cite{YuCGY19,ZhengARX18,Zhu1906,DVAE}. Zheng et al. consider a BN structure learning  problem as a purely continuous optimization problem and propose the NOTEARS algorithm \cite{ZhengARX18}. DAG-GNN uses a  graph neural network based  deep generative model to capture the complex data distribution to learn BN structures \cite{YuCGY19}. RL-BIC uses reinforcement learning to search for a directed acyclic graph (DAG) with the best score \cite{Zhu1906}.  Zhang et al. propose a DAG variational autoencoder (D-VAE) for BN structure learning \cite{DVAE}.

\par However, global causal structure learning methods are time consuming or even infeasible when the number of variables of a BN is large. In fact, in many practical settings, we are only interested in distinguishing parents from children of a variable of interest. In this case, it is unnecessary and wasteful to find an entire BN structure.

\par \textbf{Local causal structure learning}. Local causal structure learning aims to learn and distinguish the parents and children of a target variable. Although many algorithms have been designed for learning a whole structure, only several algorithms have been proposed for local causal structure learning. PCD-by-PCD first discovers the PCD of a target variable, then sequentially discovers  PCDs of the variables connected  to the target variable and  simultaneously finds V-structures and orients the edges  connected to the target variable until  all the  parents and children of the target variable are identified \cite{YinZWHZG08}.  CMB first learns the MB of a target variable using HITON-MB and orients edges  by tracking the conditional independence changes in the MB of the target variable, then  sequentially learns MBs of the variables connected  to the target variable and  simultaneously  construct local structures along the paths starting from the target variable until  the parents and children of the target variable have been identified or  they cannot be identified further by continuing the process \cite{GaoJ15}.

\par As we discussed in Section 1, both PCD-by-PCD and CMB encounter the time inefficient problem since they need to learn a large number of PCDs or MBs of the variables for distinguishing parents from children of a target variable. To  tackle this issue, in this paper, we aim to develop a new method through learning the MBs of variables as few as possible while orienting edges as many as possible.

\section{Notations and Definitions}
In this section, we will briefly introduce some basic definitions and notations frequently used in this paper (see Table \ref{notation} for a summary of the notations).  Let $\emph{\textbf{U}}$ denote  a set of random variables. $\mathbb{P}$   represents a joint probability distribution over $\emph{\textbf{U}}$, and $\mathbb{G}$   is a  DAG over $\emph{\textbf{U}}$. In a  DAG, $\emph{X}$ is a parent of  $\emph{Y}$ and $\emph{Y}$ is a child of $\emph{X}$ if there exists a directed edge from  $\emph{X}$ to  $\emph{Y}$.  $\emph{X}$ is an ancestor of $\emph{Y}$ (i.e., non-descendant of $\emph{Y}$) and $\emph{Y}$ is a descendant of $\emph{X}$ if there exists a directed path from $\emph{X}$ to $\emph{Y}$.

\tabcolsep 0.05in
\begin{table}
\scriptsize
\caption{Summary of  Notations}\label{notation}
\centering
\begin{tabular}{|c|c|}
\hline
Notation &Meanings              \\
\hline
\emph{\textbf{U}}    &  a set of random variables\\\hline
\emph{\textbf{W}}    &  a subset of \emph{\textbf{U}}\\\hline
$\mathbb{P}$           &  a joint probability distribution over \emph{\textbf{U}}\\\hline
$\mathbb{G}$             &  a direct acyclic graph over \emph{\textbf{U}} \\\hline
DAG          &  direct acyclic graph \\\hline
\emph{X}, \emph{Y}, \emph{Z}, \emph{T }            &  a single variable in  \emph{\textbf{U}}\\\hline
\emph{\textbf{Z}},\emph{\textbf{S} }          &  a conditioning set within  \emph{\textbf{U}} \\\hline
\emph{X}$\!\perp\!\!\!\perp$\emph{Y}      &  \emph{X} and \emph{Y}  are  independent given \emph{\textbf{Z}} \\\hline
\emph{X}$\not\!\perp\!\!\!\perp$\emph{Y}      &  \emph{X} and \emph{Y}  are  dependent given \emph{\textbf{Z}} \\\hline
\emph{\textbf{MB}}$_{\emph{T}}$     &  Markov Blanket of \emph{T} \\\hline
\textbf{\emph{PC}}$_{\emph{T}}$     &  a set of parents and children of \emph{T} \\\hline
\textbf{\emph{P}}$_{\emph{T}}$      &  a set of parents  of \emph{T} \\\hline
\textbf{\emph{C}}$_{\emph{T}}$      &  a set of children of \emph{T}   \\\hline
\textbf{\emph{UN}}$_{\emph{T}}$      &  undistinguished variables in \textbf{\emph{PC}}$_{\emph{T}}$\\\hline
\textbf{\emph{SP}}$_{\emph{T}}$     &   a set of spouses of \emph{T} \\\hline
\textbf{\emph{SP}}$_{\emph{T}}$\{{\emph{X}}\}    &  a  spouses of \emph{T} with regard to \emph{T}'s child \emph{X} \\\hline
\textbf{\emph{Sep}}$_{\emph{T}}$\{{\emph{X}}\}    &  a set that \emph{d}-separates \emph{X} from \emph{T}\\\hline
\textbf{\emph{Sep}}$_{\emph{T}}$    &  a set that contains the  sets \textbf{\emph{Sep}}$_{\emph{T}}$\{{$\cdot$}\}  of all variables\\\hline
\textbf{\emph{CSP}}$_{\emph{T}}$    &  a set that contains the candidate spouse sets  of  all \emph{PC}$_{\emph{T}}$ variables\\\hline
\emph{Que}          &  a circular queue(first in fist out)\\\hline
$|\cdot|$         &  the size of a set\\\hline
\end{tabular}
\end{table}
\begin{definition}[Conditional Independence \cite{neufeld1993pearl}]\label{def1}
Given a conditioning set \textbf{Z},   X is conditionally independent of  Y  if and only if $P(X|Y,\textbf{Z}) = P(X|\textbf{Z})$.
\end{definition}

\begin{definition}[Bayesian Network \cite{neufeld1993pearl}]\label{BN}
 The triplet $<$\textbf{U},$\mathbb{G}$,$\mathbb{P}$$>$  is called a Bayesian network (BN)  if $<$\textbf{U},$\mathbb{G}$,$\mathbb{P}$$>$ satisfies the Markov condition: each variable is conditionally independent of  variables in its non-descendant given its parents in $\mathbb{G}$.
\end{definition}

\begin{definition}[Casual Bayesian Network \cite{pearl2009causality}]\label{BN}
A BN is called a causal Bayesian network (CBN) if a directed edge in $\mathbb{G}$ has causal interpretation, that is, X$\rightarrow$Y indicates that X is a direct cause of Y.
\end{definition}

\begin{definition}[Causal Structure Learning]\label{csl}
Global causal structure learning   aims to learn a DAG  over \textbf{U} from observational data, where  edges represent potential causal relationships between variables, that is, X is a direct cause of Y if there exists a directed edge from X to Y \emph{\cite{pearl2009causality}}. Local causal structure learning  aims to  discover and distinguish direct causes and direct effects  of a  variable of interest \emph{\cite{GaoJ15}}.
\end{definition}
% \emph{\cite{CPS}}
%and only if

\begin{definition}[V-structure \cite{neufeld1993pearl}]\label{defVS}
If there is no an edge between X and Y, and Z has two incoming edges from X and Y, respectively, then X, Z and Y form a V-structure ($X\rightarrow Z \leftarrow Y$).
\end{definition}

%, since it can be discovered  using CI tests

In a BN, \emph{Z} is  a collider if there are two directed edges from  \emph{X} to \emph{Z} and from  \emph{Y} to \emph{Z}, respectively. V-structures play an important role in determining  the edge directions between variables. For example, if there is a  V-structure ($X\rightarrow Z \leftarrow Y$) formed by \emph{X}, \emph{Y} and \emph{Z}, we can identify  \emph{X} and \emph{Y} as parents of \emph{Z} using  conditional independence (CI) tests.

% $\textbf{U}$ is a set of random variables,
% X and Y are two variables of $\textbf{U}$.
\begin{definition}[D-separation \cite{neufeld1993pearl}]\label{def3}
Given a set  \textbf{S} $\subseteq $  \textbf{U}$\setminus$\{X,Y\}, a path $\pi$ between  X and Y is blocked, if one of the following conditions  is satisfied: 1) there is a non-collider variable within \textbf{S} on $\pi$, or 2) there is a collier variable Z  on $\pi$, while Z and any  its descendants are not in \textbf{S}. Otherwise, $\pi$ between  X and Y is  unblocked. X and Y are d-separation given \textbf{S}  if and only if  each path between X and Y is blocked by \textbf{S}.
\end{definition}

In a DAG, given a conditioning set, we can determine whether two variables are conditionally independent using  Definition \ref{def3}.

\begin{definition}[Faithfulness \cite{CPS}]\label{def4}
Given a BN $<$\textbf{U},$\mathbb{G}$,$\mathbb{P}$$>$,  $\mathbb{G}$ is faithful to $\mathbb{P}$ if and only if all the conditional independencies appear in $\mathbb{P}$ are entailed by $\mathbb{G}$.  $\mathbb{P}$ is faithful if and only if there is a \emph{DAG} $\mathbb{G}$ such that $\mathbb{G}$ is faithful to $\mathbb{P}$.
\end{definition}

Definition \ref{def4} indicates that in a faithful BN, if \emph{X} and \emph{Y} are d-separated given the conditioning set \emph{\textbf{S}} in $\mathbb{G}$, then they will be conditionally independent  given \emph{\textbf{S}} in $\mathbb{P}$.

\begin{definition}[Markov Blanket \cite{neufeld1993pearl}]\label{def5}
In a faithful BN, the MB of a target variable T is denoted as \textbf{MB}$_{T}$, which is uniqueness and consists of parents, children and spouses (other parents of the target's children) of T. All other variables  in \textbf{U} $\setminus$\textbf{MB}$_{T}$$\setminus$\{T\} are conditionally independent of T given \textbf{MB}$_{T}$, $\forall$ X $\subseteq$  \textbf{U} $\setminus$\textbf{MB}$_{T}$$\setminus$\{T\}, X $\!\perp\!\!\!\perp$ T $|$ \textbf{MB}$_{T}$, where X $\!\perp\!\!\!\perp$ T $|$ \textbf{MB}$_{T}$ denotes X and T are conditionally independent  conditioning on \textbf{MB}$_{T}$.
\end{definition}

\begin{definition}[N-structure]\label{def6}
In a faithful BN, if there exists four variables T, A, B and C, and T is a parent of A and B, C is a parent of A, there is no an edge between C and T, A is an ancestor of B, the other parents of B are in  PC set of T. Then, the local structure formed by the four variables  is called an N-structure.
\end{definition}

\begin{figure}
  \centering
  % Requires \usepackage{graphicx}
  \includegraphics[width=6cm]{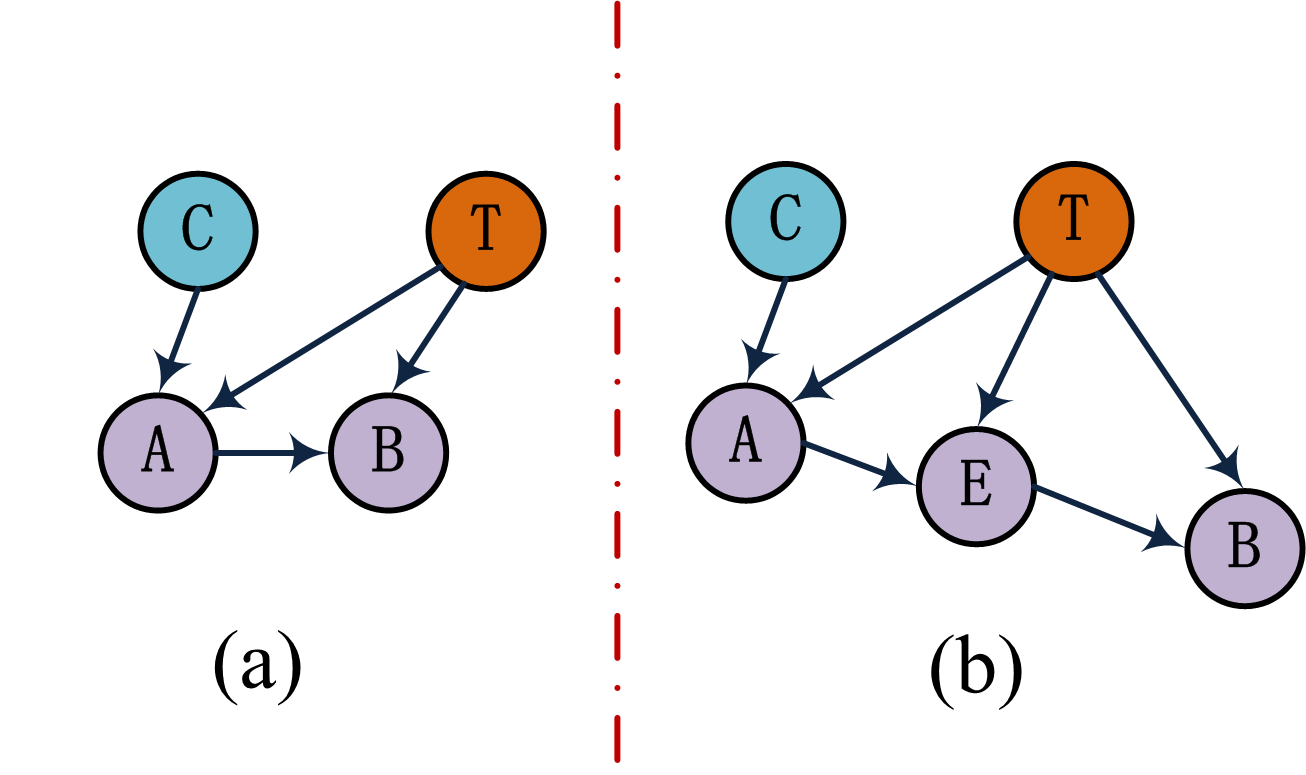}
  \caption{Examples of N-structures.}
  \label{Examplesub1}
\end{figure}

Fig. \ref{Examplesub1} gives examples of the N-structures. In Fig. \ref{Examplesub1} (a),  there is an N-structure formed by \emph{T}, \emph{A}, \emph{B} and \emph{C}. In Fig. \ref{Examplesub1} (b), variables \emph{T}, \emph{A}, \emph{B} and \emph{C} construct an N-structure, and  variables \emph{T}, \emph{A}, \emph{E} and \emph{C} construct an N-structure. Given the target variable \emph{T}, we can leverage the N-structures to determine edge directions between \emph{T} and its children (i.e. \emph{B} and \emph{E}) during learning the MB of \emph{T} without learning MBs of the other variables.

\begin{theoremNoParens}[\cite{CPS}]\label{thm1}
In a faithful BN, for any two variables X $\in$ \textbf{U} and Y $\in$ \textbf{U}, if there exists an edge between X and Y, then $\forall$ \textbf{S} $\subseteq$  \textbf{U} $\setminus$ \{X,Y\}, X$\not\!\perp\!\!\!\perp$Y $|$ \textbf{S} holds.
\end{theoremNoParens}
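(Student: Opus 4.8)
The plan is to reduce the statement to a purely graphical fact and then pass back to probabilities through faithfulness. First I would invoke Definition~\ref{def4}: in a faithful BN every conditional independence present in $\mathbb{P}$ is entailed by $\mathbb{G}$, so $X\!\perp\!\!\!\perp Y\mid\textbf{S}$ would imply that $X$ and $Y$ are d-separated given $\textbf{S}$ in $\mathbb{G}$. Taking the contrapositive, it is enough to prove the graphical claim that, whenever an edge joins $X$ and $Y$, no set $\textbf{S}\subseteq\textbf{U}\setminus\{X,Y\}$ d-separates them; faithfulness then delivers $X\not\!\perp\!\!\!\perp Y\mid\textbf{S}$ for every such $\textbf{S}$.

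For the graphical claim I would focus on the edge itself. The edge between $X$ and $Y$ is a path $\pi$ of length one, which contains no vertex strictly between its two endpoints. Consulting the two blocking conditions of Definition~\ref{def3}, a path is blocked only if it carries a non-collider lying in $\textbf{S}$, or a collider none of whose descendants lies in $\textbf{S}$; each condition refers to an intermediate variable on the path. Since $\pi$ has no intermediate variable at all, neither condition can ever be met, and so $\pi$ is unblocked for every choice of $\textbf{S}$, independent of the direction of the edge.

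I would then close the argument by recalling that, by Definition~\ref{def3}, $X$ and $Y$ are d-separated given $\textbf{S}$ only if \emph{every} path between them is blocked. Because $\pi$ remains unblocked irrespective of $\textbf{S}$, at least one path always fails to be blocked, so $X$ and $Y$ can never be d-separated, regardless of how the other (possibly blockable) paths between them behave. Combined with the faithfulness step above, this yields $X\not\!\perp\!\!\!\perp Y\mid\textbf{S}$ for all $\textbf{S}\subseteq\textbf{U}\setminus\{X,Y\}$, completing the proof. The argument carries little technical difficulty, since a length-one path trivially resists both blocking rules; the only delicate point is the quantifier hidden in d-separation, and one must emphasise that a single persistently unblocked path suffices to preclude d-separation, so that the presence of other paths between $X$ and $Y$ is immaterial.
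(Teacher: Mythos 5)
Your proof is correct. Note that the paper itself offers no proof of this statement---it is imported verbatim from \cite{CPS} as a known result---so there is no in-paper argument to compare against; your argument is the standard one. Both halves are handled properly: the reduction via the contrapositive of faithfulness (conditional independence in $\mathbb{P}$ entails d-separation in $\mathbb{G}$, so non-d-separation entails dependence) is the right direction of Definition~\ref{def4} to invoke, and the graphical core---that the edge is itself a length-one path with no intermediate vertex, hence neither blocking clause of Definition~\ref{def3} can apply, hence it stays unblocked under every conditioning set---is exactly why the result holds. You also correctly flag the one point that is easy to fumble: d-separation universally quantifies over paths, so a single permanently unblocked path settles the matter no matter what other paths exist between $X$ and $Y$. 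The only cosmetic addition one might make is to observe that the endpoints $X$ and $Y$ cannot themselves trigger clause (1) since $\textbf{S}\subseteq\textbf{U}\setminus\{X,Y\}$ excludes them, but your appeal to the absence of intermediate vertices already covers this.
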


Theorem \ref{thm1} demonstrates that if \emph{X} is a parent (or a child) of \emph{Y} if \emph{X} and \emph{Y} are not conditionally independent conditioning any subsets excluding \emph{X} and \emph{Y}.

\section{The  Proposed  Method}

\subsection{The ELCS Algorithm}
We propose the Efficient Local Causal Structure learning  algorithm (ELCS) to distinguish  parents  from children  of a target variable, as shown in Algorithm \ref{algorithmLCDMB}. ELCS starts from the target variable, sequentially finds MBs of variables connected to the target variable and simultaneously constructs local causal structures over MBs until all the parents and children of the target variable have been distinguished or it is clear that they cannot be further distinguished  by continuing the process. In the following, we first summarize the main idea  of ELCS, then give the details of ELCS.

\par To improve the efficiency of local causal structure learning, in ELCS, we propose the following two acceleration strategies. First, ELCS finds the N-structures, and then leverages those found N-structures to infer  edge directions between the target variable \emph{T} and its children during learning the MB of \emph{T}. Second, two rules in Lemma 1 are used to further infer edge directions between \emph{T} and its PC during learning the MB of \emph{T}.

\par As described in Algorithm \ref{algorithmLCDMB}, given the target variable \emph{T}, ELCS first initializes the variable set \emph{\textbf{W}} and the queue \emph{Que} to  empty sets (line 1 in Algorithm \ref{algorithmLCDMB}), where \emph{\textbf{W}}  is used to  store variables that their MBs have been learnt, and  \emph{Que}  is utilized to  store variables that their MBs need to be learnt in  next phase. Then, \emph{T} enters  \emph{Que} (line 2 in Algorithm \ref{algorithmLCDMB}). Next, lines 4-13 in Algorithm \ref{algorithmLCDMB} will be executed. At line 4 in Algorithm  \ref{algorithmLCDMB}, the header element in \emph{Que}  is out of queue, that is, \emph{X} = \emph{T}. Since the MB of \emph{T} has not been learnt, \emph{T} is added to the set \emph{\textbf{W}} (lines 5-6 in Algorithm \ref{algorithmLCDMB}). The EMB (Efficient Markov Blanket discovery) subroutine is executed to learn the MB of \emph{T} (line 7 in Algorithm \ref{algorithmLCDMB}). Given a  variable \emph{X}, the EMB subroutine not only is able to find the PC (\textbf{\emph{PC}}$_{\emph{X}}$) and MB (\textbf{\emph{SP}}$_{\emph{X}}$ $\cup$ \textbf{\emph{PC}}$_{\emph{X}}$), but also has an ability to distinguish parents from children of \emph{X}. The details of EMB  are described in Section 4.2. Let \textbf{\emph{P}}$_{\emph{X}}$ represent a set of the identified parents of \emph{X}. \textbf{\emph{C}}$_{\emph{X}}$ denotes a set of the identified children of \emph{X}.  The set  containing undistinguished PC variables of \emph{X}  is denoted as \textbf{\emph{UN}}$_{\emph{X}}$.  After executing the line 7 in Algorithm \ref{algorithmLCDMB}, the sets of \textbf{\emph{P}}$_{\emph{T}}$, \textbf{\emph{C}}$_{\emph{T}}$, \textbf{\emph{UN}}$_{\emph{T}}$, \textbf{\emph{SP}}$_{\emph{T}}$, \textbf{\emph{PC}}$_{\emph{T}}$ are obtained.  If \textbf{\emph{UN}}$_{\emph{T}}$ is  empty, that is, parents and children of \emph{T} are all distinguished, the learning process will be terminated. Otherwise, the undistinguished variables within \textbf{\emph{UN}}$_{\emph{T}}$ will be put in  \emph{Que} (lines 9-11 in Algorithm  \ref{algorithmLCDMB}). Then, Meek rules \cite{Meek1995Causal} are used to orient other edge directions between variables in \emph{\textbf{W}} (line 13 in Algorithm  \ref{algorithmLCDMB}).  Lines 4-13 in Algorithm \ref{algorithmLCDMB} will be repeated until all the parents  and children  of \emph{T} have been distinguished or \emph{Que} is empty or the size of \emph{\textbf{W}} equals to that of the entire variable set.
\begin{algorithm}[t]
\footnotesize
\caption{ELCS}
\label{algorithmLCDMB}
\begin{algorithmic}[1]
\REQUIRE ~~
Data \emph{D}, target variable \emph{T }, random variables set \textbf{\emph{U}}\\
\ENSURE ~~
\textbf{\emph{P}}$_{\emph{T}}$, \textbf{\emph{C}}$_{\emph{T}}$, \textbf{\emph{UN}}$_{\emph{T}}$\\
%\textbf{\emph{P}}$_{\emph{X}}$, \textbf{\emph{C}}$_{\emph{X}}$, \textbf{\emph{UN}}$_{\emph{X}}$ Local causal structure of \emph{T }\\
\STATE \emph{\textbf{W}} $\leftarrow$ $\emptyset$, \emph{Que} = $\emptyset$\\
\STATE \emph{Que}$.push$(\emph{T})\\
\STATE \textbf{Repeat}\\
\STATE \quad \emph{X} = \emph{Que}.$pop()$
\STATE \quad \textbf{if} \emph{X} $\not\in$ \emph{\textbf{W}} \textbf{then}
\STATE \quad \quad \emph{\textbf{W}} $\leftarrow$  \emph{\textbf{W}} $\cup$ \{\emph{X}\}
\STATE \quad \quad [\textbf{\emph{P}}$_{\emph{X}}$, \textbf{\emph{C}}$_{\emph{X}}$, \textbf{\emph{UN}}$_{\emph{X}}$, \textbf{\emph{SP}}$_{\emph{X}}$, \textbf{\emph{PC}}$_{\emph{X}}$] = EMB(\emph{D}, \emph{X})
\STATE \quad \quad Orienting edge directions between \emph{X} and its PC nodes \\according to  \textbf{\emph{P}}$_{\emph{X}}$ and \textbf{\emph{C}}$_{\emph{X}}$
\STATE \quad \quad \textbf{for }each \emph{Y} $\in$ \textbf{\emph{UN}}$_{\emph{X}}$  \textbf{do}
\STATE \quad \quad \quad  \emph{Que}$.push$(\emph{Y})
\STATE \quad \quad \textbf{end for}
\STATE \quad \textbf{end if}
\STATE \quad Using Meek rules to orient other edge directions between\\ variables in \emph{\textbf{W}}
\STATE \textbf{Until} (1) all parents and children of \emph{T} can be determined, or \\(2) \emph{Que} = $\emptyset$, or (3)\emph{\textbf{W}} = \emph{\textbf{U}}\\
\end{algorithmic}
\end{algorithm}

\begin{algorithm}[t]
\footnotesize
\caption{EMB}
\label{algorithmEMB}
\begin{algorithmic}[1]
\REQUIRE ~~
Data \emph{D}, target variable \emph{T }\\
\ENSURE ~~
\textbf{\emph{P}}$_{\emph{T}}$, \textbf{\emph{C}}$_{\emph{T}}$, \textbf{\emph{UN}}$_{\emph{T}}$, \textbf{\emph{SP}}$_{\emph{T}}$, \textbf{\emph{PC}}$_{\emph{T}}$\\
/*\emph{Step 1: find the PC set of T}*/\\
\STATE [\textbf{\emph{PC}}$_{\emph{T}}$, \textbf{\emph{Sep}}$_{\emph{T}}$]$\leftarrow$ RecogPC(\emph{T}, \emph{D})\\
/*\emph{Step 2: find spouses of T}*/\\
\STATE [\textbf{\emph{SP}}$_{\emph{T}}$, \textbf{\emph{CSP}}$_{\emph{T}}$]$\leftarrow$ RecogSpouses(\emph{D}, \emph{T}, \textbf{\emph{PC}}$_{\emph{T}}$, \textbf{\emph{Sep}}$_{\emph{T}}$)\\
/*\emph{Step 3: remove  false positive PC from  \textbf{PC}$_{T}$}*/\\
\STATE  \textbf{for }each \emph{Y} $\in$ \textbf{\emph{PC}}$_{\emph{T}}$ \textbf{do}
\STATE \quad  \textbf{if} \emph{T} $\!\perp\!\!\!\perp$ \emph{Y} $|$ \emph{\textbf{Z}},  \emph{\textbf{Z}} $\subseteq$ \textbf{\emph{SP}}$_{\emph{T}}$\{{\emph{Y}}\} $\cup$   \textbf{\emph{PC}}$_{\emph{T}}$ $\setminus$ \{\emph{Y}\}    \textbf{then}
\STATE \quad \quad  \textbf{\emph{PC}}$_{\emph{T}}$ $\leftarrow$ \textbf{\emph{PC}}$_{\emph{T}}$ $\setminus$ \{\emph{Y}\}
\STATE \quad \quad  \textbf{\emph{SP}}$_{\emph{T}}$\{{\emph{Y}}\} $\leftarrow$ $\emptyset$
\STATE \quad   \textbf{end if}
\STATE  \textbf{end for}\\
/*\emph{Step 4: find \textbf{P}$_{T}$, \textbf{C}$_{T}$}*/\\
\STATE  [\textbf{\emph{P}}$_{\emph{T}}$, \textbf{\emph{C}}$_{\emph{T}}$, \textbf{\emph{UN}}$_{\emph{T}}$]$\leftarrow$ DistinguishPC(\emph{D}, \emph{T}, \textbf{\emph{PC}}$_{\emph{T}}$, \textbf{\emph{SP}}$_{\emph{T}}$, \textbf{\emph{CSP}}$_{\emph{T}}$)
\end{algorithmic}
\end{algorithm}

\begin{algorithm}[t]
\footnotesize
\caption{RecogSpouses}
\label{FindSpouses}
\begin{algorithmic}[1]
\REQUIRE ~~
Data \emph{D}, target variable \emph{T}, \textbf{\emph{PC}}$_{\emph{T}}$, \textbf{\emph{Sep}}$_{\emph{T}}$ \\
\ENSURE ~~
\textbf{\emph{SP}}$_{\emph{T}}$, \textbf{\emph{CSP}}$_{\emph{T}}$\\
\STATE  \textbf{\emph{SP}}$_{\emph{T}}$ $\leftarrow$ $\emptyset$
\STATE \textbf{for} each \emph{X} $\in$ \emph{\textbf{U}}$\setminus$\{\emph{T}\}$\setminus$\textbf{\emph{PC}}$_{\emph{T}}$ \textbf{do}
\STATE \quad \textbf{\emph{Temp}} $\leftarrow$ $\emptyset$
\STATE \quad \textbf{for} each \emph{Y} $\in$ \textbf{\emph{PC}}$_{\emph{T}}$ \textbf{do}
\STATE \quad \quad \textbf{if} \emph{X} $\not\!\perp\!\!\!\perp$ \emph{Y} $|$ $\emptyset$ \textbf{then}
\STATE \quad\quad \quad \textbf{\emph{Temp}} $\leftarrow$ \textbf{\emph{Temp}} $\cup$  \{\emph{Y}\}
\STATE \quad\quad \textbf{end if}
\STATE \quad \textbf{end for}
\STATE \quad \textbf{if} \emph{X} $\not\!\perp\!\!\!\perp$ \emph{T} $|$ \textbf{\emph{Temp}} \textbf{then}
\STATE \quad\quad \textbf{for} each \emph{Y} $\in$ \textbf{\emph{Temp}} \textbf{do}
\STATE \quad\quad\quad  \textbf{if} \emph{X} $\not\!\perp\!\!\!\perp$ \emph{T} $|$ \{\emph{Y}\} $\cup$  \textbf{\emph{Sep}}$_{\emph{T}}$\{\emph{X}\} \textbf{then}
\STATE \quad\quad\quad\quad \textbf{\emph{CSP}}$_{\emph{T}}$\{{\emph{Y}}\} $\leftarrow$ \textbf{\emph{CSP}}$_{\emph{T}}$\{{\emph{Y}}\} $\cup$ \{\emph{X}\}
\STATE \quad\quad\quad \textbf{end if}
\STATE \quad\quad \textbf{end for}
\STATE \quad \textbf{end if}
\STATE \textbf{end for}
\STATE \textbf{\emph{SP}}$_{\emph{T}}$ $\leftarrow$ \textbf{\emph{CSP}}$_{\emph{T}}$
%\STATE  for each \emph{Y} $\in$ \textbf{\emph{PC}}$_{\emph{T}}$ do
%\STATE  \quad for each \emph{X} $\in$ \textbf{\emph{SP}}$_{\emph{T}}$\{{\emph{Y}}\} do
%\STATE \quad \quad  if \emph{X} $\!\perp\!\!\!\perp$ \emph{Y} $|$ \emph{\textbf{Z}}, \emph{\textbf{Z}} $\subseteq$ \textbf{\emph{SP}}$_{\emph{T}}$\{{\emph{Y}}\} $\setminus$ \{\emph{X}\}    then
%\STATE \quad \quad \quad  \textbf{\emph{SP}}$_{\emph{T}}$\{{\emph{Y}}\} $\leftarrow$ \textbf{\emph{SP}}$_{\emph{T}}$\{{\emph{Y}}\} $\setminus$ \{\emph{X}\}
%\STATE \quad \quad  end if
%\STATE \quad end for
%\STATE end for
\STATE  \textbf{for} each \emph{Y} $\in$ \textbf{\emph{PC}}$_{\emph{T}}$ \textbf{do}
\STATE  \quad \textbf{for} each \emph{X} $\in$ \textbf{\emph{SP}}$_{\emph{T}}$\{{\emph{Y}}\} \textbf{do}
\STATE \quad \quad  \textbf{if} \emph{X} $\!\perp\!\!\!\perp$ \emph{Y} $|$ \emph{\textbf{Z}}, \emph{\textbf{Z}} $\subseteq$ \textbf{\emph{SP}}$_{\emph{T}}$\{{\emph{Y}}\} $\cup$ \{\emph{T}\} $\cup$  \textbf{\emph{PC}}$_{\emph{T}}$ $\setminus$ \{\emph{X},\emph{Y}\}    then
\STATE \quad \quad \quad  \textbf{\emph{SP}}$_{\emph{T}}$\{{\emph{Y}}\} $\leftarrow$ \textbf{\emph{SP}}$_{\emph{T}}$\{{\emph{Y}}\} $\setminus$ \{\emph{X}\}
\STATE \quad \quad  \textbf{end if}
\STATE \quad \textbf{end for}
\STATE \textbf{end for}\\
%\STATE  for each \emph{Y} $\in$ \textbf{\emph{PC}}$_{\emph{T}}$ do
%\STATE  \quad for each \emph{X} $\in$ \textbf{\emph{SP}}$_{\emph{T}}$\{{\emph{Y}}\} do
%\STATE \quad \quad  if \emph{X} $\!\perp\!\!\!\perp$ \emph{T} $|$ \textbf{\emph{Y} $\cup$ \emph{\textbf{Z}}},  \emph{\textbf{Z}} $\subseteq$    \textbf{\emph{PC}}$_{\emph{T}}$ $\setminus$ \{\emph{X}, \emph{Y}\}    then
%\STATE \quad \quad \quad \quad \textbf{\emph{SP}}$_{\emph{T}}$\{{\emph{Y}}\} $\leftarrow$ \textbf{\emph{SP}}$_{\emph{T}}$\{{\emph{Y}}\} $\setminus$ \{\emph{X}\}
%\STATE \quad \quad  end if
%\STATE \quad end for
%\STATE end for\\
%\STATE  for each \emph{Y} $\in$ \textbf{\emph{PC}}$_{\emph{T}}$ do
%\STATE  \quad for each \emph{X} $\in$ \textbf{\emph{SP}}$_{\emph{T}}$\{{\emph{Y}}\} do
%\STATE \quad \quad  if \emph{X} $\!\perp\!\!\!\perp$ \emph{Y} $|$ \emph{\textbf{Z}}, \emph{\textbf{Z}} $\subseteq$ \textbf{\emph{SP}}$_{\emph{T}}$\{{\emph{Y}}\} $\cup$   \textbf{\emph{PC}} $\setminus$ \{\emph{X},\emph{Y}\}   then
%\STATE \quad \quad \quad  \textbf{\emph{SP}}$_{\emph{T}}$\{{\emph{Y}}\} $\leftarrow$ \textbf{\emph{SP}}$_{\emph{T}}$\{{\emph{Y}}\} $\setminus$ \{\emph{X}\}
%\STATE \quad \quad  end if
%\STATE \quad end for
%\STATE end for\\
\end{algorithmic}
\end{algorithm}

\begin{algorithm}[t]
\footnotesize
\caption{DistinguishPC}
\label{algorithmIdentifyPC}
\begin{algorithmic}[1]
\REQUIRE ~~
Data \emph{D}, target variable \emph{T }, \textbf{\emph{PC}}$_{\emph{T}}$, \textbf{\emph{SP}}$_{\emph{T}}$, \textbf{\emph{CSP}}$_{\emph{T}}$ \\
\ENSURE ~~
\textbf{\emph{P}}$_{\emph{T}}$, \textbf{\emph{C}}$_{\emph{T}}$, \textbf{\emph{UN}}$_{\emph{T}}$\\
\STATE  \textbf{\emph{C}}$_{\emph{T}}$  $\leftarrow$ $\emptyset$, \textbf{\emph{P}}$_{\emph{T}}$  $\leftarrow$ $\emptyset$
\STATE  \textbf{for} each \emph{Y} $\in$ \textbf{\emph{PC}}$_{\emph{T}}$ \textbf{do}
\STATE \quad  \textbf{if} \textbf{\emph{SP}}$_{\emph{T}}$\{{\emph{Y}}\} is nonempty \textbf{then}
\STATE \quad \quad  \textbf{\emph{C}}$_{\emph{T}}$  $\leftarrow$ \textbf{\emph{C}}$_{\emph{T}}$ $\cup$ \{\emph{Y}\}
\STATE \quad   \textbf{end if}
\STATE  \textbf{end for}
\STATE  \textbf{\emph{UN}}$_{\emph{T}}$ $\leftarrow$ \textbf{\emph{PC}}$_{\emph{T}}$  $\setminus$ \textbf{\emph{C}}$_{\emph{T}}$
\STATE  \textbf{for} each \emph{X} $\in$ \textbf{\emph{UN}}$_{\emph{T}}$ \textbf{do}
\STATE \quad  \textbf{if} \textbf{\emph{CSP}}$_{\emph{T}}$\{{\emph{X}}\} $\cap$ \textbf{\emph{C}}$_{\emph{T}}$ is nonempty \textbf{then}
\STATE \quad \quad  \textbf{\emph{C}}$_{\emph{T}}$  $\leftarrow$ \textbf{\emph{C}}$_{\emph{T}}$ $\cup$ \{\emph{X}\}
\STATE \quad   \textbf{end if}
\STATE  \textbf{end for}
\STATE  \textbf{for} each \emph{X} $\in$ \textbf{\emph{PC}}$_{\emph{T}}$  $\setminus$ \textbf{\emph{C}}$_{\emph{T}}$ \textbf{do}
\STATE  \quad \textbf{for} each \emph{Y} $\in$ \textbf{\emph{PC}}$_{\emph{T}}$  $\setminus$ \textbf{\emph{C}}$_{\emph{T}}$ \textbf{do}
\STATE \quad  \quad \textbf{if} \emph{X}  $\!\perp\!\!\!\perp$ \emph{Y} $|$ $\emptyset$ and \emph{X}  $\not\!\perp\!\!\!\perp$ \emph{Y} $|$ \emph{T} \textbf{then}
\STATE \quad  \quad \quad \quad \textbf{\emph{P}}$_{\emph{T}}$ $\leftarrow$ \textbf{\emph{P}}$_{\emph{T}}$ $\cup$ \{\emph{X}\} $\cup$ \{\emph{Y}\}
\STATE \quad \quad  \textbf{end if}
\STATE \quad   \textbf{end for}
\STATE  \textbf{end for}
\STATE  \textbf{\emph{UN}}$_{\emph{T}}$ $\leftarrow$ \textbf{\emph{PC}}$_{\emph{T}}$ $\setminus$ \textbf{\emph{P}}$_{\emph{T}}$ $\setminus$ \textbf{\emph{C}}$_{\emph{T}}$
\STATE  \textbf{for} each \emph{X} $\in$ \textbf{\emph{UN}}$_{\emph{T}}$ \textbf{do}
\STATE  \quad \textbf{for} each \emph{Y} $\in$ \textbf{\emph{P}}$_{\emph{T}}$ \textbf{do}
\STATE \quad \quad  \textbf{if} \emph{X}  $\not\!\perp\!\!\!\perp$ \emph{Y} $|$ $\emptyset$ and \emph{X}  $\!\perp\!\!\!\perp$ \emph{Y} $|$ \emph{T} \textbf{then}
\STATE \quad \quad \quad \textbf{\emph{C}}$_{\emph{T}}$ $\leftarrow$ \textbf{\emph{C}}$_{\emph{T}}$ $\cup$ \{\emph{X}\}
\STATE \quad \quad \quad \textbf{break}
\STATE \quad  \quad \textbf{end if}
\STATE \quad \textbf{end for}
\STATE \textbf{end for}
\STATE  \textbf{\emph{UN}}$_{\emph{T}}$ $\leftarrow$ \textbf{\emph{PC}}$_{\emph{T}}$ $\setminus$ \textbf{\emph{P}}$_{\emph{T}}$ $\setminus$ \textbf{\emph{C}}$_{\emph{T}}$
\end{algorithmic}
\end{algorithm}

\subsection{EMB  Subroutine}
\par  ELCS depends on the MB learning methods for local causal structure learning, but existing MB learning algorithms have the following shortcomings. First, existing MB learning methods cannot be directly combined with the N-structures to infer edge directions between the target variable and its children. Second, existing MB learning methods only focus on learning the MB of the target variable and are not able to distinguish parents from children. Third, existing MB learning methods may be computationally expensive. In order to help ELCS to leverage N-structures and the rules in Lemma \ref{lemma1} for efficiently learning local causal structures, we design an Effective MB discovery subroutine (EMB) to learn the MB of a target variable and distinguish parents from children of the target variable simultaneously.

 \par As shown in Algorithm \ref{algorithmEMB}, EMB consists of four steps as follows. Given a target variable \emph{T}, EMB first learns the PC  of \emph{T} (\emph{\textbf{PC}}$_{\emph{T}}$) using an existing PC learning algorithm. Second, EMB obtains  spouses of \emph{T} using a RecogSpouses subroutine. Then, EMB removes false PC  from \emph{\textbf{PC}}$_{\emph{T}}$. Finally, EMB orients edges between \emph{T} and its PC  as many as possible using a DistinguishPC subroutine. Specifically, to find the N-structures, EMB first determines which variable within \emph{\textbf{U}}$\setminus$\{\emph{T}\}$\setminus$\textbf{\emph{PC}}$_{\emph{T}}$ is a candidate spouse of \emph{T}, and obtains the candidate spouse set \textbf{\emph{CSP}}$_{\emph{T}}$\{{\emph{Y}}\} of each variable \emph{Y} within \textbf{\emph{PC}}$_{\emph{T}}$, and then obtains the spouses of \emph{T}. Based on the learnt \textbf{\emph{CSP}}$_{\emph{T}}$\{{\emph{Y}}\} and  spouses, we can find the N-structures. Through leveraging the found N-structures,  EMB can distinguish some children of \emph{T} with regard to the found N-structures.  In addition, two rules in Lemma \ref{lemma1} are used in the DistinguishPC subroutine to further distinguish  parents from children of \emph{T}.  In the following, we will give the details of these four steps.

 \begin{lemma}\label{lemma1}
The PC (parents and children) set of a given variable T (T $\in$  \textbf{U}) is denoted as \textbf{PC}$_{T}$. Let X $\in$ \textbf{PC}$_{T}$, Y $\in$ \textbf{PC}$_{T}$.  We can get the following two dependence relationships between  X and  Y.
\par\emph{(a)} X $\!\perp\!\!\!\perp$ Y $|$ $\emptyset$ and X  $\not\!\perp\!\!\!\perp$ Y $|$ T $\Rightarrow$  X and  Y are both parents of  T. This shows that there is a V-structure ($X\rightarrow T \leftarrow Y$) formed by variables X, Y and T, and T is a collider.
\par\emph{(b)} X is a direct cause of T, X  $\not\!\perp\!\!\!\perp$ Y $|$ $\emptyset$ and X  $\!\perp\!\!\!\perp$ Y $|$ T $\Rightarrow$  Y is a direct effects of  T. This shows that there is only one path ($X \rightarrow T \rightarrow Y$) from X to Y, and the path is blocked by T.
\end{lemma}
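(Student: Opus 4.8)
The plan is to argue entirely from the faithfulness assumption (Definition \ref{def4}), Theorem \ref{thm1}, and the d-separation rules (Definition \ref{def3}), by a short case analysis on the orientation of the two edges incident to $T$. Since $X \in \textbf{PC}_T$ and $Y \in \textbf{PC}_T$, each of $X$ and $Y$ is adjacent to $T$, so the length-two path $\pi: X - T - Y$ is realized by exactly one of four local configurations at $T$: the two chains $X \rightarrow T \rightarrow Y$ and $X \leftarrow T \leftarrow Y$, the fork $X \leftarrow T \rightarrow Y$, and the collider $X \rightarrow T \leftarrow Y$. The entire argument reduces to deciding which of these four is compatible with the stated (in)dependence facts.

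For part (a), I would first invoke Theorem \ref{thm1}: the hypothesis $X \perp\!\!\!\perp Y \mid \emptyset$ forces the absence of a direct edge between $X$ and $Y$, since any adjacent pair is dependent given every conditioning set. Next I would rule out the two chains and the fork. In each of those three configurations $T$ is a non-collider on $\pi$ and is not in the empty conditioning set, so $\pi$ is unblocked; by faithfulness a d-connecting path yields dependence, giving $X \not\!\perp\!\!\!\perp Y \mid \emptyset$ and contradicting the hypothesis. Hence only the collider $X \rightarrow T \leftarrow Y$ survives, i.e. both $X$ and $Y$ are parents of $T$ and $T$ is a collider, which is exactly the claimed V-structure. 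Finally I would check consistency with the second hypothesis: conditioning on the collider $T$ opens $\pi$, so $X \not\!\perp\!\!\!\perp Y \mid T$ holds as required.

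For part (b), the orientation $X \rightarrow T$ is given, so only two configurations remain: the chain $X \rightarrow T \rightarrow Y$ (with $Y$ a child of $T$) and the collider $X \rightarrow T \leftarrow Y$ (with $Y$ a parent of $T$). I would eliminate the collider using the hypothesis $X \perp\!\!\!\perp Y \mid T$: if $Y \rightarrow T$, then $T$ is a collider on $\pi$, and conditioning on $T$ unblocks $\pi$, so faithfulness forces $X \not\!\perp\!\!\!\perp Y \mid T$, a contradiction. Therefore $T \rightarrow Y$, i.e. $Y$ is a direct effect of $T$, and the chain $X \rightarrow T \rightarrow Y$ is blocked exactly when $T$ enters the conditioning set; the remaining hypothesis $X \not\!\perp\!\!\!\perp Y \mid \emptyset$ is consistent with this chain being open when $T$ is not conditioned on.

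The main point to handle with care is the direction in which faithfulness is used: in every case I derive a contradiction from an independence hypothesis by exhibiting a \emph{single} d-connecting path and invoking faithfulness to conclude dependence. Because one active path already implies dependence, I never have to reason about cancellation among multiple paths between $X$ and $Y$, which is usually where such arguments become delicate; faithfulness (Definition \ref{def4}) is precisely what excludes that cancellation. The only genuine case work is then the four-way orientation analysis at $T$, and the crux of each part is recognizing that conditioning on $T$ flips the status of $\pi$ between blocked and unblocked exactly according to whether $T$ is a collider.
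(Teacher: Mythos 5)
Your proof is correct. Note that the paper never supplies a formal proof of Lemma~\ref{lemma1} --- it only appends the explanatory clauses ``this shows that there is a V-structure\dots'' and ``this shows that there is only one path\dots'' to the statement itself --- and your four-way orientation analysis at $T$, using Theorem~\ref{thm1} to exclude an $X$--$Y$ edge and d-separation plus faithfulness to eliminate the chain/fork (resp.\ collider) configurations, is exactly the standard argument those clauses implicitly appeal to, so your write-up fills the gap rather than diverging from the paper.
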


\par Step 1 (line 1 in Algorithm \ref{algorithmEMB}): EMB obtains \textbf{\emph{PC}}$_{\emph{T}}$ and \textbf{\emph{Sep}}$_{\emph{T}}$ of a target variable \emph{T} by utilizing an existing PC learning algorithm, where \textbf{\emph{Sep}}$_{\emph{T}}$ is a set that contains the  sets \textbf{\emph{Sep}}$_{\emph{T}}$\{{$\cdot$}\}  of all variables. In this paper, we use HITON-PC \cite{AliferisTS03} to find  the PC  of  \emph{T} (any other state-of-the-art PC learning algorithms can be used here to instantiate the RecogPC() function at Step 1 in Algorithm   \ref{algorithmEMB}).

\par Step 2 (line 2 in Algorithm \ref{algorithmEMB}): At this step, EMB learns spouses of \emph{T}. We design a RecogSpouses subroutine for learning spouses. The details of RecogSpouses are described in Algorithm \ref{FindSpouses}. RecogSpouses first finds candidate spouses from all variables within \emph{\textbf{U}}$\setminus$\{\emph{T}\}$\setminus$\textbf{\emph{PC}}$_{\emph{T}}$ that are conditionally independent of \emph{T}. If  \emph{X} and \emph{T} are conditionally independence, then  we  construct a set \textbf{\emph{Temp}} that consists of variables which belong to \emph{\textbf{PC}}$_{\emph{T}}$ and  are  dependent of \emph{X} given an empty set (lines 3-8 in Algorithm   \ref{FindSpouses}). If \emph{X}  and \emph{T} are conditionally independent conditioning on \textbf{\emph{Temp}}, then \emph{X} cannot be a spouse of \emph{T}. Otherwise, \emph{X} is regarded as a candidate spouse and lines 9-15 in Algorithm \ref{FindSpouses} will be executed. If \emph{X} and \emph{T} are dependent conditioning on \textbf{\emph{Sep}}$_{\emph{T}}$\{\emph{X}\} $\cup$ \emph{Y} (\emph{Y} $\in$ \textbf{\emph{Temp}}), then \emph{X} will be added to \textbf{\emph{CSP}}$_{\emph{T}}$\{\emph{Y}\} (lines 9-15 in Algorithm \ref{FindSpouses}). Since some non-parent variables of  \emph{Y}  will be added to \textbf{\emph{CSP}}$_{\emph{T}}$\{\emph{Y}\}, non-parent variables of each \emph{Y}  $\in$ \textbf{\emph{PC}}$_{\emph{T}}$ will be removed  from \textbf{\emph{CSP}}$_{\emph{T}}$\{\emph{Y}\} and  the spouse set \textbf{\emph{SP}}$_{\emph{T}}$\{\emph{Y}\} will be obtained after executing lines 17-24 in Algorithm \ref{FindSpouses}.

\par Step 3 (lines 3-8 in Algorithm \ref{algorithmEMB}): At this step, EMB removes false positives from the candidate set of PC of \emph{T}. For each  variable \emph{Y} within \textbf{\emph{PC}}$_{\emph{T}}$, if there exists a subset \textbf{\emph{Z}} of the union \textbf{\emph{SP}}$_{\emph{T}}$\{\emph{Y}\} $\cup$ \textbf{\emph{PC}}$_{\emph{T}}$ such that \emph{Y} and  \emph{T} are conditionally independent conditioning on \textbf{\emph{Z}}, \emph{Y} will be removed from \textbf{\emph{PC}}$_{\emph{T}}$, and \textbf{\emph{SP}}$_{\emph{T}}$\{\emph{Y}\} will be set to an empty set.

\par Step 4 (line 9 in Algorithm \ref{algorithmEMB}): At this step,  EMB distinguishes  parents from children of \emph{T} as many as possible. We propose a DistinguishPC subroutine to accomplish this goal. DistinguishPC first identifies some children of \emph{T} with the help of spouses of  \emph{T}. Second, DistinguishPC uses  the found N-structures to infer  edge directions between \emph{T} and its children. Finally, DistinguishPC distinguishes  parents from children of \emph{T} using Lemma 1.

\par The details of DistinguishPC  are described in Algorithm \ref{algorithmIdentifyPC}. First, DistinguishPC uses the learnt spouses to  identify some children of \emph{T} (lines 2-6 in Algorithm \ref{algorithmIdentifyPC}). For example, in Fig. \ref{Example1}, \emph{C} and \emph{D} are spouses of \emph{T}, \textbf{\emph{SP}}$_{\emph{T}}$\{{\emph{A}}\} = \{\emph{C}\}, \textbf{\emph{SP}}$_{\emph{T}}$\{{\emph{K}}\} = \{\emph{D}\}, and DistinguishPC identifies \emph{A} and \emph{K} as children of \emph{T}.  There exists an N-structure which is formed by \emph{C}, \emph{A}, \emph{B} and \emph{T} in Fig. \ref{Example1}, DistinguishPC  can determine the edge direction between \emph{T} and \emph{B} with the help of \textbf{\emph{SP}}$_{\emph{T}}$ and \textbf{\emph{CSP}}$_{\emph{T}}$. At Step 2, \emph{C} will be added to \textbf{\emph{CSP}}$_{\emph{T}}$\{{\emph{A}}\} and \textbf{\emph{CSP}}$_{\emph{T}}$\{{\emph{B}}\}, and \emph{C} will be removed from \textbf{\emph{SP}}$_{\emph{T}}$\{{\emph{B}}\} because \emph{C} is not a parent of \emph{B} (lines 17-24 in Algorithm \ref{FindSpouses}). Since \emph{C} is a spouse of \emph{T} and \emph{C} is within the set \textbf{\emph{CSP}}$_{\emph{T}}$\{{\emph{B}}\}, DistinguishPC identifies \emph{B} as a child of \emph{T} (lines 8-12 in Algorithm \ref{algorithmIdentifyPC}). Theorem \ref{thm2} gives the theoretical analysis. In addition, in order to orient more edges between \emph{T} and its PC, two rules in Lemma 1 are used.  If \emph{X} $\!\perp\!\!\!\perp$ \emph{Y} $|$ $\emptyset$ and \emph{X}  $\not\!\perp\!\!\!\perp$ \emph{Y} $|$ \emph{T}, we can conclude that  \emph{X} and  \emph{Y} are both parents of \emph{T}. Therefore, DistinguishPC identifies  parents of \emph{T} as many as possible using Lemma \ref{lemma1} (a) (lines 13-19 in Algorithm \ref{algorithmIdentifyPC}).  If \emph{X} is a parent of \emph{T}, \emph{X}  $\not\!\perp\!\!\!\perp$ Y $|$ $\emptyset$ and \emph{X}  $\!\perp\!\!\!\perp$ \emph{Y} $|$ \emph{T}, then  \emph{Y} is a child of  \emph{T}.  DistinguishPC uses the identified parents of \emph{T} to  determine  edge directions between \emph{T} and its children using Lemma \ref{lemma1} (b) (lines 21-28 in Algorithm \ref{algorithmIdentifyPC}).

\par We also propose a variant of EMB to further improve the efficiency of MB learning, which is referred to as EMB-II. Compared with EMB, in learning MB, instead of directly executing line 20 in Algorithm \ref{FindSpouses}, EMB-II  first ranks the variables within \textbf{\emph{SP}}$_{\emph{T}}$\{{\emph{Y}}\} in descending order according to the dependency with variable \emph{Y}, then executes line 20 in Algorithm \ref{FindSpouses}.
 \par We also propose a variant of ECLS to further improve the efficiency of local causal structure learning, which is referred to as ELCS-II. Compared with ECLS, ELCS-II uses EMB-II for MB learning instead of using EMB.

 \par  In the following, we will give the details of Theorem 2 and its proof.

\begin{theorem}\label{thm2}
In a faithful BN, given an N-structure consisting of  four variables T, A, B, C. T is a parent of A and B, C is a parent of A, there is no an edge between C and T, A is an ancestor of B, and the parents of B are in PC set of T, then EMB identifies B as a child of T during learning the MB of  T.
\end{theorem}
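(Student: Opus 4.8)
The plan is to trace the execution of EMB on the four variables of the N-structure and check that the sets it maintains settle into the configuration that forces DistinguishPC to place \emph{B} in \textbf{\emph{C}}$_{\emph{T}}$. Concretely I would establish four membership facts in order: (i) \emph{A},\emph{B} $\in$ \textbf{\emph{PC}}$_{\emph{T}}$ while \emph{C} $\notin$ \textbf{\emph{PC}}$_{\emph{T}}$; (ii) \emph{C} is retained as a genuine spouse of \emph{T} through \emph{A}, i.e. \emph{C} $\in$ \textbf{\emph{SP}}$_{\emph{T}}$\{\emph{A}\}; (iii) \emph{C} is inserted into the candidate set \textbf{\emph{CSP}}$_{\emph{T}}$\{\emph{B}\}; and (iv) \emph{C} is afterwards deleted from \textbf{\emph{SP}}$_{\emph{T}}$\{\emph{B}\}. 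Fact (i) is immediate from Theorem \ref{thm1}: the edges \emph{T}$\rightarrow$\emph{A} and \emph{T}$\rightarrow$\emph{B} make \emph{A} and \emph{B} dependent on \emph{T} under every conditioning set, so they survive both RecogPC and the false-positive filter of Step 3, whereas the absence of an edge between \emph{C} and \emph{T} keeps \emph{C} out of \textbf{\emph{PC}}$_{\emph{T}}$ and hence makes it an eligible candidate in the outer loop of RecogSpouses.

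The core of the argument is the d-separation bookkeeping around the collider \emph{A}, and this is the step I expect to be the main obstacle. Since \emph{C} and \emph{T} are non-adjacent, a set \textbf{\emph{Sep}}$_{\emph{T}}$\{\emph{C}\} with \emph{C}$\perp\!\!\!\perp$\emph{T} $\mid$ \textbf{\emph{Sep}}$_{\emph{T}}$\{\emph{C}\} exists; because \emph{A} is a collider on \emph{C}$\rightarrow$\emph{A}$\leftarrow$\emph{T}, faithfulness forces this set to contain neither \emph{A} nor any descendant of \emph{A} (in particular not \emph{B}), as otherwise that path would be open. This single observation drives (ii) and (iii). For (ii): \emph{A} enters \textbf{\emph{Temp}} (the edge \emph{C}$\rightarrow$\emph{A} gives \emph{C}$\not\!\perp\!\!\!\perp$\emph{A} $\mid$ $\emptyset$), conditioning on \emph{A} opens the collider so that \emph{C}$\not\!\perp\!\!\!\perp$\emph{T} $\mid$ \{\emph{A}\}$\cup$\textbf{\emph{Sep}}$_{\emph{T}}$\{\emph{C}\}, placing \emph{C} in \textbf{\emph{CSP}}$_{\emph{T}}$\{\emph{A}\}; it then survives the pruning of lines 18--24 because the edge \emph{C}$\rightarrow$\emph{A} keeps \emph{C} and \emph{A} dependent under all sets (Theorem \ref{thm1} again). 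For (iii): the directed path \emph{C}$\rightarrow$\emph{A}$\rightsquigarrow$\emph{B} is active given the empty set, so \emph{B} $\in$ \textbf{\emph{Temp}}; and since \emph{B} is a descendant of the collider \emph{A} while \textbf{\emph{Sep}}$_{\emph{T}}$\{\emph{C}\} contains no descendant of \emph{A}, adjoining \emph{B} to the conditioning set reopens \emph{C}$\rightarrow$\emph{A}$\leftarrow$\emph{T}, giving \emph{C}$\not\!\perp\!\!\!\perp$\emph{T} $\mid$ \{\emph{B}\}$\cup$\textbf{\emph{Sep}}$_{\emph{T}}$\{\emph{C}\} and hence \emph{C} $\in$ \textbf{\emph{CSP}}$_{\emph{T}}$\{\emph{B}\}.

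For (iv) I would appeal to the Markov condition rather than enumerate paths: since \emph{C}$\rightarrow$\emph{A}$\rightsquigarrow$\emph{B}, the variable \emph{C} is an ancestor, hence a non-descendant, of \emph{B}, and it is not a parent of \emph{B} (were it one, the hypothesis would force \emph{C} $\in$ \textbf{\emph{PC}}$_{\emph{T}}$, contradicting (i)), so \emph{C}$\perp\!\!\!\perp$\emph{B} given the parents of \emph{B}. The hypothesis that the remaining parents of \emph{B} lie in \textbf{\emph{PC}}$_{\emph{T}}$, together with the parent \emph{T} being explicitly present in the admissible pool \textbf{\emph{SP}}$_{\emph{T}}$\{\emph{B}\}$\cup$\{\emph{T}\}$\cup$\textbf{\emph{PC}}$_{\emph{T}}$ of lines 18--24, is exactly what guarantees that this separating set is available, so \emph{C} is removed from \textbf{\emph{SP}}$_{\emph{T}}$\{\emph{B}\}. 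Feeding these facts to DistinguishPC then closes the proof: by (ii) \emph{C} is a confirmed spouse of \emph{T} (so \emph{A} is flagged as a child), while by (iii)--(iv) \emph{B} carries \emph{C} in \textbf{\emph{CSP}}$_{\emph{T}}$\{\emph{B}\} but no longer in \textbf{\emph{SP}}$_{\emph{T}}$\{\emph{B}\}; the test at lines 8--12, which asks whether the candidate-spouse set of \emph{B} meets the confirmed spouses of \emph{T}, therefore fires and \emph{B} is added to \textbf{\emph{C}}$_{\emph{T}}$. The delicate point is really (iii): one must argue not merely the generic fact that conditioning on a descendant of a collider reactivates it, but that the specific separating set chosen during PC learning is ``clean'' below \emph{A}, which is where faithfulness and the collider geometry of the N-structure have to be combined with care.
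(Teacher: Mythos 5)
Your proposal is correct and follows essentially the same route as the paper's proof: identify \emph{C} as a spouse of \emph{T} via the V-structure $C\rightarrow A\leftarrow T$, show \emph{C} lands in \textbf{\emph{CSP}}$_{\emph{T}}$\{\emph{B}\} because \emph{C} and \emph{T} are dependent given $\{B\}\cup$\textbf{\emph{Sep}}$_{\emph{T}}$\{\emph{C}\}, and let DistinguishPC fire. The paper merely asserts that dependence and then argues by contraposition (``if \emph{B} were a parent the paths would be blocked''), whereas you supply the missing d-separation justification directly --- that \textbf{\emph{Sep}}$_{\emph{T}}$\{\emph{C}\} can contain no descendant of the collider \emph{A}, so adjoining the descendant \emph{B} reopens $C\rightarrow A\leftarrow T$ --- which is a more complete version of the same argument rather than a different one.
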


\begin{proof}
\par Under the faithfulness assumption, the PC of a target variable  \emph{T} only contains parents and children of \emph{T}. Since  \emph{C} is a parent of  \emph{A}, there is no an edge between \emph{C} and \emph{T}, then EMB identifies \emph{C} as a spouse of \emph{T} since there is a V-structure ($\emph{C}\rightarrow \emph{A} \leftarrow \emph{T}$) around  \emph{A}.  At step 2 of EMB,  \emph{C} will be added to \textbf{\emph{CSP}}$_{\emph{T}}$\{{\emph{B}}\} (lines 2-16 in Algorithm \ref{FindSpouses}) since \emph{C} and \emph{T} are conditionally dependent given the conditioning set \{\emph{B}\} $\cup$  \textbf{\emph{Sep}}$_{\emph{T}}$\{\emph{C}\}. If \emph{B} is a parent of \emph{T}, then  \emph{C} and \emph{T} are conditionally independent given the conditioning set \{\emph{B}\} $\cup$  \textbf{\emph{Sep}}$_{\emph{T}}$\{\emph{C}\}, since all paths from  \emph{T} to \emph{C} are blocked by conditioning set \{\emph{B}\} $\cup$  \textbf{\emph{Sep}}$_{\emph{T}}$\{\emph{C}\}. Therefore, \emph{B} is a child of \emph{T}.
\end{proof}

\subsection{Tracing}
We first trace the execution of EMB using the example in Fig. \ref{executexample}, then trace the execution of ELCS using the example in Fig. \ref{executexample2}.

\subsubsection{Tracing EMB}
\label{TraEMB}

\begin{figure*}
  \centering
  % Requires \usepackage{graphicx}
  \includegraphics[width=18cm]{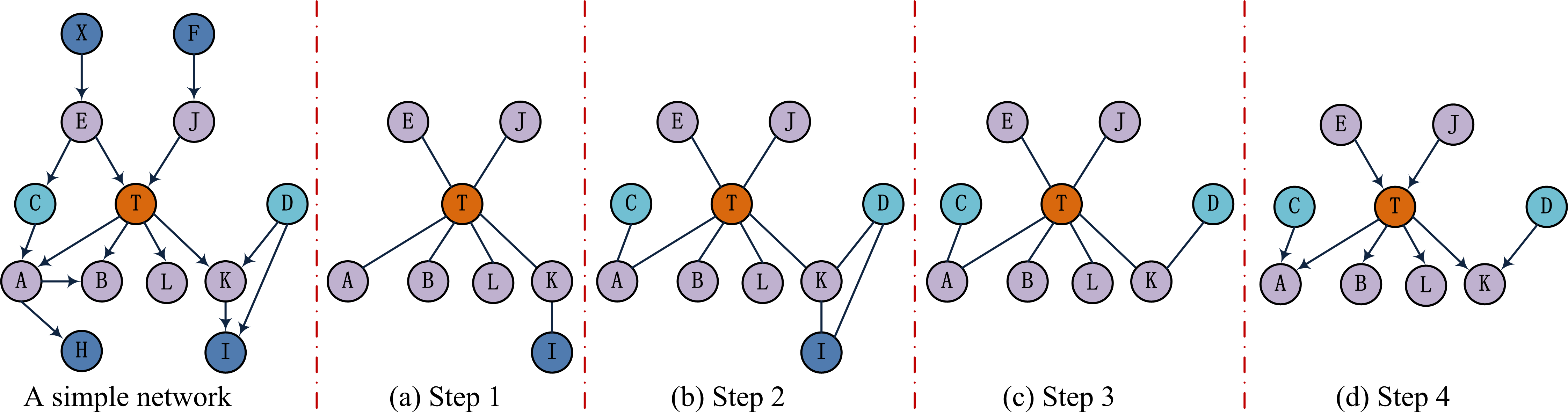}
  \caption{An example of the execution process of EMB.}
  \label{executexample}
\end{figure*}

\par We utilize the example  in Fig. \ref{executexample} to trace the execution of EMB. Suppose that we have a dataset for variable set \textbf{\emph{U} } = \{\emph{A}, \emph{B}, \emph{C}, \emph{D}, \emph{E}, \emph{F}, \emph{X}, \emph{H}, \emph{I}, \emph{J}, \emph{K}, \emph{L}, \emph{T}\}. The independence relationships between variables can be represented by the Bayesian Network structure in Fig. \ref{executexample}. In the following, we regard \emph{T} as the target variable, and give the execution process of EMB.
\par (1) step 1: referring to the simple network, i.e., the left network in Fig. \ref{executexample}. First,  HITON-PC is used to find the PC of \emph{T}. According to Theorem 1, \emph{A}, \emph{B}, \emph{L}, \emph{K}, \emph{E} and \emph{J} will be added to \textbf{\emph{PC}}$_{\emph{T}}$. Note that \emph{D} is conditionally independent of  \emph{T} given an empty set, hence  \emph{D} will not be in any of the conditioning sets for higher order conditional independence tests. As a result, \emph{I} will be added to \textbf{\emph{PC}}$_{\emph{T}}$ since \emph{T} and \emph{I} are conditionally dependent given conditioning set \{\emph{K}\}. Then, as shown in Fig. \ref{executexample} (a), \textbf{\emph{PC}}$_{\emph{T}}$ = \{\emph{A}, \emph{B}, \emph{L}, \emph{K}, \emph{E}, \emph{J}, \emph{I}\}.
\par (2) step 2: as shown in Fig.\ref{executexample} (b), EMB discovers the spouses of \emph{T}. \emph{X} and each variable within \textbf{\emph{Temp}} = \{\emph{A}, \emph{B}, \emph{L}, \emph{K}, \emph{E}, \emph{I}\} are conditionally dependent given an empty set, while \emph{X} is conditionally independent of \emph{T} given the conditioning set \textbf{\emph{Temp}}, so that \emph{X} cannot be a candidate spouse of \emph{T} since each path from \emph{X} to \emph{T} is blocked by \textbf{\emph{Temp}}. Similarly, both \emph{F} and \emph{H} are not candidate spouses.  \emph{C} and each variable within \textbf{\emph{Temp}} = \{\emph{A}, \emph{B}, \emph{L}, \emph{K}, \emph{E}, \emph{I}\} are conditionally dependent given an empty set, and \emph{C} is dependent of \emph{T} given  \textbf{\emph{Temp}}, and we need to conduct further tests. Owning to \emph{C}  $\!\perp\!\!\!\perp$ \emph{T} $|$ \emph{E}, \emph{C}  $\not\!\perp\!\!\!\perp$ \emph{T} $|$ \{\emph{E}, \emph{A}\}, \emph{C}  $\not\!\perp\!\!\!\perp$ \emph{T} $|$ \{\emph{E}, \emph{B}\}, \emph{C}  $\!\perp\!\!\!\perp$ \emph{T} $|$ \{\emph{E}, \emph{L}\}, \emph{C}  $\!\perp\!\!\!\perp$ \emph{T} $|$ \{\emph{E}, \emph{K}\} and \emph{C}  $\!\perp\!\!\!\perp$ \emph{T} $|$ \{\emph{E}, \emph{I}\}, hence  \emph{C} is added to  \textbf{\emph{CSP}}$_{\emph{T}}$\{{\emph{A}}\} and \textbf{\emph{CSP}}$_{\emph{T}}$\{{\emph{B}}\}, \textbf{\emph{CSP}}$_{\emph{T}}$\{{\emph{A}}\} = \{\emph{C}\}, \textbf{\emph{CSP}}$_{\emph{T}}$\{{\emph{B}}\} = \{\emph{C}\}.  Similarly,  \emph{D} is added to \textbf{\emph{CSP}}$_{\emph{T}}$\{{\emph{K}}\} and  \textbf{\emph{CSP}}$_{\emph{T}}$\{{\emph{I}}\}, \textbf{\emph{CSP}}$_{\emph{T}}$\{{\emph{K}}\} = \{\emph{D}\}, \textbf{\emph{CSP}}$_{\emph{T}}$\{{\emph{I}}\} = \{\emph{D}\}. In the following, \emph{C} will be removed from \textbf{\emph{SP}}$_{\emph{T}}$\{{\emph{B}}\} since \emph{C}  $\!\perp\!\!\!\perp$ \emph{B} $|$ \{\emph{A}, \emph{T}\} (lines 17-24 in Algorithm \ref{FindSpouses}).  After this step, \textbf{\emph{SP}}$_{\emph{T}}$\{{\emph{A}}\} = \{\emph{C}\}, \textbf{\emph{SP}}$_{\emph{T}}$\{{\emph{K}}\} = \{\emph{D}\}, \textbf{\emph{SP}}$_{\emph{T}}$\{{\emph{I}}\} = \{\emph{D}\}.
\par (3) step 3: as shown in Fig. \ref{executexample} (c), after checking at line 4 in Algorithm \ref{algorithmEMB}, \emph{I} will be removed from \textbf{\emph{PC}}$_{\emph{T}}$ since \emph{I}  $\!\perp\!\!\!\perp$ \emph{T} $|$ \{\emph{K}, \emph{D}\}. After this step, \textbf{\emph{PC}}$_{\emph{T}}$ = \{\emph{A}, \emph{B}, \emph{L}, \emph{K}, \emph{E}, \emph{J}\}, \textbf{\emph{SP}}$_{\emph{T}}$ = \{\emph{C}, \emph{D}\}, and \emph{\textbf{MB}}$_{\emph{T}}$ =\{\emph{A}, \emph{B}, \emph{L}, \emph{K}, \emph{E}, \emph{J}, \emph{C}, \emph{D}\}.
\par (4) step 4: as shown in Fig. \ref{executexample} (d), EMB orients the edge directions between  \emph{T} and its PC  as many as possible.  Since \emph{C} is a spouse of \emph{T}, and \emph{C} has been added to  \textbf{\emph{SP}}$_{\emph{T}}$\{{\emph{B}}\} at step 2, based on Theorem \ref{thm2}, \emph{B} is a child of \emph{T}. In addition, according to Lemma 1, \emph{E} and \emph{J} are parents of \emph{T} since \emph{E}  $\!\perp\!\!\!\perp$ \emph{J} $|$ $\emptyset$ and \emph{E}  $\not\!\perp\!\!\!\perp$ \emph{J} $|$ \emph{T}. \emph{L} is a child of \emph{T} since \emph{E} is a parent of \emph{T}, \emph{E}  $\not\!\perp\!\!\!\perp$ \emph{L} $|$ $\emptyset$ and \emph{E}  $\!\perp\!\!\!\perp$ \emph{L} $|$ \emph{T}.

\subsubsection{Tracing ELCS}
\label{TraELCS}
We use the example in Fig. \ref{executexample2} to trace the execution of ELCS.
\begin{figure*}
  \centering
  % Requires \usepackage{graphicx}
  \includegraphics[width=12cm]{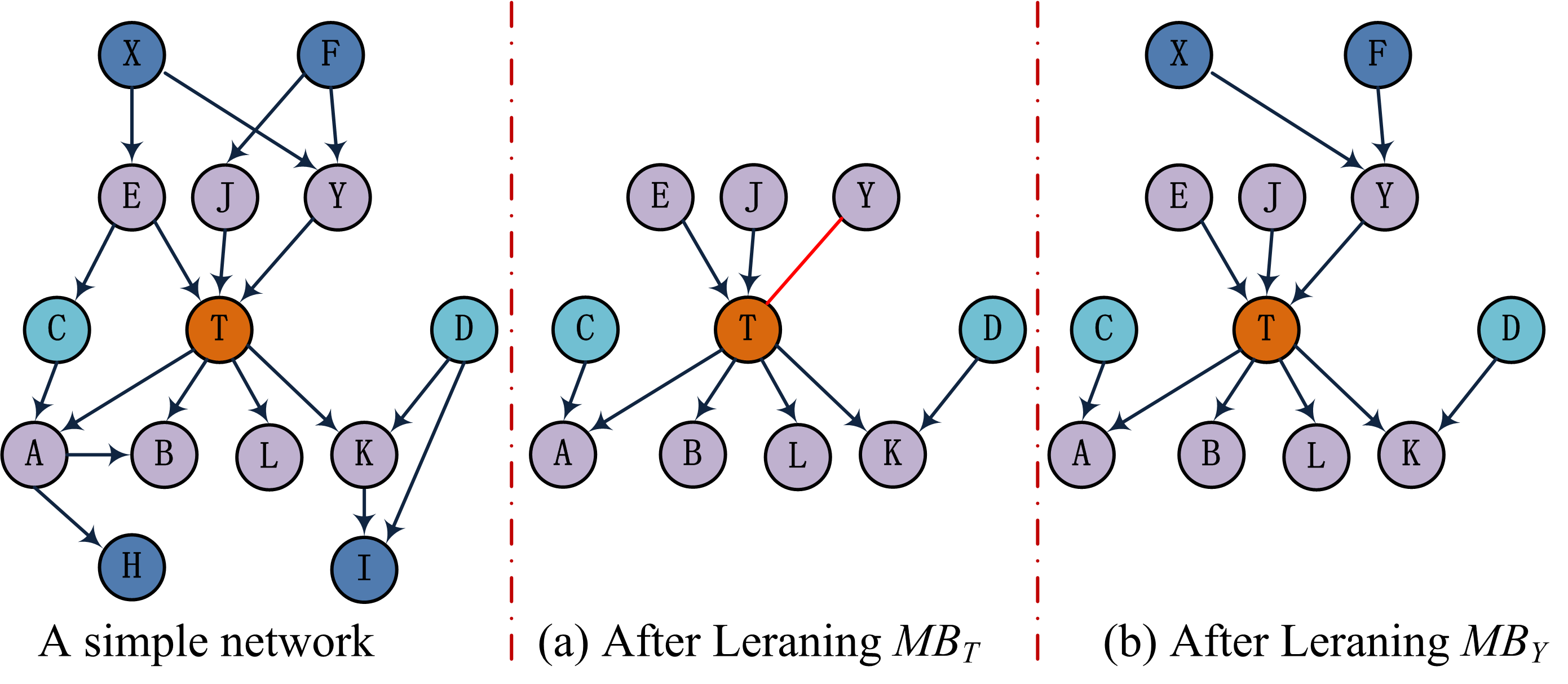}
  \caption{An example of the execution process of ELCS.}
  \label{executexample2}
\end{figure*}
\par  Suppose that we have a dataset for variable set \textbf{\emph{U} } = \{\emph{A}, \emph{B}, \emph{C}, \emph{D}, \emph{E}, \emph{F}, \emph{X}, \emph{H}, \emph{I}, \emph{J}, \emph{K}, \emph{L}, \emph{T}, \emph{Y}\}. The independence relationships between variables can be represented by the BN structure in Fig.\ref{executexample2}. In the following, we regard  \emph{T} as the target variable, and give the execution process of distinguishing  parents from children of \emph{T} using ELCS. We use the $G(X,Y)$ = -1 to represent that  \emph{X} is a parent of  \emph{Y}, $G(X,Y)$ = 1  represents that \emph{X} is adjacent to  \emph{Y}, $G(X,Y)$ = 0  represents that there is no an edge between  \emph{X} and  \emph{Y}.
\par (1) step 1: referring to the simple network, i.e., the left network in Fig. \ref{executexample2}. We first use EMB to distinguish  parents from children of \emph{T}. After learning the MB of \emph{T}, as shown in Fig. \ref{executexample2} (a), the \textbf{\emph{PC}}$_{\emph{T}}$ = \{\emph{A}, \emph{B}, \emph{L}, \emph{K}, \emph{E}, \emph{J}, \emph{Y}\} and  \textbf{\emph{SP}}$_{\emph{T}}$ = \{\emph{C}, \emph{D}\} are obtained. Then, $G(T,A)$ = -1,  $G(T,B)$ = -1,  $G(T,L)$ = -1,  $G(T,K)$ = -1,  $G(E,T)$ = -1,  $G(J,T)$ = -1 and $G(Y,T)$ = 1. The edge direction  between  \emph{Y} and \emph{T} is unsure.
\par (2) step 2: to resolve $G(Y,T)$, as shown in Fig. \ref{executexample2} (b), we need to make a further search. In the following, the MB of \emph{Y} is extracted using EMB, and $G(X,Y)$ = -1,  $G(F,Y)$ = -1. After updating the current local structure using Meek rules, we can learn that \emph{Y} is a parent of \emph{T}, that is, $G(Y,T)$ = -1.

\subsection{Theoretical Analysis}
In the following, we first theoretically analyze the correctness of EMB, then theoretically analyze the correctness of ELCS.
\begin{theorem}[\textbf{Correctness of EMB}]\label{thm3}
Under the faithfulness assumption, and all CI tests are reliable, EMB  finds all and only the MB of a  given  variable.
\end{theorem}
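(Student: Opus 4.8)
The plan is to show that the set $\textbf{\emph{MB}}_{\emph{T}} = \textbf{\emph{PC}}_{\emph{T}} \cup \textbf{\emph{SP}}_{\emph{T}}$ returned by EMB coincides with the true Markov Blanket of Definition \ref{def5}, by separately establishing soundness and completeness of the candidate PC set left after Step 3 and of the spouse set produced by RecogSpouses. A useful first observation is that Step 4 (DistinguishPC) only orients edges and never alters the membership of $\textbf{\emph{PC}}_{\emph{T}}$ or $\textbf{\emph{SP}}_{\emph{T}}$, so it is irrelevant to this theorem: the MB is fully determined by Steps 1--3 together with the spouse discovery of Step 2. I would therefore reduce the claim to the two equalities ``$\textbf{\emph{PC}}_{\emph{T}}$ after Step 3 $=$ parents $\cup$ children of \emph{T}'' and ``$\textbf{\emph{SP}}_{\emph{T}}$ $=$ spouses of \emph{T}'', and then invoke Definition \ref{def5} to assemble the full MB.

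First I would handle the PC set. Because RecogPC is instantiated with HITON-PC, under faithfulness (Definition \ref{def4}) and reliable CI tests its forward/backward search returns a superset of the true PC that never drops a genuine neighbour: by Theorem \ref{thm1}, a variable adjacent to \emph{T} remains dependent on \emph{T} given \emph{every} subset, so it cannot be removed (no false negatives). The residual task is to rule out false positives. I would argue that any \emph{Y} kept in $\textbf{\emph{PC}}_{\emph{T}}$ yet not truly adjacent to \emph{T} is, by faithfulness and d-separation (Definition \ref{def3}), separable from \emph{T} by some set, and that the independence test at line 4 of Algorithm \ref{algorithmEMB} searches a conditioning pool $\textbf{\emph{SP}}_{\emph{T}}\{\emph{Y}\} \cup \textbf{\emph{PC}}_{\emph{T}} \setminus \{\emph{Y}\}$ large enough to contain such a separator, so \emph{Y} is deleted; conversely Theorem \ref{thm1} forbids any true neighbour from satisfying this independence, so the survivors are exactly the parents and children of \emph{T}.

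Next I would treat the spouses. For completeness, let \emph{S} be a true spouse, i.e.\ a parent of some child $\emph{Y}\in\textbf{\emph{PC}}_{\emph{T}}$ with no edge to \emph{T}. Then \emph{S} is marginally (or via \textbf{\emph{Temp}}) separated from \emph{T}, but because \emph{Y} is the common collider of the V-structure $\emph{S}\rightarrow\emph{Y}\leftarrow\emph{T}$, conditioning on $\{\emph{Y}\} \cup \textbf{\emph{Sep}}_{\emph{T}}\{\emph{S}\}$ reactivates the dependence with \emph{T} (faithfulness plus Definition \ref{def3}), so \emph{S} enters $\textbf{\emph{CSP}}_{\emph{T}}\{\emph{Y}\}$ at lines 9--15 of Algorithm \ref{FindSpouses}. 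For soundness, the pruning at lines 17--24 removes from $\textbf{\emph{SP}}_{\emph{T}}\{\emph{Y}\}$ every \emph{X} that is not a parent of \emph{Y}: such an \emph{X} is separable from \emph{Y} by a subset of $\textbf{\emph{SP}}_{\emph{T}}\{\emph{Y}\} \cup \{\emph{T}\} \cup \textbf{\emph{PC}}_{\emph{T}} \setminus \{\emph{X},\emph{Y}\}$, whereas a genuine parent of \emph{Y} cannot be separated from it by Theorem \ref{thm1}. Hence $\textbf{\emph{SP}}_{\emph{T}}$ equals the true spouse set, and combining this with the PC result yields $\textbf{\emph{MB}}_{\emph{T}}$ equal to the true MB.

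The main obstacle I anticipate is the false-positive argument in Step 3, and symmetrically the spouse pruning in RecogSpouses: in each case I must verify that the \emph{restricted} conditioning pool genuinely contains a valid separating set, not merely that one exists somewhere in \emph{\textbf{U}}. This demands tracking which variables along each active path must be blocked and showing that, under faithfulness, a separator for a non-neighbour \emph{Y} of \emph{T} (respectively for a non-parent of \emph{Y}) can always be drawn from \emph{T}'s local neighbourhood, i.e.\ from $\textbf{\emph{PC}}_{\emph{T}}$ together with the relevant entries of $\textbf{\emph{SP}}_{\emph{T}}\{\emph{Y}\}$. Establishing these containment claims rigorously, while noting that reliable CI tests make every independence decision exact so no surviving error can compound, is the crux of the proof.
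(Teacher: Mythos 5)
Your proposal is correct and follows essentially the same route as the paper's own proof: decompose the claim into soundness and completeness of the PC set (no false negatives by Theorem \ref{thm1}, false positives removed at Step 3 because the local conditioning pool contains a separator) and of the spouse set (completeness via the exhaustive V-structure search in RecogSpouses, soundness via the pruning at lines 17--24 using the Markov condition). The crux you flag --- that the \emph{restricted} conditioning pools genuinely contain valid separating sets --- is exactly the point the paper's proof also asserts rather than fully derives, so your treatment is, if anything, slightly more candid about where the real work lies.
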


\begin{proof}
\par At step 1, EMB finds all the true PC variables. According to Theorem \ref{thm1}, the variables which are dependent with the target variable \emph{T} will be added to \textbf{\emph{PC}}$_{\emph{T}}$. \textbf{\emph{PC}}$_{\emph{T}}$ contains the true parents and children of \emph{T}, since the true PC variables are always dependent of \emph{T}. In addition, \textbf{\emph{PC}}$_{\emph{T}}$ also contains some descendants of \emph{T} \cite{GaoJ17}. Then, based on the results of  step 1, EMB finds all the true spouses of \emph{T} at step 2. If \emph{Y} is a collider, \emph{X} is regarded as a candidate spouse if there exists a V-structure ($\emph{X}\rightarrow\emph{Y}\leftarrow\emph{T}$) formed by \emph{X}, \emph{Y} and \emph{T}, and \emph{X} will be added to \textbf{\emph{CSP}}$_{\emph{T}}$\{{\emph{Y}}\} (lines 1-16 in Algorithm \ref{FindSpouses}). Owning to an exhaustive search, EMB will not miss any true  spouses of \emph{T}. In the following,  the variable \emph{X} $\in$ \textbf{\emph{SP}}$_{\emph{T}}$\{{\emph{Y}}\} that is a non-parent of \emph{Y} will be removed from \textbf{\emph{SP}}$_{\emph{T}}$\{{\emph{Y}}\} (lines 17-24 in Algorithm \ref{FindSpouses}). According to the Markov condition, the variable  \emph{X} $\in$ \textbf{\emph{SP}}$_{\emph{T}}$\{{\emph{Y}}\} will be removed if it is not a parent of \emph{Y}, since conditioning set \textbf{\emph{SP}}$_{\emph{T}}$\{{\emph{Y}}\} $\cup$ \emph{T} $\cup$  \textbf{\emph{PC}}$_{\emph{T}}$ $\setminus$ \{\emph{X},\emph{Y}\} contains  all true parents of \emph{Y}. Therefore, \textbf{\emph{SP}}$_{\emph{T}}$ contains all the true spouses of \emph{T}.  The learnt \textbf{\emph{PC}}$_{\emph{T}}$ may contain some false PC variables. \emph{T} and each false PC variable are conditionally independent given the spouses of the false PC variable and \textbf{\emph{PC}}$_{\emph{T}}$. At the step 3, the false  PC variables found at step 1 and false  spouses found at step 2 will be removed (lines 3-8 in Algorithm \ref{algorithmEMB}). Then, EMB contains all and only the true  PC variables \textbf{\emph{PC}}$_{\emph{T}}$ and spouses \textbf{\emph{SP}}$_{\emph{T}}$ after executing Algorithm \ref{algorithmEMB}, and \textbf{\emph{PC}}$_{\emph{T}}$ and \textbf{\emph{SP}}$_{\emph{T}}$ together form all and only true  MB variables. At step 4, based on Theorem \ref{thm2} and Lemma \ref{lemma1}, parents and children of \emph{T} are learnt by  EMB  are correct.

\end{proof}

 \par Theorem \ref{thm4}  describes the correctness of the proposed ELCS algorithm. In the following, we will introduce  Theorem \ref{thm4} and its proof in detail.
% \par\textbf{Theorem 4 (Correctness of ELCS):} Under the faithfulness assumption, Algorithm \ref{algorithmLCDMB} will distinguish the direct causes and effects of a given  variable.
\begin{theorem}[\textbf{Correctness of ELCS}]\label{thm4}
%\textbf{The Correctness of ELCS algorithm.}
Under the faithfulness assumption,  and all CI tests are reliable, ELCS  distinguishes all parents from children of a given  variable.
\end{theorem}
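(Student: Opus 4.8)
The plan is to reduce the correctness of ELCS to the already-established correctness of its building block EMB (Theorem \ref{thm3}) together with the soundness and completeness of Meek's orientation rules \cite{Meek1995Causal}, and then to supply a termination/propagation argument for the outer loop of Algorithm \ref{algorithmLCDMB}. I would organize the argument into three parts: soundness of every orientation ELCS produces, completeness (every identifiable parent/child of \emph{T} is eventually oriented), and termination.

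For \textbf{soundness}, I would argue that each orientation placed on an edge of the growing structure over \emph{\textbf{W}} is correct. There are only two sources of orientations in Algorithm \ref{algorithmLCDMB}: the labels \textbf{\emph{P}}$_{\emph{X}}$, \textbf{\emph{C}}$_{\emph{X}}$ returned by EMB at line 7, and the orientations added by Meek rules at line 13. The former are correct by Theorem \ref{thm3}, which guarantees that for every processed variable \emph{X}, EMB returns the exact MB and only correct parent/child labels (these in turn rest on Theorem \ref{thm1}, Theorem \ref{thm2}, and Lemma \ref{lemma1}). The latter are correct because Meek rules are sound whenever applied to a partially directed graph whose skeleton and already-oriented edges, in particular all discovered V-structures, are correct; since EMB supplies exactly such a structure on \emph{\textbf{W}}, every edge the Meek rules orient is consistent with the true DAG. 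Hence no edge incident to \emph{T} is ever mislabelled.

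For \textbf{completeness}, I would show that every edge incident to \emph{T} that is orientable from observational data (i.e., compelled in the CPDAG of the true DAG) is determined before one of the stopping conditions fires. The key mechanism is that whenever the first EMB call leaves a neighbour \emph{Y} $\in$ \textbf{\emph{UN}}$_{\emph{T}}$ undistinguished, \emph{Y} is pushed onto \emph{Que} (lines 9--11), so its MB is subsequently learned; each such expansion enlarges \emph{\textbf{W}} and exposes additional V-structures and parent/child labels, which Meek rules then propagate back toward \emph{T}. I would argue that this queue-driven frontier expansion explores precisely the portion of the network whose orientations Meek's rules need, and that, because Meek's rules are complete for recovering the CPDAG from a correct skeleton plus all V-structures, any edge \emph{T}--\emph{Y} compelled in the true CPDAG receives its orientation once the relevant MBs have entered \emph{\textbf{W}}. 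Conversely, an edge still undirected when the loop halts is genuinely reversible within the Markov equivalence class, so ``distinguishing all parents from children'' is achieved exactly up to CPDAG-identifiability. Finally, \textbf{termination} is immediate: \emph{\textbf{W}} grows monotonically and is bounded by \emph{\textbf{U}} (stopping condition \emph{\textbf{W}} = \emph{\textbf{U}}), and only finitely many variables can ever enter \emph{Que}, so the repeat-loop runs finitely often.

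The step I expect to be the main obstacle is completeness: one must carefully show that the localized, queue-driven exploration of MBs uncovers every V-structure and supports every Meek-rule application that a global CPDAG construction would use to orient a \emph{T}-incident edge, and in particular that ELCS never halts through an empty \emph{Que} while a compelled \emph{T}-incident edge is still undirected. Making this matching between the local frontier and the global orientation machinery precise, rather than the soundness or the termination bookkeeping, is where the real work lies.
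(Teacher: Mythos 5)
Your soundness argument is essentially the paper's entire proof: the authors likewise reduce correctness to Theorem~\ref{thm3} (EMB returns exactly the true MB and PC), to the correctness of the children identified via spouses and N-structures (Definition~\ref{defVS} and Theorem~\ref{thm2}), to Lemma~\ref{lemma1}, and to the soundness of Meek rules on the partially directed structure over \emph{\textbf{W}}. Where you genuinely diverge is in scope. The paper's proof stops at soundness --- it asserts that ``ELCS updates the local causal structures until the parents and children of the target variable have been distinguished'' and concludes that every orientation produced is correct, without arguing that every orientable edge incident to \emph{T} actually \emph{gets} oriented before the loop halts. You explicitly separate out completeness and termination, observe that completeness can only hold up to what is compelled in the CPDAG of the true DAG, and identify the real difficulty: showing that the queue-driven, MB-local frontier expansion exposes every V-structure and supports every Meek-rule application that a global CPDAG construction would need, and never halts on an empty \emph{Que} with a compelled \emph{T}-incident edge still undirected. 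That is a more honest reading of what the theorem statement claims, and your CPDAG-identifiability caveat is a refinement the paper glosses over (note the algorithm's own stopping conditions implicitly concede that some edges may remain undistinguished). The trade-off is that your completeness step remains a sketch --- but so does the paper's, so on the part both of you actually prove you match the paper, and on the part you flag as the main obstacle the paper offers no argument either.
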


%  or they can not be distinguished further

\begin{proof}
Under the causal faithfulness assumption, given a target variable, EMB finds all and only the true  MB variables and the true  PC variables of the target variable. The learnt PC set contains all and only the parents and children of the target variable. Based on Definition \ref{defVS}, the children identified by the learnt spouses are correct. Base on Theorem \ref{thm2}, the children identified by the found N-structures are correct. All the parents and children identified by Lemma \ref{lemma1} are correct.  ELCS updates the local causal structures until the parents and children of the target variable have been distinguished. Meek rules \cite{Meek1995Causal} are used to orient other undirected edges between the target variable and the variables that are adjacent to the target variable during learning the local causal structure, and all the edge directions determined by Meek rules are correct. Thus, all the parents and children of a given target variable distinguished by ELCS are correct.
%\par Under the causal faithfulness assumption, EMB discovers the true positive MB variables and \textbf{\emph{PC}}$_{\emph{T}}$ of a given target variable \emph{T}, and the learnt \textbf{\emph{PC}}$_{\emph{T}}$ only contains the  true parents and children of \emph{T}. Several children   of \emph{T} can be identified with the help of spouses. Additionally, several parents and children of \emph{T} will be determined  by Theorem 2 and Lemma 1.  ELCS updates the local causal structures until parents and children can be determined or parents and children can not be identified further. Meek rules \cite{Meek1995Causal} are used to orient the direction of other undirected edges between \emph{T} and variables adjacent to \emph{T} with the help of current local causal structure, and all the edge directions  determined by Meek rules are correct. Thus, the direct causes and effects of a given target variable  are distinguished correctly by ELCS.
\end{proof}

%we find the MB of variable \emph{Y} using EMB algorithm. Then, $G(X,Y)$ = 1. Owing to the direction between variables \emph{T} and \emph{M} can not identified, we need further search. As shown in Fig.\ref{executexample2} (c), we find the MB of variable \emph{N}. Then, $G(N,O)$ = 1.
\subsection{Computational Complexity}
The  computational complexity of ELCS algorithm depends on the steps of discovering  MB. In the following, we will give the computational complexity of EMB and ELCS.
\par \textbf{The computational complexity of EMB:} The computational complexity of EMB depends on its four steps.  Given a target variable \emph{T},  at step 1, the computational cost is dominated by HITON-PC which takes at most $O(|\emph{\textbf{U}}|2^{|\emph{\textbf{PC}}_{\emph{T}}|})$ CI tests to find the PC. Step 2 takes $O(|\textbf{\emph{SP}}_{\emph{T}} |2^{|\textbf{\emph{PC}}_{\emph{T}}|\emph{+}|\textbf{\emph{SP}}_{\emph{T}}|})$   CI tests, step 3 takes $O(|\textbf{\emph{PC}}_{\emph{T}} |2^{|\textbf{\emph{PC}}_{\emph{T}}| \emph{+} |\textbf{\emph{SP}}_{\emph{T}}|})$  CI tests,   step 4 takes $O(2{|\textbf{\emph{PC}}_{\emph{T}}|}^2)$  CI tests. Thus, the complexity of EMB is $O(|\emph{\textbf{U}}|2^{|\emph{\textbf{PC}}_{\emph{T}}|} \emph{+} |\textbf{\emph{SP}}_{\emph{T}} |2^{|\textbf{\emph{PC}}_{\emph{T}}|\emph{+}|\textbf{\emph{SP}}_{\emph{T}}|} + |\textbf{\emph{PC}}_{\emph{T}} |2^{|\textbf{\emph{PC}}_{\emph{T}}|\emph{+}|\textbf{\emph{SP}}_{\emph{T}}|} + 2{|\textbf{\emph{PC}}_{\emph{T}}|}^2)$ = $O(|\emph{\textbf{U}}|2^{|\emph{\textbf{PC}}_{\emph{T}}|})$. Let $|\textbf{\emph{PC}}|$ represent the largest size of the PC sets of all the variables, the complexity of EMB is $O(|\emph{\textbf{U}}|2^{|\emph{\textbf{PC}}|})$.

\par We summarize the computational complexity of EMB and its rivals in Table \ref{OTimeOfMB}. From the table, IAMB is the fastest among all MB learning algorithms. MMMB and HINTON-MB are the slowest two algorithms, since they need to find the PC sets of all target variable's parents and children. The computational complexity of EMB is lower than STMB. In general, most of the BNs have a large number of variables but a small-sized PC set of each variable, so that EMB  is faster than STMB. The computational complexity of BAMB is the same with EMB.

\par \textbf{The computational complexity of ELCS:} In the best case, the complexity of ELCS is $O(|\emph{\textbf{U}}|2^{|\emph{\textbf{PC}}|})$. But in the worst case (e.g. the target variable has all single ancestors), ELCS needs to learn a whole  structure, hence the complexity of ELCS is $O(|\emph{\textbf{U}}|^22^{|\emph{\textbf{PC}}|})$.

\par Table \ref{OTimeOfMLCS} reports the computational complexity of ELCS and its rivals.  Note that $m$ $\ll |\emph{\textbf{U}}|$   is the memory size of L-BFGS \cite{ZhengARX18}. $n$ and $t$ are the number of samples in  data and iterations, respectively. $h$ is the number of neurons in the hidden layer \cite{YuCGY19}. The computational complexity of global causal structure learning algorithms is higher than that of local causal structure learning algorithms, since they learn the whole structure of all variables.  PCD-by-PCD  uses MMPC for PC learning, and CMB uses HITON-MB for MB learning. In the best case,  the complexity of PCD-by-PCD is consistent with that of MMMB, it is $O(|\emph{\textbf{U}}||\textbf{\emph{PC}}|2^{|\textbf{\emph{PC}}|})$. The complexity of CMB is consistent with that of HITON-MB, it is $O(|\emph{\textbf{U}}||\textbf{\emph{PC}}|2^{|\textbf{\emph{PC}}|})$. In the worst case, since MMPC can discover the separating sets while learning PC, the complexity of PCD-by-PCD is consistent with that of  using MMPC to find PCs of all variables, it is $O(|\emph{\textbf{U}}|^{2}2^{|\textbf{\emph{PC}}|})$. The complexity of CMB is  consistent with that of using HITON-MB to find MBs of all variables, it is  $O(|\emph{\textbf{U}}|^{2}|\textbf{\emph{PC}}|2^{|\textbf{\emph{PC}}|})$.  ELCS is the fastest algorithm in both best and worst cases, since the  complexity of ELCS relies on that of EMB which only takes $O(|\emph{\textbf{U}}|2^{|\emph{\textbf{PC}}|})$ CI tests to find the MB.

\tabcolsep 0.15in	
\begin{table*}
\footnotesize
\caption{Computational Complexity of Each Markov Blanket Learning Algorithm}\label{OTimeOfMB}
\centering
\begin{tabular}{ccccccc}
\toprule
Algorithms      &IAMB &MMMB &HITON-MB &STMB &BAMB &EMB\\
\midrule   %第二行线
Complexity &$O(|\emph{\textbf{U}}|^2)$  &$O(|\emph{\textbf{U}}||\textbf{\emph{PC}}|2^{|\textbf{\emph{PC}}|})$ &$O(|\emph{\textbf{U}}||\textbf{\emph{PC}}|2^{|\textbf{\emph{PC}}|})$  &$O(|\emph{\textbf{U}}|2^{|\emph{\textbf{U}}|})$ & $O(|\emph{\textbf{U}}|2^{|\textbf{\emph{PC}}|})$ &$O(|\emph{\textbf{U}}|2^{|\emph{\textbf{PC}}|})$\\
\bottomrule
\end{tabular}
\end{table*}

\tabcolsep 0.2in	
\begin{table}
\footnotesize
\caption{Computational Complexity of   Local and Global  Causal Structure Learning Algorithms}\label{OTimeOfMLCS}
\centering
\begin{tabular}{ccc}
\toprule
Algorithm      &Worst Case &Best Case\\
\midrule   %第二行线
PCD-by-PCD &$O(|\emph{\textbf{U}}|^{2}2^{|\textbf{\emph{PC}}|})$ &$O(|\emph{\textbf{U}}||\textbf{\emph{PC}}|2^{|\textbf{\emph{PC}}|})$\\
CMB   &$O(|\emph{\textbf{U}}|^{2}|\textbf{\emph{PC}}|2^{|\textbf{\emph{PC}}|})$&$O(|\emph{\textbf{U}}||\textbf{\emph{PC}}|2^{|\textbf{\emph{PC}}|})$\\
ELCS  &$O(|\emph{\textbf{U}}|^22^{|\emph{\textbf{PC}}|})$ &$O(|\emph{\textbf{U}}|2^{|\emph{\textbf{PC}}|})$\\
MMHC  &$O(|\emph{\textbf{U}}|^22^{|\emph{\textbf{U}}|})$ &$O(|\emph{\textbf{U}}|^22^{|\emph{\textbf{PC}}|})$\\
NOTEARS  &\multicolumn{2}{c}{$O(m^2|\emph{\textbf{U}}|^2 + m^3 + mt|\emph{\textbf{U}}|^2)$} \\
DAG-GNN  &\multicolumn{2}{c}{$O(nht|\emph{\textbf{U}}|^4$)}\\
\bottomrule
\end{tabular}
\end{table}

%\multicolumn{2}{l|}{com}

\section{Experiments}
In this section, we evaluate the performance of the proposed ELCS algorithm, and this section is organized as follows. Section \ref{ExS} gives the experimental settings, Section \ref{ExR} summarizes and discusses the experimental results,  Section \ref{Why} analyses the reason why ELCS is efficient and effective.
%conduct a comprehensive experiment study on eight benchmark Bayesian Networks (BNs) to evaluate the effectiveness  of the proposed ELCS algorithm.  Each benchmark BN contains two groups of data, one group containing 10 data sets with 5000 data examples,  and the other  one including 10 data sets with 1000 data examples. The number of variables of these BNs from  20 to 801. The details of the eight benchmark BNs are described in Table \ref{benchmarkBNs}.
%In this section, we conduct a comprehensive experiment study on eight benchmark Bayesian Networks (BNs) to evaluate the effectiveness  of the proposed ELCS algorithm.  Each benchmark BN contains two groups of data, one group containing 10 data sets with 5000 data examples,  and the other  one including 10 data sets with 1000 data examples. The number of variables of these BNs from  20 to 801. The details of the eight benchmark BNs are described in Table \ref{benchmarkBNs}.

\subsection{Experimental Settings}
\label{ExS}
\subsubsection{Datasets}
%\textbf{\;\;\;\; 1) Datasets.}
 We use eight benchmark BNs to evaluate ELCS against its rivals. Each benchmark BN contains two groups of data, one group containing 10 data sets with 5000 data examples,  and the other  one including 10 data sets with 1000 data examples. The number of variables of these BNs ranges from  20 to 801. A brief description of the eight benchmark BNs is listed in  Table \ref{benchmarkBNs}.
%The details of the eight benchmark BNs are described in
 %as shown in Table \ref{benchmarkBNs}.

%\textbf{2) Comparison Methods.}
\subsubsection{Comparison Methods}
We compare our approach ELCS with three state-of-the-art global causal structure learning algorithms, including MMHC \cite{TsamardinosBA06}, NOTEARS \cite{ZhengARX18} and DAG-GNN \cite{YuCGY19}, and two local causal structure learning algorithms, including PCD-by-PCD \cite{YinZWHZG08} and CMB \cite{GaoJ15}. In addition, we also compare ELCS with ELCS-II.

%\textbf{3) Implementation Details.}
\subsubsection{Implementation Details}
PCD-by-PCD and CMB algorithms are implemented by ourselves in MATLAB (https://github.com/kuiy/CausalLearner). For MMHC, we use the implementation in the software tool of Causal Explorer \cite{AliferisTSB03}.  For NOTEARS and DAG-GNN, we use the  source codes provided by the authors. In the experiments, $G^2$-test with the significance level of 0.01 is utilized to measure the conditional independence between variables. All experimental results are conducted on Windows 10 with Intel(R) i7-8700, 3.19 GHz CPU, and 16GB memory.

 %\cite{mcdonald2009handbook}
%\footnote{https://github.com/xunzheng/notears.}
%\footnote{https://github.com/fishmoon1234/DAG-GNN.}
\subsubsection{Evaluation Metrics}
%\textbf{4) Evaluation Metrics.}
In the experiments, we evaluate the performance using the following metrics.
\begin{itemize}
\item \emph{ArrP}: The number of true directed edges in the output (i.e., the variables in the output belonging to the true parents and children of a target variable in a test DAG) divided by the number of edges in the output of an algorithm.
\item \emph{ArrR}:  The number of true directed edges in the output divided by the number of true directed edges (i.e., the number of parents and children of a target variable in a test DAG).
\item \emph{SHD}:  SHD is the number of total error edges, which contains undirected edges, reverse edges, missing edges and extra edges. The smaller value of SHD is better.
\item \emph{FDR}:  The number of false edges in the output divided by the number of edges in the output of an algorithm.
\item \emph{Efficiency}: The number of CI tests and the running time (in seconds) are utilized to measure the efficiency.
\end{itemize}

% of a local structure  discovery algorithm
In the following Tables, the results are reported in the format of $A \pm B$, where $A$ denotes the average results, and $B$ represents the standard deviation. The best results in each setting have been marked in bold. "-" means that the output of the corresponding BN cannot be generated  in two days by the algorithm. Note that NOTEARS and DAG-GNN do not perform CI tests.
 \subsection{Experimental Results of ELCS and Its Rivals}
 \label{ExR}
%In this section, to verify the effectiveness of the proposed ELCS, we compare our approach ELCS with three state-of-the-art global causal structure learning algorithms: MMHC \cite{TsamardinosBA06}, NOTEARS \cite{ZhengARX18} and DAG-GNN \cite{YuCGY19}, and two local causal structure learning algorithms: PCD-by-PCD \cite{YinZWHZG08}, CMB \cite{GaoJ15}.

\tabcolsep 0.05in
\newcommand{\tabincell}[2]{\begin{tabular}{@{}#1@{}}#2\end{tabular}}
\begin{table}
\footnotesize
\centering
% table caption is above the table
\caption{Summary of Benchmark BNs}
\label{benchmarkBNs}       % Give a unique label
% For LaTeX tables use
\begin{tabular}{cccccc}
%\begin{tabular}{cccc}
\toprule
\tabincell{c}{Network}  & \tabincell{c}{Num. \\ Vars }   & \tabincell{c}{Num. \\ Edges} &\tabincell{c}{Max In/Out\\ Degree} &\tabincell{c}{Min/Max\\ $|$PCset$|$}  &\tabincell{c}{Domain\\ Range}\\
\midrule
Alarm      &  37 & 46 & 4 / 5 & 1 / 6 & 2 - 4\\
Insurance  &  27 & 52 & 3 / 7 & 1 / 9 & 2 - 5\\
Child      &  20 & 25 & 2 / 7 & 1 / 8 & 2 - 6\\
Alarm10      &  370 & 570 & 4 / 7 & 1 / 9 & 2 - 4\\
Insurance10  &  270 & 556 & 5 / 8 & 1 / 11 & 2 - 5\\
Child10      &  200 & 257 & 2 / 7 & 1 / 8 & 2 - 6\\
Pigs         &  441 & 592 & 2 / 39 & 1 / 41 & 3 - 3\\
Gene        &  801 & 972  & 4 / 10 & 0 / 11 & 3 - 5\\
\bottomrule
\end{tabular}
\end{table}

\par We compare ELCS with  MMHC, NOTEARS, DAG-GNN, PCD-by-PCD, CMB and ELCS-II on the eight BNs as shown in Table \ref{benchmarkBNs}. The average results of ArrP, ArrR, SHD, FDR, CI tests and running time of each algorithm are reported in Tables \ref{Local5000}-\ref{Local1000}. Table \ref{Local5000} summarizes the experimental results on the eight  BNs with 5,000 data examples, and Table \ref{Local1000} reports the experimental results on the eight  BNs with 1,000 data examples.  From the experimental results, we have the following observations.

\tabcolsep 0.03in
\begin{table}
\tiny
\caption{ Comparison of ELCS with State-of-the-art Causal Structure Learning Algorithms on Eight Benchmark BNs (size=5,000)}\label{Local5000}
\centering
\begin{tabular}{c||c||cccccc}\hline
Network & Algorithm  & ArrP($\uparrow$)   & ArrR($\uparrow$) &SHD($\downarrow$)   &FDR($\downarrow$) &CI Tests($\downarrow$) &Time($\downarrow$) \\\hline\hline
\multirow{7}{*}{Alarm}
&MMHC 	&0.19$\pm$0.02	&0.08$\pm$0.02		&4.58$\pm$0.02	&0.60$\pm$0.01		&13860$\pm$4971	&7.03$\pm$2.74 \\
&NOTEARS 	&0.61$\pm$0.01	&0.74$\pm$0.04		&2.84$\pm$0.17	&0.48$\pm$0.02		&-	&541.95$\pm$27.18 \\
&DAG-GNN	&0.66$\pm$0.03	&0.54$\pm$0.04		&2.02$\pm$0.17	&0.36$\pm$0.05		&-	&1.1e3$\pm$1.1e2 \\
&PCD-by-PCD 	&0.77$\pm$0.03	&0.64$\pm$0.05		&0.94$\pm$0.15	&0.27$\pm$0.04		&2110$\pm$92	&0.81$\pm$0.04 \\
&CMB 	        &0.77$\pm$0.05	&0.72$\pm$0.06		&0.76$\pm$0.14	&0.22$\pm$0.04		&2111$\pm$207	&0.69$\pm$0.07\\
&ELCS 	        &\textbf{0.86$\pm$0.01}	&\textbf{0.81$\pm$0.01}		&\textbf{0.44$\pm$0.06}	&\textbf{0.07$\pm$0.02}		&648$\pm$55	    &0.20$\pm$0.02 \\
&ELCS-II 	    &\textbf{0.86$\pm$0.01}	&\textbf{0.81$\pm$0.01}		&\textbf{0.44$\pm$0.06}	&\textbf{0.07$\pm$0.02}		&\textbf{607$\pm$52}	    &\textbf{0.19$\pm$0.02} \\\hline
\multirow{7}{*}{Insurance}
&MMHC 	&0.21$\pm$0.02	&0.03$\pm$0.02		&5.87$\pm$0.17	&0.67$\pm$0.02		&2603$\pm$271	&1.18$\pm$0.12 \\
&NOTEARS 	&0.46$\pm$0.01	&0.24$\pm$0.01		&4.64$\pm$0.12	&0.72$\pm$0.02		&-	&420.89$\pm$22.92 \\
&DAG-GNN	&0.54$\pm$0.03	&0.21$\pm$0.01		&4.35$\pm$0.25	&0.67$\pm$0.05		&-	&518.84$\pm$37.24 \\
&PCD-by-PCD 	&0.68$\pm$0.02	&0.45$\pm$0.02		&2.07$\pm$0.09	&0.34$\pm$0.03		&3038$\pm$300	&1.48$\pm$0.16 \\
&CMB 	        &0.70$\pm$0.04	&0.54$\pm$0.04		&2.31$\pm$0.25	&0.37$\pm$0.05		&11553$\pm$4827	&5.44$\pm$2.29 \\
&ELCS 	        &\textbf{0.85$\pm$0.04}	&\textbf{0.69$\pm$0.04}		&\textbf{1.61$\pm$0.06}	&\textbf{0.18$\pm$0.05}		&1686$\pm$276	&\textbf{0.75$\pm$0.12} \\
&ELCS-II	    &\textbf{0.85$\pm$0.04}	&\textbf{0.69$\pm$0.04}		&\textbf{1.61$\pm$0.06}	&\textbf{0.18$\pm$0.05}		&\textbf{1637$\pm$275}	&\textbf{0.75$\pm$0.12} \\\hline
\multirow{7}{*}{Child}
&MMHC 	&0.22$\pm$0.03	&0.19$\pm$0.07		&3.63$\pm$0.25	&0.48$\pm$0.03		&8600$\pm$632	&5.32$\pm$0.46 \\
&NOTEARS 	&0.52$\pm$0.02	&0.39$\pm$0.03		&2.99$\pm$0.17	&0.70$\pm$0.03		&-	&140.74$\pm$36.59 \\
&DAG-GNN	&0.50$\pm$0.04	&0.29$\pm$0.06		&2.08$\pm$0.10	&0.44$\pm$0.06		&-	&384.73$\pm$30.76 \\
&PCD-by-PCD 	&0.71$\pm$0.02	&0.59$\pm$0.04		&0.86$\pm$0.09	&0.26$\pm$0.04		&2432$\pm$78	&1.24$\pm$0.04 \\
&CMB 	        &\textbf{0.82$\pm$0.05}	&\textbf{0.75$\pm$0.08}  	&\textbf{0.72$\pm$0.18}	&0.25$\pm$0.08		&9424$\pm$4106	&4.58$\pm$1.96\\
&ELCS 	        &0.71$\pm$0.12	&0.61$\pm$0.16		&1.08$\pm$0.36	&\textbf{0.09$\pm$0.08}		&2093$\pm$287	&\textbf{0.93$\pm$0.10} \\
&ELCS-II 	    &0.71$\pm$0.12	&0.61$\pm$0.16		&1.08$\pm$0.36	&\textbf{0.09$\pm$0.08}		&\textbf{2087$\pm$287}	&\textbf{0.93$\pm$0.10} \\\hline
\multirow{7}{*}{Alarm10}
&MMHC 	&0.19+0.00	&0.02+0.00		&5.63+0.05	&0.63+0.00		&9.7e7+8.9e6	&4.6e4+4.8e3 \\
&NOTEARS 	&0.73$\pm$0.01	&0.50$\pm$0.01		&2.27$\pm$0.04	&0.28$\pm$0.01		&-	&1.6e4$\pm$1.8e3 \\
&DAG-GNN	&-	&-		&-	&-		&-	&- \\
&PCD-by-PCD 	&0.73$\pm$0.01	&0.54$\pm$0.01		&1.48$\pm$0.03	&0.21$\pm$0.01		&25795$\pm$1770	&8.18$\pm$0.57\\
&CMB 	        &0.72$\pm$0.01	&0.58$\pm$0.01		&1.57$\pm$0.04	&0.34$\pm$0.01		&14011$\pm$565	&3.69$\pm$0.15 \\
&ELCS 	        &\textbf{0.83$\pm$0.01}	&\textbf{0.68$\pm$0.02}		&\textbf{1.26$\pm$0.07}	&\textbf{0.14$\pm$0.02}		&\textbf{6893$\pm$483}	&\textbf{1.77$\pm$0.12}\\
&ELCS-II 	    &\textbf{0.83$\pm$0.01}	&\textbf{0.68$\pm$0.02}		&\textbf{1.26$\pm$0.07}	&\textbf{0.14$\pm$0.02}		&6916$\pm$480	&\textbf{1.77$\pm$0.12}\\\hline
\multirow{7}{*}{Insurance10}
&MMHC 	&0.22$\pm$0.00	&0.00$\pm$0.00		&6.72$\pm$0.04	&0.70$\pm$0.00		&1.9e5$\pm$2.0e4	&81.66$\pm$10.39 \\
&NOTEARS 	&0.30$\pm$0.01	&0.20$\pm$0.00		&8.67$\pm$0.44	&0.85$\pm$0.01		&-	&1.7e4$\pm$1.5e4 \\
&DAG-GNN	&-	&-		&-	&-		&-	&- \\
&PCD-by-PCD 	&0.68$\pm$0.01	&0.46$\pm$0.01		&2.10$\pm$0.05	&0.41$\pm$0.01		&\textbf{9581$\pm$224}	&4.38$\pm$0.13\\
&CMB 	        &0.64$\pm$0.01	&0.49$\pm$0.01		&2.58$\pm$0.06	&0.48$\pm$0.02		&39932$\pm$3898	&16.04$\pm$1.54 \\
&ELCS 	        &\textbf{0.80$\pm$0.02}	&\textbf{0.67$\pm$0.02}		&\textbf{1.75$\pm$0.11}	&\textbf{0.23$\pm$0.01}		&10809$\pm$1528	&3.92$\pm$0.55\\
&ELCS-II	    &\textbf{0.80$\pm$0.02}	&\textbf{0.67$\pm$0.02}		&\textbf{1.75$\pm$0.11}	&\textbf{0.23$\pm$0.01}		&10605$\pm$1499	&\textbf{3.91$\pm$0.55}\\\hline
\multirow{7}{*}{Child10}
&MMHC 	        &0.15$\pm$0.01	&0.03$\pm$0.01		&5.29$\pm$0.09	&0.58$\pm$0.01		&7.9e5$\pm$2.7e5	&439.79$\pm$169.28 \\
&NOTEARS 	    &0.61$\pm$0.01	&0.74$\pm$0.04		&2.84$\pm$0.17	&0.48$\pm$0.02		&-	&541.95$\pm$27.18 \\
&DAG-GNN	&-	&-		&-	&-		&-	&- \\
&PCD-by-PCD 	&0.77$\pm$0.01	&0.68$\pm$0.02		&0.82$\pm$0.04	&0.22$\pm$0.01		&\textbf{11341$\pm$1470}	&4.44$\pm$0.57\\
&CMB 	        &0.75$\pm$0.02	&0.68$\pm$0.02		&1.03$\pm$0.07	&0.31$\pm$0.02		&22861$\pm$2648	&8.11$\pm$0.95 \\
&ELCS 	        &\textbf{0.83$\pm$0.05}	&\textbf{0.76$\pm$0.07}		&\textbf{0.73$\pm$0.20}	&\textbf{0.14$\pm$0.03}		&13129$\pm$2613	&\textbf{3.98$\pm$0.79}\\
&ELCS-II	        &\textbf{0.83$\pm$0.05}	&\textbf{0.76$\pm$0.07}		&\textbf{0.73$\pm$0.20}	&\textbf{0.14$\pm$0.03}		&13104$\pm$2605	&\textbf{3.98$\pm$0.79}\\\hline
\multirow{7}{*}{Pigs}
&MMHC 	&0.26$\pm$0.00	&0.00$\pm$0.00		&6.85$\pm$0.07	&1.00$\pm$0.00		&4.3e5$\pm$1.4e4	&207.96$\pm$6.88 \\
&NOTEARS 	&0.43$\pm$0.00	&0.26$\pm$0.00	&2.77$\pm$0.03	&0.77$\pm$0.00	&-	&3.1e4$\pm$1.2e3 \\
&DAG-GNN	&-	&-		&-	&-		&-	&- \\
&PCD-by-PCD 	&-	&-		&-	&-		&-	&- \\
&CMB 	        &-	&-		&-	&-		&-	&- \\
&ELCS 	        &\textbf{0.91$\pm$0.00}	&\textbf{0.99$\pm$0.00}		&\textbf{0.42$\pm$0.02}	&\textbf{0.15$\pm$0.01}		&13374$\pm$8660	&8.91$\pm$6.84 \\
&ELCS-II 	        &\textbf{0.91$\pm$0.00}	&\textbf{0.99$\pm$0.00}		&\textbf{0.42$\pm$0.02}	&\textbf{0.15$\pm$0.01}		&\textbf{11467$\pm$5659} &\textbf{8.38$\pm$5.12} \\\hline
\multirow{7}{*}{Gene}
&MMHC 	&-	&-		&-	&-		&-	&- \\
&NOTEARS 	&-	&-		&-	&-		&-	&- \\
&DAG-GNN	&-	&-		&-	&-		&-	&- \\
&PCD-by-PCD 	&-	&-		&-	&-		&-	&- \\
&CMB 	        &-	&-		&-	&-		&-	&- \\
&ELCS 	        &\textbf{0.76$\pm$0.01}	&\textbf{0.79$\pm$0.01}		&\textbf{0.79$\pm$0.03}	&\textbf{0.32$\pm$0.01}		&36950$\pm$7876	&11.03$\pm$2.35 \\
&ELCS-II 	        &\textbf{0.76$\pm$0.01}	&\textbf{0.79$\pm$0.01}		&\textbf{0.79$\pm$0.03}	&\textbf{0.32$\pm$0.01}		&\textbf{36051$\pm$7696}	&\textbf{11.02$\pm$2.08} \\\hline
\end{tabular}
\end{table}

\tabcolsep 0.03in
\begin{table}
\tiny
\caption{ Comparison of ELCS with State-of-the-art Causal Structure Learning Algorithms on Eight Benchmark BNs (size=1,000)}\label{Local1000}
\centering
\begin{tabular}{c||c||cccccc}
\hline
Network & Algorithm  & ArrP($\uparrow$)   & ArrR($\uparrow$) &SHD($\downarrow$)   &FDR($\downarrow$) &CI Tests($\downarrow$) &Time($\downarrow$)\\\hline\hline
\multirow{7}{*}{Alarm}
&MMHC 	        &0.22$\pm$0.03	&0.12$\pm$0.04		&4.32$\pm$0.20	&0.57$\pm$0.03		&8884$\pm$4471	&1.64$\pm$0.81 \\
&NOTEARS 	    &0.59$\pm$0.03	&\textbf{0.72$\pm$0.06}		&3.14$\pm$0.21	&0.51$\pm$0.04		&-	&232.28$\pm$22.13 \\
&DAG-GNN	    &0.48$\pm$0.04	&0.26$\pm$0.06		&2.01$\pm$0.10	&0.17$\pm$0.03		&-	&241.19$\pm$23.82 \\
&PCD-by-PCD 	&0.66$\pm$0.07	&0.49$\pm$0.05		&1.33$\pm$0.10	&0.22$\pm$0.04		&1737$\pm$265	&0.39$\pm$0.05 \\
&CMB 	        &0.67$\pm$0.06	&0.52$\pm$0.06		&1.32$\pm$0.13	&0.34$\pm$0.07		&3171$\pm$410	&0.50$\pm$0.07 \\
&ELCS 	        &\textbf{0.72$\pm$0.07}	&0.61$\pm$0.08		&\textbf{1.06$\pm$0.14}	&\textbf{0.11$\pm$0.04}		&901$\pm$172	&\textbf{0.13$\pm$0.03}\\
&ELCS-II 	    &\textbf{0.72$\pm$0.07}	&0.61$\pm$0.08		&\textbf{1.06$\pm$0.14}	&\textbf{0.11$\pm$0.04}		&\textbf{861$\pm$162}	&\textbf{0.13$\pm$0.03}\\\hline
\multirow{7}{*}{Insurance}
&MMHC 	        &0.22$\pm$0.02	&0.04$\pm$0.02		&5.72$\pm$0.18	&0.65$\pm$0.03		&2110$\pm$293	&0.44$\pm$0.05 \\
&NOTEARS 	    &0.43$\pm$0.02	&0.24$\pm$0.01		&4.90$\pm$0.12	&0.75$\pm$0.03		&-	&220.97$\pm$44.51 \\
&DAG-GNN	    &0.49$\pm$0.06	&0.15$\pm$0.05		&3.78$\pm$0.19	&0.39$\pm$0.13		&-	&151.75$\pm$18.26 \\
&PCD-by-PCD 	&0.68$\pm$0.04	&0.40$\pm$0.04		&\textbf{2.43$\pm$0.15}	&\textbf{0.30$\pm$0.06}		&1370$\pm$104	&0.33$\pm$0.02 \\
&CMB 	        &\textbf{0.69$\pm$0.06}	&\textbf{0.46$\pm$0.05}		&2.55$\pm$0.22	&0.38$\pm$0.08		&4457$\pm$1196	&0.76$\pm$0.20 \\
&ELCS 	        &\textbf{0.69$\pm$0.12}	&0.44$\pm$0.15		&2.49$\pm$0.40	&0.37$\pm$0.19		&1188$\pm$566	&\textbf{0.17$\pm$0.08}\\
&ELCS-II	    &\textbf{0.69$\pm$0.12}	&0.44$\pm$0.15		&2.49$\pm$0.40	&0.37$\pm$0.19		&\textbf{1106$\pm$513}	&\textbf{0.17$\pm$0.08}\\\hline
\multirow{7}{*}{Child}
&MMHC 	        &0.24$\pm$0.02	&0.18$\pm$0.04		&3.41$\pm$0.14	&0.45$\pm$0.03		&4583$\pm$898	&0.90$\pm$0.19 \\
&NOTEARS 	    &0.49$\pm$0.02	&0.37$\pm$0.05		&3.31$\pm$0.23	&0.72$\pm$0.04		&-	&66.32$\pm$25.78 \\
&DAG-GNN	    &0.34$\pm$0.05	&0.15$\pm$0.03		&2.20$\pm$0.05	&0.29$\pm$0.09		&-	&87.70$\pm$8.96 \\
&PCD-by-PCD 	&0.52$\pm$0.05	&0.34$\pm$0.06		&1.61$\pm$0.12	&0.33$\pm$0.08		&\textbf{2085$\pm$183}	&0.39$\pm$0.02 \\
&CMB 	        &0.74$\pm$0.09	&0.59$\pm$0.10		&1.27$\pm$0.29	&0.33$\pm$0.12		&4991$\pm$1145	&0.65$\pm$0.14 \\
&ELCS 	        &\textbf{0.82$\pm$0.05}	&\textbf{0.69$\pm$0.06}		&\textbf{1.01$\pm$0.18}	&\textbf{0.21$\pm$0.06}		&2882$\pm$815	&0.34$\pm$0.10\\
&ELCS-II 	    &\textbf{0.82$\pm$0.05}	&\textbf{0.69$\pm$0.06}		&\textbf{1.01$\pm$0.18}	&\textbf{0.21$\pm$0.06}		&2592$\pm$723	&\textbf{0.33$\pm$0.10}\\\hline
\multirow{7}{*}{Alarm10}
&MMHC 	        &0.19$\pm$0.00	&0.03$\pm$0.01		&5.69$\pm$0.07	&0.63$\pm$0.00		&3.9e6$\pm$3.5e5	&700.63$\pm$55.38 \\
&NOTEARS 	    &0.39$\pm$0.01	&0.47$\pm$0.01		&9.27$\pm$0.49	&0.69$\pm$0.02		&-	&1.6e4$\pm$1.2e3 \\
&DAG-GNN	    &-	&-		&-	&-		&-	&- \\
&PCD-by-PCD 	&0.66$\pm$0.01	&0.44$\pm$0.02		&1.74$\pm$0.06	&\textbf{0.20$\pm$0.02}		&26572$\pm$3414	&4.87$\pm$0.63 \\
&CMB 	        &0.68$\pm$0.01	&0.48$\pm$0.02		&1.90$\pm$0.06	&0.39$\pm$0.02		&10827$\pm$643	&1.51$\pm$0.08 \\
&ELCS 	        &\textbf{0.75$\pm$0.02}	&\textbf{0.53$\pm$0.02}		&\textbf{1.72$\pm$0.06}	&\textbf{0.20$\pm$0.03}		&8800$\pm$1218	&\textbf{1.18$\pm$0.16}\\
&ELCS-II 	    &\textbf{0.75$\pm$0.02}	&\textbf{0.53$\pm$0.02}		&\textbf{1.72$\pm$0.06}	&\textbf{0.20$\pm$0.03}		&\textbf{8745$\pm$1209}	&\textbf{1.18$\pm$0.16}\\\hline
\multirow{7}{*}{Insurance10}
&MMHC 	        &0.24$\pm$0.01	&0.05$\pm$0.01		&6.57$\pm$0.05	&0.63$\pm$0.01		&9.6e5$\pm$1.2e5	&236.11$\pm$25.93 \\
&NOTEARS 	    &0.20$\pm$0.02	&0.20$\pm$0.00		&14.11$\pm$0.92	&0.91$\pm$0.01		&-	&9.0e3$\pm$1.1e3 \\
&DAG-GNN	    &-	&-		&-	&-		&-	&- \\
&PCD-by-PCD 	&\textbf{0.63$\pm$0.01}	&0.37$\pm$0.01		&\textbf{2.66$\pm$0.05}	&0.46$\pm$0.02		&8461$\pm$1809	&1.60$\pm$0.35 \\
&CMB 	        &0.62$\pm$0.01	&\textbf{0.45$\pm$0.01}		&2.95$\pm$0.05	&\textbf{0.45$\pm$0.02}		&20895$\pm$2158	&3.23$\pm$0.33 \\
&ELCS 	        &0.50$\pm$0.01	&0.26$\pm$0.00		&3.18$\pm$0.04	&0.65$\pm$0.00		&4333$\pm$1736	&0.60$\pm$0.25\\
&ELCS-II 	        &0.50$\pm$0.01	&0.26$\pm$0.00		&3.18$\pm$0.04	&0.65$\pm$0.00		&\textbf{3966$\pm$1423}	&\textbf{0.59$\pm$0.25}\\\hline
\multirow{7}{*}{Child10}
&MMHC 	        &0.22$\pm$0.01	&0.19$\pm$0.02		&4.37$\pm$0.11	&0.48$\pm$0.01		&7.7e6$\pm$2.0e6	&1.5e3$\pm$3.9e2 \\
&NOTEARS 	    &0.49$\pm$0.01	&0.34$\pm$0.02		&2.99$\pm$0.10	&0.65$\pm$0.02		&-	&3.3e3$\pm$1.9e2 \\
&DAG-GNN	    &-	&-		&-	&-		&-	&- \\
&PCD-by-PCD 	&0.55$\pm$0.02	&0.36$\pm$0.03		&1.69$\pm$0.06	&0.38$\pm$0.03		&15698$\pm$3819	&2.82$\pm$0.71 \\
&CMB 	        &\textbf{0.71$\pm$0.04}	&\textbf{0.59$\pm$0.03}		&1.58$\pm$0.12	&\textbf{0.35$\pm$0.03}		&26986$\pm$3942	&3.71$\pm$0.54 \\
&ELCS 	        &0.67$\pm$0.03	&0.55$\pm$0.02		&\textbf{1.56$\pm$0.07}	&0.36$\pm$0.02		&5074$\pm$658	&0.67$\pm$0.09\\
&ELCS-II	    &0.67$\pm$0.03	&0.55$\pm$0.02		&\textbf{1.56$\pm$0.07}	&0.36$\pm$0.02		&\textbf{4889$\pm$631}	&\textbf{0.66$\pm$0.09}\\\hline
\multirow{7}{*}{Pigs}
&MMHC 	        &0.26$\pm$0.00	&0.00$\pm$0.00		&6.72$\pm$0.02	&1.00$\pm$0.00		&4.6e5$\pm$9.9e3	&90.35$\pm$2.90 \\
&NOTEARS 	    &0.42$\pm$0.00	&0.22$\pm$0.01		&2.85$\pm$0.03	&0.80$\pm$0.01		&-	&2.4e4$\pm$1.7e3 \\
&DAG-GNN	    &-	&-		&-	&-		&-	&- \\
&PCD-by-PCD 	&-	&-		&-	&-		&-	&- \\
&CMB 	        &-	&-		&-	&-		&-	&- \\
&ELCS 	        &\textbf{0.91$\pm$0.01}	&\textbf{0.99$\pm$0.00}		&\textbf{0.40$\pm$0.03}	&\textbf{0.15$\pm$0.01}		&11793$\pm$3279	&\textbf{0.84$\pm$0.13}\\
&ELCS-II 	        &\textbf{0.91$\pm$0.01}	&\textbf{0.99$\pm$0.00}		&\textbf{0.40$\pm$0.03}	&\textbf{0.15$\pm$0.01}		&\textbf{11685$\pm$3272}	&\textbf{0.84$\pm$0.13}\\\hline
\multirow{7}{*}{Gene}
&MMHC 	&-	&-		&-	&-		&-	&- \\
&NOTEARS 	&-	&-		&-	&-		&-	&- \\
&DAG-GNN	&-	&-		&-	&-		&-	&- \\
&PCD-by-PCD 	&-	&-		&-	&-		&-	&- \\
&CMB 	        &-	&-		&-	&-		&-	&- \\
&ELCS 	        &\textbf{0.77$\pm$0.00}	&\textbf{0.78$\pm$0.01}		&\textbf{0.78$\pm$0.02}	&\textbf{0.31$\pm$0.01}		&31753$\pm$3432	&4.37$\pm$0.47 \\
&ELCS-II 	        &\textbf{0.77$\pm$0.00}	&\textbf{0.78$\pm$0.01}		&\textbf{0.78$\pm$0.02}	&\textbf{0.31$\pm$0.01}		&\textbf{31430$\pm$3406}	&\textbf{4.36$\pm$0.47} \\\hline
\end{tabular}
\end{table}

%\tabcolsep 0.03in
%\begin{table*}
%\caption{ Average normalized time results}\label{Local1000}
%\centering
%\begin{tabular}{c|c|cccccc}\hline
%& Algorithm  & MMHC   &NOTEARS &DAG-GNN   & PCD-by-PCD &CMB &ELCS \\\hline\hline
%\multirow{2}{*}{Sample size}
%&5000 	        &3777.54(7/1)	&2971.91(7/1)		&2133.925(3/5)	&2.37(6/2)		&3.97(6/2)	&1.00(8/0) \\
%&1000 	        &312.62(7/1)	&1786.77(7/1)		&1855.31(3/5)	&3.00(6/2)		&3.85(6/2)	&1.00(8/0)\\\hline
%Average &       & -   &- &-  &- &- &- \\\hline\hline
%\end{tabular}
%\end{table*}

%&PCD-by-PCD 	&\textbf{0.95+0.01}	&0.95+0.01		&0.42+0.04	&0.27+0.01		&788551+76591	&36.35+3.21 \\
%&0.77+0.01	&0.76+0.02		&0.94+0.06	&0.31+.02		&372969+129965	&36.79+12.64 \\

\par \textbf{ELCS \emph{versus} MMHC.} Regardless of the number of samples (5000 or 1000), ELCS is significantly better than MMHC. On the ArrP and ArrR metrics, ELCS is superior to MMHC, which means that ELCS finds more true causal edges and less false casual edges. In addition, on the SHD metric, the value of SHD of ELCS is significantly lower than that of MMHC. On the FDR metric, ELCS performs better than MMHC. Furthermore, ELCS always uses less CI tests than MMHC.  To learn the local causal structure of a target variable,  MMHC needs to learn the whole DAG over all variables in a dataset, hence MMHC performs much more CI tests than ELCS. Thus, we can conclude that ELCS is more efficient and effective than  MMHC.
\par \textbf{ELCS \emph{versus} NOTEARS and DAG-GNN.} NOTEARS and DAG-GNN are  global causal learning algorithms, they need to learn the global structure over all variables, and then obtain the parents and children of a given variable. ELCS achieves better performance than NOTEARS and DAG-GNN using both 5,000 data samples and 1,000 data samples, especially using 5,000 data samples.  On the ArrP, ArrR, SHD and FDR metrics, ELCS  is significantly better than NOTEARS and DAG-GNN. The values of ELCS on ArrP and ArrR metrics  are higher than that of NOTEARS and DAG-GNN, and lower on the SHD and FDR metrics. Since NOTEARS and DAG-GNN  adopt a continuous optimization strategy to obtain the DAG from observational data, and the experimental results are susceptible to the influence of parameters. Additionally,  NOTEARS and DAG-GNN need to spend much time in learning the DAG, since they obtain the optimal solution by means of a large number of iterations. In a word,  ELCS is superior to NOTEARS and DAG-GNN.
\par \textbf{ELCS \emph{versus} PCD-by-PCD and CMB.} Both PCD-by-PCD and CMB are  local causal learning algorithms.  Using 5,000 data samples, ELCS performs better  than PCD-by-PCD and CMB. Except on Child, ELCS achieves highest ArrP and  ArrR values, and lowest  SHD and FDR values on the other BNs. In addition, ELCS uses less CI tests than PCD-by-PCD and CMB on most of BNs.  Using 1,000 data samples, ELCS is superior to PCD-by-PCD and CMB on Alarm, Child, Alarm10, Pigs and Gene.  ELCS is better than CMB  and worse than PCD-by-PCD on Insurance on the  ArrP, ArrR,  SHD and FDR metrics, while ELCS has advantages in terms of CI tests and running time.  On Insurance10,  ELCS is worse than PCD-by-PCD and CMB on the  ArrP, ArrR,  SHD and FDR metrics. The reason may be that EMB learns inaccurate MBs  on  small  data samples. ELCS is better than PCD-by-PCD  and little worse than CMB on Child10 on the  ArrP, ArrR,  SHD and FDR metrics. Generally, ELCS performs better than PCD-by-PCD and CMB.

\par \textbf{ELCS \emph{versus} ELCS-II.} ELCS-II is superior to ELCS. ELCS-II further improves the efficiency of EMB while maintaining the same performance as measured by the ArrP, ArrR,  SHD and FDR metrics, which indicates the efficiency of ELCS-II.

In summary, it can be seen from Tables \ref{Local5000}-\ref{Local1000}, ELCS is significantly better than MMHC, NOTEARS and DAG-GNN.   Additionally, ELCS outperforms  PCD-by-PCD and CMB on the ArrP, ArrR, SHD, FDR metrics. Specifically, compared with PCD-by-PCD and CMB, ELCS not only achieves higher ArrP and ArrR values,  but also achieves lower SHD and FDR  values. Furthermore,  ELCS is the fastest  algorithm  among all structure learning algorithms. ELCS is significantly better  than MMHC and NOTEARS  in terms of running time. MMHC, NOTEARS and DAG-GNN are global causal learning algorithms, they need to find the global structure of a BN. In particular, ELCS is 10 times faster  than MMHC and 1000 times faster than NOTEARS and DAG-GNN on average. Additionally, ELCS is  also superior to PCD-by-PCD and CMB in terms of  running times. ELCS is 2 times faster than PCD-by-PCD and 3 times faster than  CMB on average. Specifically, MMHC, NOTEARS, PCD-by-PCD and CMB fail to generate the output on several BNs, whereas ELCS can be successful applied in learning the local causal structure of each variable within two days. But beyond that, ELCS uses the smallest number of CI tests.  Overall, ELCS  is superior to its rivals  in both efficiency and accuracy.

\tabcolsep 0.03in
\begin{table}
\tiny
\caption{Comparison of ELCS with ``ECLS w/o N" on Eight Benchmark BNs (size=5,000)}\label{caseNLocal5000}
\centering
\begin{tabular}{c||c||cccccc}\hline
Network & Algorithm  & ArrP($\uparrow$)   & ArrR($\uparrow$) &SHD($\downarrow$)   &FDR($\downarrow$) &CI Tests($\downarrow$) &Time($\downarrow$) \\\hline\hline
\multirow{2}{*}{Alarm}
&ELCS 	        &0.86$\pm$0.01	&\textbf{0.81$\pm$0.01}		&0.44$\pm$0.06	&0.07$\pm$0.02		&\textbf{648$\pm$55}	    &\textbf{0.20$\pm$0.02} \\
&ELCS w/o N 	        &\textbf{0.87$\pm$0.01} &\textbf{0.81$\pm$0.01} &\textbf{0.40$\pm$0.06} &\textbf{0.05$\pm$0.02} &701$\pm$99 &0.22$\pm$0.03 \\\hline
\multirow{2}{*}{Insurance}
&ELCS         &\textbf{0.85$\pm$0.04}	&\textbf{0.69$\pm$0.04}		&\textbf{1.61$\pm$0.06}	&\textbf{0.18$\pm$0.05}		&\textbf{1686$\pm$276}	&\textbf{0.75$\pm$0.12} \\
&ELCS w/o N 	&\textbf{0.85$\pm$0.04}	&0.68$\pm$0.03	&1.61$\pm$0.14	&\textbf{0.18$\pm$0.05}		&1924$\pm$257	   &0.84$\pm$0.11 \\\hline
\multirow{2}{*}{Child}
&ELCS 	        &\textbf{0.71$\pm$0.12}	&\textbf{0.61$\pm$0.16}		&\textbf{1.08$\pm$0.36}	&\textbf{0.09$\pm$0.08}		&\textbf{2093$\pm$287}	&\textbf{0.93$\pm$0.10} \\
&ELCS w/o N     &0.70$\pm$0.13	&0.59$\pm$0.16	&1.13$\pm$0.36	&0.09$\pm$0.09		&2143$\pm$277	   &0.94$\pm$0.09 \\\hline
\multirow{2}{*}{Alarm10}
&ELCS 	        &0.83$\pm$0.01	&\textbf{0.68$\pm$0.02}		&1.26$\pm$0.07	&\textbf{0.14$\pm$0.02}		&\textbf{6893$\pm$483}	&\textbf{1.77$\pm$0.12}\\
&ELCS w/o N 	&\textbf{0.84$\pm$0.01}	&\textbf{0.68$\pm$0.02}	&\textbf{1.24$\pm$0.06}	&\textbf{0.14$\pm$0.02}		&7579$\pm$524	   &2.12$\pm$0.14 \\\hline
\multirow{2}{*}{Insurance10}
&ELCS 	        &0.80$\pm$0.02	&0.67$\pm$0.02		&1.75$\pm$0.11	&0.23$\pm$0.01		&\textbf{10809$\pm$1528}	&\textbf{3.92$\pm$0.55}\\
&ELCS w/o N 	&\textbf{0.86$\pm$0.02}	&\textbf{0.73$\pm$0.02}	&\textbf{1.42$\pm$0.10}	&\textbf{0.17$\pm$0.01}		&11300$\pm$1713	   &4.49$\pm$0.55 \\\hline
\multirow{2}{*}{Child10}
&ELCS 	        &\textbf{0.83$\pm$0.05}	&\textbf{0.76$\pm$0.07}		&\textbf{0.73$\pm$0.20}	&0.14$\pm$0.03		&\textbf{13129$\pm$2613}	&\textbf{3.98$\pm$0.79}\\
&ELCS w/o N	        &0.83$\pm$0.06	&\textbf{0.76$\pm$0.07}	&0.74$\pm$0.21	&\textbf{0.13$\pm$0.03}		&13491$\pm$2559	   &4.32$\pm$0.77 \\\hline
\multirow{2}{*}{Pigs}
&ELCS 	        &\textbf{0.91$\pm$0.00}	&\textbf{0.99$\pm$0.00}		&0.42$\pm$0.02	&\textbf{0.15$\pm$0.01}		&\textbf{13374$\pm$8660}	&\textbf{8.91$\pm$6.84} \\
&ELCS w/o N	        &\textbf{0.91$\pm$0.00}	&\textbf{0.99$\pm$0.00}	&\textbf{0.40$\pm$0.02}	&\textbf{0.15$\pm$0.01}		&14343$\pm$8657	   &9.70$\pm$6.75 \\\hline
\multirow{2}{*}{Gene}
&ELCS 	        &\textbf{0.76$\pm$0.01}	&\textbf{0.79$\pm$0.01}		&\textbf{0.79$\pm$0.03}	&\textbf{0.32$\pm$0.01}		&\textbf{36950$\pm$7876}	&\textbf{11.03$\pm$2.35} \\
&ELCS w/o N 	        &\textbf{0.76$\pm$0.01}	&0.78$\pm$0.01	&\textbf{0.79$\pm$0.03}	&\textbf{0.32$\pm$0.01}		&38061$\pm$8041	   &12.52$\pm$2.62 \\\hline
\end{tabular}
\end{table}

\subsection{Why ELCS is Efficient and Effective?}
\label{Why}
In this section, we analyse the reason why ELCS is efficient and effective from the following two aspects. First, we give a case study to evaluate the effectiveness of utilizing  the N-structures to infer edge directions between a given variable and its children. Second, we evaluate the effectiveness of the proposed EMB subroutine, since the proposed ELCS algorithm relies on EMB.

\subsubsection{Case Study}

To illustrate the benefit of utilizing the N-structures, we do not use  the N-structures to distinguish  the children of a given variable in learning the MB of the variable, that is, in the DistinguishPC subroutine, we remove lines 8-12 in Algorithm \ref{algorithmIdentifyPC}, and we denote this version of ELCS as ``ECLS w/o N". Table \ref{caseNLocal5000} summarizes the experimental results of ECLS and ``ECLS w/o N" on the eight BNs with 5,000 data examples. From the experimental results, we note that ELCS achieves comparable performance against ``ECLS w/o N" in terms of ArrP, ArrR, SHD and FDR on average but ELCS performs less CI tests. Specifically, on Alarm, for each variable, the number of CI tests is reduced by 53 on average. On Gene, for each variable,  the number of CI tests is reduced by 1111 on average. We observe that there are significant differences between the number of CI tests of ELCS and ``ECLS w/o N" on Insurance, Alarm10, Insurance10, Pigs and Gene, but there is a little difference between them on the other BNs. The reason may be that ELCS makes use of  the N-structures to speed up children identification, and there are few N-structures existing in Alarm, Child and Child10, leading to that the number of CI tests of ELCS and ``ECLS w/o N" on these three BNs are  close to each other. The efficiency performance of ECLS will improve a little on the BNs with few N-structures, this is a limitation of ELCS. In summary, ELCS is more efficient and provides better local structure learning quality than ``ECLS w/o N", which indicates the effectiveness of leveraging the N-structures for  local casual structure learning.

\tabcolsep 0.03in
\begin{table}
\tiny
\caption{Comparison of EMB with State-of-the-art MB Learning Algorithms on Eight Benchmark BNs (size=5,000)}\label{EMB5000}
\centering
\begin{tabular}{c||c||cccccc}
\hline
Network & Algorithm  & Distance($\downarrow$) & F1($\uparrow$) & Precision($\uparrow$)   & Recall($\uparrow$)  &CI Tests($\downarrow$) &Time($\downarrow$)\\
\hline\hline
\multirow{7}{*}{Alarm}
&IAMB	        &0.15$\pm$0.03	&0.90$\pm$0.02		&0.94$\pm$0.02	&0.89$\pm$0.01		&\textbf{142$\pm$2}	&\textbf{0.05$\pm$0.00} \\
&MMMB 	        &0.10$\pm$0.02	&0.94$\pm$0.02		&0.92$\pm$0.02	&\textbf{0.97$\pm$0.01}		&604$\pm$26	&0.24$\pm$0.01 \\
&HITON-MB 	    &\textbf{0.06$\pm$0.02}	&\textbf{0.96$\pm$0.01}		&0.97$\pm$0.02	&\textbf{0.97$\pm$0.01}		&394$\pm$12	&0.13$\pm$0.00 \\
&STMB 	        &0.30$\pm$0.02	&0.78$\pm$0.02		&0.73$\pm$0.02	&0.96$\pm$0.01		&531$\pm$15	&0.19$\pm$0.01 \\
&BAMB 	        &0.09$\pm$0.03	&0.94$\pm$0.02		&0.96$\pm$0.03	&0.95$\pm$0.01		&351$\pm$11	&0.14$\pm$0.00\\
&EMB 	        &\textbf{0.06$\pm$0.01}	&\textbf{0.96$\pm$0.01}		&\textbf{0.99$\pm$0.01}	&0.95$\pm$0.01		&318$\pm$7	&0.10$\pm$0.00 \\
&EMB-II         &\textbf{0.06$\pm$0.01}	&\textbf{0.96$\pm$0.01}		&\textbf{0.99$\pm$0.01}	&0.95$\pm$0.01		&298$\pm$5	&0.09$\pm$0.00 \\\hline
\multirow{7}{*}{Insurance}
&IAMB	        &0.36$\pm$0.02	&0.76$\pm$0.01		&\textbf{0.94$\pm$0.02}	&0.67$\pm$0.01		&\textbf{104$\pm$2}	&\textbf{0.04$\pm$0.00} \\
&MMMB 	        &0.31$\pm$0.02	&0.79$\pm$0.02		&0.88$\pm$0.03	&0.76$\pm$0.02		&1186$\pm$124	&0.60$\pm$0.07 \\
&HITON-MB 	    &0.33$\pm$0.03	&0.78$\pm$0.02	&0.88$\pm$0.03	&0.74$\pm$0.02		&679$\pm$62	&0.31$\pm$0.04 \\
&STMB 	        &0.49$\pm$0.03	&0.65$\pm$0.02		&0.64$\pm$0.04	&\textbf{0.77$\pm$0.03}		&703$\pm$47	&0.33$\pm$0.02 \\
&BAMB 	        &\textbf{0.30$\pm$0.02}	&\textbf{0.80$\pm$0.01}		&0.89$\pm$0.03	&\textbf{0.77$\pm$0.02}		&619$\pm$39	&0.33$\pm$0.02 \\
&EMB 	        &0.31$\pm$0.01	&0.79$\pm$0.01	&0.92$\pm$0.02	&0.73$\pm$0.01		&370$\pm$23	&0.16$\pm$0.01 \\
&EMB-II	        &0.31$\pm$0.01	&0.79$\pm$0.01		&0.92$\pm$0.02	&0.73$\pm$0.01		&360$\pm$20	&0.15$\pm$0.01 \\\hline
\multirow{7}{*}{Child}
&IAMB	        &0.15$\pm$0.02	&0.90$\pm$0.02		&0.95$\pm$0.03	&0.88$\pm$0.01		&\textbf{63$\pm$1}	&\textbf{0.03$\pm$0.00} \\
&MMMB 	        &0.05$\pm$0.02	&0.97$\pm$0.01		&0.96$\pm$0.02	&\textbf{0.99$\pm$0.01}		&897$\pm$25	&0.47$\pm$0.01 \\
&HITON-MB 	    &\textbf{0.04$\pm$0.03}	&\textbf{0.98$\pm$0.02}		&\textbf{0.97$\pm$0.03} &\textbf{0.99$\pm$0.01}		&499$\pm$16	&0.24$\pm$0.01 \\
&STMB 	        &0.17$\pm$0.04	&0.89$\pm$0.03		&0.84$\pm$0.04	&0.98$\pm$0.02		&374$\pm$35	&0.17$\pm$0.02 \\
&BAMB 	        &0.09$\pm$0.03	&0.95$\pm$0.02		&0.93$\pm$0.02	&0.98$\pm$0.02		&376$\pm$11	&0.19$\pm$0.01  \\
&EMB 	        &0.05$\pm$0.02	&0.97$\pm$0.02	    &\textbf{0.97$\pm$0.02}	&0.98$\pm$0.02		&205$\pm$8	&0.09$\pm$0.00 \\
&EMB-II	        &0.05$\pm$0.02	&0.97$\pm$0.02	    &\textbf{0.97$\pm$0.02}	&0.98$\pm$0.02		&204$\pm$8	&0.09$\pm$0.00 \\\hline
\multirow{7}{*}{Alarm10}
&IAMB	        &0.36$\pm$0.01	&0.75$\pm$0.01		&0.83$\pm$0.01	&0.74$\pm$0.00		&\textbf{1637$\pm$14}	&0.59$\pm$0.01 \\
&MMMB 	        &0.26$\pm$0.01	&0.82$\pm$0.01		&0.88$\pm$0.01	&0.81$\pm$0.00		&1926$\pm$45    &0.60$\pm$0.01 \\
&HITON-MB 	    &\textbf{0.25$\pm$0.01}	&\textbf{0.84$\pm$0.01}		&0.90$\pm$0.01	&0.82$\pm$0.00		&1714$\pm$11	&\textbf{0.44$\pm$0.00} \\
&STMB 	        &0.67$\pm$0.01	&0.48$\pm$0.01	&0.41$\pm$0.01	&\textbf{0.83$\pm$0.01}		&5049$\pm$39	&1.89$\pm$0.02 \\
&BAMB 	        &0.30$\pm$0.01	&0.80$\pm$0.00		&0.83$\pm$0.01	&0.82$\pm$0.00		&1802$\pm$12	&0.57$\pm$0.01\\
&EMB 	        &\textbf{0.25$\pm$0.01}	&0.83$\pm$0.01		&\textbf{0.91$\pm$0.01}	&0.81$\pm$0.00		&1924$\pm$7	    &0.50$\pm$0.02 \\
&EMB-II	        &\textbf{0.25$\pm$0.01}	&0.83$\pm$0.01		&\textbf{0.91$\pm$0.01}	&0.81$\pm$0.00		&1908$\pm$7	    &0.49$\pm$0.02 \\\hline
\multirow{7}{*}{Insurance10}
&IAMB	        &0.42$\pm$0.01	&0.71$\pm$0.01		&0.89$\pm$0.01	&0.66$\pm$0.00		&\textbf{1210$\pm$8	}    &\textbf{0.50$\pm$0.01} \\
&MMMB 	        &0.33$\pm$0.01	&0.78$\pm$0.01		&0.82$\pm$0.01	&\textbf{0.80$\pm$0.00}		&3274$\pm$45    &1.53$\pm$0.03 \\
&HITON-MB 	    &0.32$\pm$0.01	&0.78$\pm$0.01		&0.84$\pm$0.01	&\textbf{0.80$\pm$0.00}		&2348$\pm$18	&0.93$\pm$0.01 \\
&STMB 	        &0.77$\pm$0.01	&0.40$\pm$0.01		&0.30$\pm$0.01	&0.79$\pm$0.00		&6781$\pm$118	&3.36$\pm$0.07 \\
&BAMB 	        &0.34$\pm$0.01	&0.77$\pm$0.00		&0.80$\pm$0.01	&\textbf{0.80$\pm$0.00} &2541$\pm$22			&1.17$\pm$0.01 \\
&EMB 	        &\textbf{0.28$\pm$0.01}	&\textbf{0.81$\pm$0.00}      &\textbf{0.91$\pm$0.01}	&0.78$\pm$0.00	&2189$\pm$15		&0.78$\pm$0.01	 \\
&EMB-II	        &\textbf{0.28$\pm$0.01}	&\textbf{0.81$\pm$0.00}      &\textbf{0.91$\pm$0.01}	&0.78$\pm$0.00	&2122$\pm$14		&0.75$\pm$0.01	 \\\hline
\multirow{7}{*}{Child10}
&IAMB	        &0.24$\pm$0.01	&0.84$\pm$0.01		&0.87$\pm$0.01	&0.88$\pm$0.00		&\textbf{750$\pm$10}	&\textbf{0.31$\pm$0.00} \\
&MMMB 	        &0.10$\pm$0.01	&0.94$\pm$0.01		&0.91$\pm$0.01	&\textbf{0.99$\pm$0.00}		&1622$\pm$21&0.70$\pm$0.01 \\
&HITON-MB 	    &0.08$\pm$0.01	&\textbf{0.95$\pm$0.01}		&0.92$\pm$0.01	&\textbf{0.99$\pm$0.00}		&1194$\pm$11	&0.43$\pm$0.01 \\
&STMB 	        &0.56$\pm$0.02	&0.56$\pm$0.02		&0.45$\pm$0.02	&\textbf{0.99$\pm$0.00}		&2881$\pm$24	&1.26$\pm$0.01 \\
&BAMB 	        &0.24$\pm$0.01	&0.84$\pm$0.00		&0.76$\pm$0.01	&\textbf{0.99$\pm$0.00}		&1111$\pm$10	&0.46$\pm$0.01 \\
&EMB 	        &\textbf{0.07$\pm$0.01}	&\textbf{0.95$\pm$0.01}		&\textbf{0.94$\pm$0.01}	&\textbf{0.99$\pm$0.00}		&1062$\pm$8	    &0.33$\pm$0.00 \\
&EMB-II	        &\textbf{0.07$\pm$0.01}	&\textbf{0.95$\pm$0.01}		&\textbf{0.94$\pm$0.01}	&\textbf{0.99$\pm$0.00}		&1061$\pm$8	    &0.33$\pm$0.00 \\\hline
\multirow{7}{*}{Pigs}
&IAMB	        &0.42$\pm$0.00	&0.71$\pm$0.00		&0.62$\pm$0.00	&0.96$\pm$0.00		&2616$\pm$7	&\textbf{1.28$\pm$0.00} \\
&MMMB 	        &0.13$\pm$0.00	&0.92$\pm$0.00      &0.87$\pm$0.00	&1.00$\pm$0.00		&3.2e5$\pm$2.2e4  &2.2e2$\pm$1.7e1\\
&HITON-MB 	    &0.14$\pm$0.00	&0.92$\pm$0.00		&0.86$\pm$0.00	&1.00$\pm$0.00		&46956$\pm$454	&34.94$\pm$0.33 \\
&STMB 	        &0.82$\pm$0.01	&0.26$\pm$0.00	&0.18$\pm$0.01	&1.00$\pm$0.00		&45770$\pm$2746	&29.20$\pm$2.05 \\
&BAMB 	        &0.18$\pm$0.01	&0.89$\pm$0.01		&0.82$\pm$0.01  &\textbf{1.00$\pm$0.00}			&29097$\pm$201	&31.16$\pm$0.13  \\
&EMB 	        &\textbf{0.12$\pm$0.00}	&\textbf{0.93$\pm$0.00}		&\textbf{0.88$\pm$0.00}	&\textbf{1.00$\pm$0.00}		&8784$\pm$3405	&5.73$\pm$2.43 \\
&EMB-II	        &\textbf{0.12$\pm$0.00}	&\textbf{0.93$\pm$0.00}		&\textbf{0.88$\pm$0.00}	&\textbf{1.00$\pm$0.00}		&7335$\pm$209	&4.94$\pm$0.26 \\\hline
\multirow{7}{*}{Gene}
&IAMB	        &0.32$\pm$0.00	&0.79$\pm$0.00		&0.76$\pm$0.00	&0.89$\pm$0.00		&\textbf{3463$\pm$10}	&1.36$\pm$0.00 \\
&MMMB 	        &\textbf{0.25$\pm$0.00}	&\textbf{0.83$\pm$0.00}		&\textbf{0.77$\pm$0.00}	&0.94$\pm$0.00		&6035$\pm$48     &2.21$\pm$0.02\\
&HITON-MB 	    &\textbf{0.25$\pm$0.00}	&\textbf{0.83$\pm$0.00}		&\textbf{0.77$\pm$0.00}	&0.94$\pm$0.00		&4576$\pm$22	&1.44$\pm$0.00 \\
&STMB 	        &0.88$\pm$0.00	&0.18$\pm$0.00		&0.13$\pm$0.00	&\textbf{1.00$\pm$0.00}		&17282$\pm$268	&7.70$\pm$0.18\\
&BAMB 	        &0.39$\pm$0.00	&0.73$\pm$0.00		&0.64$\pm$0.00	&0.94$\pm$0.00		&4474$\pm$30	&1.72$\pm$0.02 \\
&EMB 	        &0.26$\pm$0.00	&0.82$\pm$0.00		&0.76$\pm$0.00	&0.94$\pm$0.00		&4486$\pm$9	    &1.28$\pm$0.00 \\
&EMB-II	        &0.26$\pm$0.00	&0.82$\pm$0.00	&0.76$\pm$0.00	&0.94$\pm$0.00		&4412$\pm$11	    &\textbf{1.27$\pm$0.00} \\
\hline
\end{tabular}
\end{table}

\tabcolsep 0.03in
\begin{table}
\tiny
\caption{Comparison of EMB with State-of-the-art MB Learning Algorithms on Eight Benchmark BNs (size=1,000)}\label{EMB1000}
\centering
\begin{tabular}{c||c||cccccc}
\hline
Network & Algorithm  & Distance($\downarrow$) & F1($\uparrow$) & Precision($\uparrow$)   & Recall($\uparrow$)  &CI Tests($\downarrow$) &Time($\downarrow$)\\
\hline\hline
\multirow{7}{*}{Alarm}
&IAMB	        &0.27$\pm$0.01 &0.81$\pm$0.01	&0.93$\pm$0.01	&0.76$\pm$0.01	&\textbf{120$\pm$2}	&\textbf{0.02$\pm$0.00} \\
&MMMB 	        &0.20$\pm$0.01 &0.87$\pm$0.01	&0.91$\pm$0.02	&\textbf{0.87$\pm$0.01}	&437$\pm$33	&0.09$\pm$0.00\\
&HITON-MB 	    &\textbf{0.16$\pm$0.01} &\textbf{0.90$\pm$0.01}	&0.95$\pm$0.02	&\textbf{0.87$\pm$0.01}	&315$\pm$12	&0.05$\pm$0.00\\
&STMB 	        &0.39$\pm$0.03 &0.72$\pm$0.02	&0.71$\pm$0.02	&0.85$\pm$0.02	&392$\pm$12	&0.06$\pm$0.00\\
&BAMB 	        &0.21$\pm$0.01 &0.86$\pm$0.01	&0.91$\pm$0.02	&0.86$\pm$0.01	&280$\pm$16	&0.04$\pm$0.00		 \\
&EMB 	        &0.18$\pm$0.02 &0.88$\pm$0.01	&\textbf{0.96$\pm$0.02}	&0.85$\pm$0.02	&297$\pm$14	&0.04$\pm$0.00\\
&EMB-II	    &0.18$\pm$0.02 &0.88$\pm$0.01	&\textbf{0.96$\pm$0.02}	&0.85$\pm$0.02	&285$\pm$13	&0.04$\pm$0.00\\\hline
\multirow{7}{*}{Insurance}
&IAMB	        &0.48$\pm$0.02 &0.66$\pm$0.01	&\textbf{0.92$\pm$0.03}	&0.56$\pm$0.01	&\textbf{86$\pm$2}	&\textbf{0.01$\pm$0.00}	\\
&MMMB 	        &\textbf{0.42$\pm$0.03} &\textbf{0.71$\pm$0.02}	&0.83$\pm$0.03	&0.66$\pm$0.02	&511$\pm$47	&0.12$\pm$0.01\\
&HITON-MB 	    &0.45$\pm$0.03 &0.69$\pm$0.02	&0.83$\pm$0.04	&0.65$\pm$0.01	&358$\pm$34	&0.06$\pm$0.01\\
&STMB 	        &0.59$\pm$0.03 &0.58$\pm$0.03	&0.58$\pm$0.06	&0.66$\pm$0.04	&1138$\pm$1277	&0.15$\pm$0.16\\
&BAMB 	        &0.45$\pm$0.02 &0.69$\pm$0.02	&0.76$\pm$0.04	&\textbf{0.68$\pm$0.01}	&404$\pm$51	&0.06$\pm$0.01		 \\
&EMB 	        &0.46$\pm$0.03 &0.68$\pm$0.03	&0.81$\pm$0.08	&0.64$\pm$0.02	&395$\pm$82	&0.06$\pm$0.01 \\
&EMB-II	    &0.46$\pm$0.03 &0.68$\pm$0.03	&0.81$\pm$0.08	&0.64$\pm$0.02	&371$\pm$76	&0.05$\pm$0.01 \\\hline
\multirow{7}{*}{Child}
&IAMB	        &0.27$\pm$0.03	&0.82$\pm$0.02		&0.94$\pm$0.03	&0.76$\pm$0.02		&\textbf{54$\pm$1}	&\textbf{0.01$\pm$0.00} \\
&MMMB 	        &0.22$\pm$0.03	&0.85$\pm$0.02		&0.89$\pm$0.04	&0.86$\pm$0.02		&823$\pm$85  &0.13$\pm$0.01\\
&HITON-MB 	    &0.20$\pm$0.03  &0.87$\pm$0.02	&0.90$\pm$0.03	&0.87$\pm$0.02	&469$\pm$53	&0.06$\pm$0.01\\
&STMB 	        &0.23$\pm$0.07  &0.85$\pm$0.05	&0.86$\pm$0.05	&0.87$\pm$0.04	&221$\pm$7	&0.04$\pm$0.00\\
&BAMB 	        &0.23$\pm$0.04  &0.85$\pm$0.03	&0.84$\pm$0.04	&\textbf{0.91$\pm$0.01}	&441$\pm$58	&0.05$\pm$0.01		\\
&EMB 	        &\textbf{0.17+0.03}	&\textbf{0.89+0.03}		&\textbf{0.94$\pm$0.03}	&0.87$\pm$0.02		&320$\pm$52	&0.04$\pm$0.01 \\
&EMB-II	    &\textbf{0.17+0.03}	&\textbf{0.89+0.03}		&\textbf{0.94$\pm$0.03}	&0.87$\pm$0.02		&287$\pm$39	&0.04$\pm$0.00 \\\hline
\multirow{7}{*}{Alarm10}
&IAMB	        &0.52$\pm$0.01	&0.63$\pm$0.01		&0.78$\pm$0.01	&0.60$\pm$0.01		&1355$\pm$11	&0.18$\pm$0.00 \\
&MMMB 	        &0.39$\pm$0.01	&0.73$\pm$0.01		&0.84$\pm$0.01	&\textbf{0.70$\pm$0.00}		&1579$\pm$17    &0.28$\pm$0.00\\
&HITON-MB 	    &\textbf{0.37$\pm$0.01}	&\textbf{0.75$\pm$0.01}		&0.86$\pm$0.01	&\textbf{0.70$\pm$0.01}		&1474$\pm$13	&0.20$\pm$0.00 \\
&STMB 	        &0.76$\pm$0.01	&0.42$\pm$0.01		&0.37$\pm$0.02	&\textbf{0.70$\pm$0.01}		&3668$\pm$29	&0.76$\pm$0.00 \\
&BAMB 	        &0.46$\pm$0.00	&0.68$\pm$0.00		&0.74$\pm$0.01	&\textbf{0.70$\pm$0.00}		&1551$\pm$16	&0.23$\pm$0.00 \\
&EMB 	        &0.38$\pm$0.01	&0.74$\pm$0.00		&\textbf{0.88$\pm$0.01}	&0.69$\pm$0.00		&1765$\pm$11	&0.23$\pm$0.00 \\
&EMB-II	    &0.38$\pm$0.01	&0.74$\pm$0.00		&\textbf{0.88$\pm$0.01}	&0.69$\pm$0.00		&1756$\pm$12	&0.23$\pm$0.00 \\\hline			
\multirow{7}{*}{Insurance10}
&IAMB	        &0.55$\pm$0.01 &0.61$\pm$0.01	&0.85$\pm$0.01	&0.53$\pm$0.00	&\textbf{963$\pm$5}	    &\textbf{0.13$\pm$0.00} \\
&MMMB 	        &0.49$\pm$0.01 &0.66$\pm$0.01	&0.70$\pm$0.01	&0.70$\pm$0.01	&2180$\pm$48	&0.40$\pm$0.01\\
&HITON-MB 	    &\textbf{0.45$\pm$0.01} &\textbf{0.68$\pm$0.01}	&\textbf{0.74$\pm$0.01}	&0.70$\pm$0.01	&1698$\pm$28	&0.25$\pm$0.00\\
&STMB 	        &0.75$\pm$0.01	&0.46$\pm$0.01 &0.33$\pm$0.01			&0.74$\pm$0.01		&1443$\pm$14	&0.21$\pm$0.01 \\
&BAMB 	        &0.53$\pm$0.01	&0.62$\pm$0.01		&0.60$\pm$0.01	&\textbf{0.74$\pm$0.01}		&2243$\pm$59	&0.34$\pm$0.01 \\
&EMB 	        &0.65$\pm$0.06	&0.53$\pm$0.05	&0.47$\pm$0.07	&0.71$\pm$0.01		&4333$\pm$1736	&0.57$\pm$0.23 \\
&EMB-II 	        &0.65$\pm$0.06	&0.53$\pm$0.05		&0.47$\pm$0.07	&0.71$\pm$0.01		&3966$\pm$1423	&0.55$\pm$0.23 \\\hline
\multirow{7}{*}{Child10}
&IAMB	        &0.40$\pm$0.01 &0.72$\pm$0.01	&0.84$\pm$0.01	&0.71$\pm$0.01	&\textbf{614$\pm$8}	    &\textbf{0.08$\pm$0.00} \\
&MMMB 	        &0.28$\pm$0.02 &0.81$\pm$0.01	&0.82$\pm$0.02	&0.86$\pm$0.01	&1757$\pm$43	&0.27$\pm$0.00\\
&HITON-MB 	    &0.25$\pm$0.02 &0.83$\pm$0.01   &0.84$\pm$0.02	&0.87$\pm$0.01	&1272$\pm$24	&0.17$\pm$0.00 \\
&STMB 	        &0.66$\pm$0.02 &0.48$\pm$0.02	&0.39$\pm$0.02	&0.85$\pm$0.01	&2186$\pm$41	&0.39$\pm$0.00\\
&BAMB 	        &0.47$\pm$0.01 &0.67$\pm$0.01	&0.58$\pm$0.01	&\textbf{0.90$\pm$0.01}	&1460$\pm$43	    &0.20$\pm$0.01	\\
&EMB 	        &\textbf{0.22$\pm$0.02} &\textbf{0.85$\pm$0.01}	&\textbf{0.87$\pm$0.02}	&0.88$\pm$0.01	&1225$\pm$10	&0.16$\pm$0.00 \\
&EMB-II	    &\textbf{0.22$\pm$0.02} &\textbf{0.85$\pm$0.01}	&\textbf{0.87$\pm$0.02}	&0.88$\pm$0.01	&1182$\pm$8	&0.16$\pm$0.00 \\\hline
\multirow{7}{*}{Pigs}
&IAMB	        &0.34$\pm$0.00	&0.79$\pm$0.00		&0.82$\pm$0.00	&0.84$\pm$0.00		&\textbf{1755$\pm$1}	&\textbf{0.22$\pm$0.00} \\
&MMMB 	        &0.15$\pm$0.01	&0.91$\pm$0.01		&0.85$\pm$0.01	&1.00$\pm$0.00		&197884$\pm$17879  &8.44$\pm$0.71\\
&HITON-MB 	    &0.12$\pm$0.01	&0.92$\pm$0.01		&0.88$\pm$0.01	&\textbf{1.00$\pm$0.00}		&47028$\pm$1667	&4.78$\pm$0.18 \\
&STMB 	        &0.85$\pm$0.00	&0.25$\pm$0.00		&0.15$\pm$0.00	&\textbf{1.00$\pm$0.00}		&25626$\pm$3234	&2.24$\pm$0.09 \\
&BAMB 	        &0.31$\pm$0.01	&0.80$\pm$0.01		&0.69$\pm$0.01	&\textbf{1.00$\pm$0.00}		&40466$\pm$4419	&11.22$\pm$1.55 \\
&EMB 	        &\textbf{0.11$\pm$0.01}	&\textbf{0.93$\pm$0.00}		&\textbf{0.89$\pm$0.01}	&\textbf{1.00$\pm$0.00}		&7541$\pm$99	&0.58$\pm$0.01 \\
&EMB-II	        &\textbf{0.11$\pm$0.01}	&\textbf{0.93$\pm$0.00}		&\textbf{0.89$\pm$0.01}	&\textbf{1.00$\pm$0.00}		&7466$\pm$193	&0.56$\pm$0.01 \\\hline
\multirow{7}{*}{Gene}
&IAMB	        &0.39$\pm$0.00	&0.73$\pm$0.00	&\textbf{0.79$\pm$0.00}	&0.79$\pm$0.00		&\textbf{2887$\pm$10}	&\textbf{0.36$\pm$0.00} \\
&MMMB 	        &0.28$\pm$0.00	&0.81$\pm$0.00		&0.75$\pm$0.00	&0.93$\pm$0.00		&4569$\pm$36	&0.75$\pm$0.01\\
&HITON-MB 	    &\textbf{0.25$\pm$0.01}	&\textbf{0.83$\pm$0.00}		&\textbf{0.79$\pm$0.00}	&0.93$\pm$0.00		&3918$\pm$26	&0.54$\pm$0.01 \\
&STMB 	        &0.86$\pm$0.00	&0.21$\pm$0.00		&0.14$\pm$0.00	&\textbf{0.99$\pm$0.00}		&10672$\pm$74	&2.50$\pm$0.01\\
&BAMB 	        &0.46$\pm$0.01	&0.67$\pm$0.00		&0.57$\pm$0.01	&0.94$\pm$0.00		&3817$\pm$52	&0.61$\pm$0.01 \\
&EMB 	        &\textbf{0.25$\pm$0.00}	&\textbf{0.83$\pm$0.00}		&0.78$\pm$0.00	&0.93$\pm$0.00		&4228$\pm$18	&0.57$\pm$0.00 \\
&EMB-II	        &\textbf{0.25$\pm$0.00}	&\textbf{0.83$\pm$0.00}	&0.78$\pm$0.00	&0.93$\pm$0.00		&4204$\pm$17	&0.57$\pm$0.00 \\
\hline
\end{tabular}
\end{table}
 \subsubsection{Experimental Results of EMB and Its Rivals}
%\textbf{ 2) Experimental Results of EMB and Its Rivals.}
 We evaluate the effectiveness of  the proposed EMB by comparing it with five state-of-the-art MB learning algorithms, including  BAMB \cite{BAMB}, IAMB \cite{IAMB}, MMMB \cite{TsamardinosAS03}, HITON-MB \cite{AliferisTS03} and  STMB \cite{GaoJ17}.
\par For MB learning algorithms, we use precision, recall, F1,   distance \cite{PCMB} \cite{GaoJ17}, CI tests, and running time (in seconds) as the evaluation metrics.
\begin{itemize}
\item \emph{Precision}: The number of true positives in the output (i.e., the variables in the output belonging to the true MB  of a target variable in a test DAG) divided by the number of variables in the output of an algorithm.
\item \emph{Recall}:   The number of true positives in the output divided by the number of true positives (the
number of the true MB  of a target variable in a test DAG).
\item \emph{F1 = 2 * Precision * Recall / (Precision + Recall)}: The \emph{F1} score is the harmonic average of the precision and recall,  where \emph{F1} = 1 is the best case (perfect precision and recall) while \emph{F1} = 0 is the worst case.
%In the best case (perfect precision and recall),  \emph{F1} = 1. While in the worst case,  \emph{F1} = 0.}
    \item \emph{Distance} = $\sqrt{(1-Precision)^2 +(1-Recall)^2}$ \cite{PCMB} \cite{GaoJ17},  where \emph{distance} = 0 is the best case (perfect precision and recall) while \emph{distance} = $\sqrt{2}$ is the worst case.
  %  In the best case (perfect precision and recall),  \emph{distance} = 0, while in the worst case,  \emph{distance} = $\sqrt{2}$.
\item \emph{Efficiency}: The number of CI tests and the running time (in seconds) are used to measure the efficiency.
\end{itemize}

\par Tables \ref{EMB5000}-\ref{EMB1000} report the experimental results of EMB and its rivals. From the experimental results, we have the following observations.

\par \textbf{EMB \emph{versus}  IAMB, MMMB and HITON-MB.} IAMB is  much faster  than EMB, while IAMB is significantly worse than  EMB in terms of distance, F1, precision and recall on average.  Compared with MMMB and HITON-MB, EMB is more efficient. EMB needs much less CI tests than MMMB and HITON-MB. In addition, using 5,000 data samples, EMB is 2 times faster than MMMB and 1.2 times faster than  HITON-MB on  average. Moreover, EMB is more accurate than MMMB. In particular, using 5,000 data samples, EMB achieves  the lowest distance and the highest F1 values on Alarm, Insurance10, Child10 and Pigs. Using 1,000 data samples, EMB  obtains the lowest distance and the highest F1 values on Child, Child10, Pigs and Gene. Overall, EMB is superior to  IAMB, MMMB and HITON-MB.

%EMB, STMB  and BAMB do not need the symmetry constraint to learn the PC of a given variable.

\par \textbf{EMB \emph{versus}   STMB and BAMB.}  From Tables \ref{EMB5000}-\ref{EMB1000}, we note that STMB achieves higher recall values than EMB, but on the distance, F1 and precision metrics, STMB is significantly worse than EMB. Compared with BAMB, EMB achieves lower distance and higher F1 values. Additionally, the number of CI tests of EMB is less than STMB and BAMB. More specifically, using 5,000 data samples, EMB is 3.5 times faster than STMB and 1.5 times faster than  BAMB on  average. In a word, EMB performs better than BAMB and STMB in both efficiency and accuracy.

\par \textbf{EMB \emph{versus} EMB-II.} EMB is inferior to EMB-II. Compared with EMB,  EMB-II uses less CI tests for MB learning while achieving the same performance as measured by the distance, F1, precision and recall matrics, which indicates the efficiency of EMB-II.

\par  EMB is able to effectively find the MB of a target variable, and simultaneously distinguish parents and children of the target variable.  We note  that EMB uses less CI tests for MB learning, which can reduce the impact of unreliable CI tests. In summary,  EMB is helpful to learn the local causal structure.

\par To further demonstrate the effectiveness of EMB, we propose three variants of ELCS, which are referred to as ELCS-M, ECLS-S, ELCS-B, respectively. ELCS-M uses MMMB to replace EMB in ELCS. ECLS-S and ELCS-B use STMB and BAMB to replace EMB in ELCS, respectively. Table \ref{caseLocal5000} reports the experimental results of ECLS and its three variants on the eight BNs with 5,000 data examples. From the table, we observe that ELCS  outperforms these three rivals in terms of both CI tests and running time, which implies the efficiency of ELCS. We also note that ELCS achieves better ArrP, ArrR, SHD and FDR values than that of these three rivals, which shows the effectiveness of ELCS.

\par EMB has achieved encouraging performance, but it still suffers from the following two drawbacks. First, to improve the efficiency of EMB while maintaining competitive performance, EMB chooses to remove non-spouses within \emph{\textbf{U}}$\setminus$\{\emph{T}\}$\setminus$\textbf{\emph{PC}}$_{\emph{T}}$  of the target variable \emph{T} as early as possible at lines 2-16 in Algorithm \ref{FindSpouses}. The size of the conditioning sets \emph{\textbf{Temp}} (line 9 in Algorithm \ref{FindSpouses}) and \{\emph{Y}\} $\cup$ \textbf{\emph{Sep}}$_{\emph{T}}$\{\emph{X}\} (line 11 in Algorithm \ref{FindSpouses}) may be large, when the size of data samples is finite, the results of CI tests may be unreliable, leading to poor performance of EMB. Second, the performance of EMB is limited by HITON-PC  that is used for PC learning. If HITON-PC has a lower quality of PC learning, inaccurate MBs will be learnt by EMB.

\tabcolsep 0.03in
\begin{table}
\tiny
\caption{Comparison of ELCS with ELCS-M, ECLS-S and ELCS-B on Eight Benchmark BNs (size=5,000)}\label{caseLocal5000}
\centering
\begin{tabular}{c||c||cccccc}\hline
Network & Algorithm  & ArrP($\uparrow$)   & ArrR($\uparrow$) &SHD($\downarrow$)   &FDR($\downarrow$) &CI Tests($\downarrow$) &Time($\downarrow$) \\\hline\hline
\multirow{5}{*}{Alarm}
&ELCS-M 	        &0.71$\pm$0.02 &0.59$\pm$0.03 &1.14$\pm$0.06 &0.26$\pm$0.03 &1627$\pm$186 &0.62$\pm$0.08 \\
&ELCS-S 	        &0.79$\pm$0.03 &0.72$\pm$0.05 &0.88$\pm$0.14 &0.13$\pm$0.04 &1402$\pm$78 &0.57$\pm$0.03 \\
&ELCS-B 	        &0.78$\pm$0.03 &0.66$\pm$0.05 &0.92$\pm$0.09 &0.26$\pm$0.03 &778$\pm$46 &0.30$\pm$0.02 \\
&ELCS 	        &\textbf{0.86$\pm$0.01}	&\textbf{0.81$\pm$0.01}		&\textbf{0.44$\pm$0.06}	&\textbf{0.07$\pm$0.02}		&648$\pm$55	    &0.20$\pm$0.02 \\
&ELCS-II	    &\textbf{0.86$\pm$0.01}	&\textbf{0.81$\pm$0.01}		&\textbf{0.44$\pm$0.06}	&\textbf{0.07$\pm$0.02}		&\textbf{607$\pm$52}	    &\textbf{0.19$\pm$0.02} \\\hline
\multirow{5}{*}{Insurance}
&ELCS-M 	        &0.76$\pm$0.03 &0.59$\pm$0.03 &1.87$\pm$0.15 &0.31$\pm$0.04 &6976$\pm$1183 &3.28$\pm$0.62 \\
&ELCS-S 	        &0.67$\pm$0.05 &0.50$\pm$0.05 &2.50$\pm$0.25 &0.37$\pm$0.04 &2653$\pm$476 &1.39$\pm$0.25 \\
&ELCS-B 	        &0.67$\pm$0.02 &0.44$\pm$0.02 &2.26$\pm$0.10 &0.41$\pm$0.01 &3182$\pm$447 &1.78$\pm$0.27 \\
&ELCS 	        &\textbf{0.85$\pm$0.04}	&\textbf{0.69$\pm$0.04}		&\textbf{1.61$\pm$0.06}	&\textbf{0.18$\pm$0.05}		&1686$\pm$276	&\textbf{0.75$\pm$0.12} \\
&ELCS-II 	    &\textbf{0.85$\pm$0.04}	&\textbf{0.69$\pm$0.04}		&\textbf{1.61$\pm$0.06}	&\textbf{0.18$\pm$0.05}		&\textbf{1637$\pm$275}	&\textbf{0.75$\pm$0.12} \\\hline
\multirow{5}{*}{Child}
&ELCS-M 	        &0.68$\pm$0.11 &0.56$\pm$0.12 &1.18$\pm$0.31 &0.06$\pm$0.04 &8897$\pm$1247 &4.43$\pm$0.62 \\
&ELCS-S 	        &\textbf{0.81$\pm$0.07} &\textbf{0.72$\pm$0.09} &\textbf{0.75$\pm$0.18} &0.18$\pm$0.07 &2451$\pm$516 &1.28$\pm$0.28 \\
&ELCS-B	        &0.75$\pm$0.03 &0.65$\pm$0.05 &0.93$\pm$0.14 &0.26$\pm$0.05 &2252$\pm$195 &1.16$\pm$0.10 \\
&ELCS 	        &0.71$\pm$0.12	&0.61$\pm$0.16		&1.08$\pm$0.36	&0.09$\pm$0.08		&2093$\pm$287	&\textbf{0.93$\pm$0.10} \\
&ELCS-II 	    &0.71$\pm$0.12	&0.61$\pm$0.16		&1.08$\pm$0.36	&\textbf{0.09$\pm$0.08}		&\textbf{2087$\pm$287}	&\textbf{0.93$\pm$0.10} \\\hline
\multirow{5}{*}{Alarm10}
&ELCS-M 	        &0.77$\pm$0.01 &0.59$\pm$0.02 &1.60$\pm$0.07 &0.19$\pm$0.02 &10570$\pm$1207 &3.17$\pm$0.39 \\
&ELCS-S 	        &0.69$\pm$0.01 &0.46$\pm$0.01 &1.65$\pm$0.03 &0.32$\pm$0.01 &8429$\pm$237 &3.66$\pm$0.15 \\
&ELCS-B 	        &0.68$\pm$0.01 &0.44$\pm$0.00 &1.85$\pm$0.02 &0.46$\pm$0.01 &7223$\pm$170 &2.09$\pm$0.03 \\
&ELCS	        &\textbf{0.83$\pm$0.01}	&\textbf{0.68$\pm$0.02}		&\textbf{1.26$\pm$0.07}	&\textbf{0.14$\pm$0.02}		&\textbf{6893$\pm$483}	&\textbf{1.77$\pm$0.12}\\
&ELCS-II	    &\textbf{0.83$\pm$0.01}	&\textbf{0.68$\pm$0.02}		&\textbf{1.26$\pm$0.07}	&\textbf{0.14$\pm$0.02}		&6916$\pm$480	&\textbf{1.77$\pm$0.12}\\\hline
\multirow{5}{*}{Insurance10}
&ELCS-M	        &0.75$\pm$0.01 &0.61$\pm$0.01 &1.85$\pm$0.04 &0.33$\pm$0.01 &14461$\pm$2927 &6.35$\pm$1.27 \\
&ELCS-S 	        &0.60$\pm$0.01 &0.38$\pm$0.01 &2.44$\pm$0.02 &0.51$\pm$0.01 &10338$\pm$533 &5.78$\pm$0.55 \\
&ELCS-B 	        &0.64$\pm$0.01 &0.42$\pm$0.01 &2.31$\pm$0.06 &0.47$\pm$0.01 &\textbf{7556$\pm$224} &\textbf{3.55$\pm$0.11} \\
&ELCS 	        &\textbf{0.80$\pm$0.02}	&\textbf{0.67$\pm$0.02}		&\textbf{1.75$\pm$0.11}	&\textbf{0.23$\pm$0.01}		&10809$\pm$1528	&3.92$\pm$0.55\\
&ELCS-II 	    &\textbf{0.80$\pm$0.02}	&\textbf{0.67$\pm$0.02}		&\textbf{1.75$\pm$0.11}	&\textbf{0.23$\pm$0.01}		&10605$\pm$1499	&3.91$\pm$0.55\\\hline
\multirow{5}{*}{Child10}
&ELCS-M 	        &0.80$\pm$0.02 &0.75$\pm$0.03 &0.75$\pm$0.08 &0.16$\pm$0.02 &13438$\pm$1388 &5.80$\pm$0.58 \\
&ELCS-S 	        &0.66$\pm$0.01 &0.48$\pm$0.02 &1.17$\pm$0.03 &0.45$\pm$0.02 &15249$\pm$2335 &6.86$\pm$0.70 \\
&ELCS-B 	        &0.70$\pm$0.01 &0.52$\pm$0.02 &1.05$\pm$0.05 &0.47$\pm$0.02 &13880$\pm$1187 &4.28$\pm$0.05 \\
&ELCS 	        &\textbf{0.83$\pm$0.05}	&\textbf{0.76$\pm$0.07}		&\textbf{0.73$\pm$0.20}	&\textbf{0.14$\pm$0.03}		&13129$\pm$2613	&\textbf{3.98$\pm$0.79}\\
&ELCS-II	        &\textbf{0.83$\pm$0.05}	&\textbf{0.76$\pm$0.07}		&\textbf{0.73$\pm$0.20}	&\textbf{0.14$\pm$0.03}		&\textbf{13104$\pm$2605}	&\textbf{3.98$\pm$0.79}\\\hline
\multirow{5}{*}{Pigs}
&ELCS-M 	        &-	&-		&-	&-		&-	&-  \\
&ELCS-S 	        &-	&-		&-	&-		&-	&-  \\
&ELCS-B	        &-	&-		&-	&-		&-	&-  \\
&ELCS	        &\textbf{0.91$\pm$0.00}	&\textbf{0.99$\pm$0.00}		&\textbf{0.42$\pm$0.02}	&\textbf{0.15$\pm$0.01}		&13374$\pm$8660	&8.91$\pm$6.84 \\
&ELCS-II 	        &\textbf{0.91$\pm$0.00}	&\textbf{0.99$\pm$0.00}		&\textbf{0.42$\pm$0.02}	&\textbf{0.15$\pm$0.01}		&\textbf{11467$\pm$5659} &\textbf{8.38$\pm$5.12} \\\hline
\multirow{5}{*}{Gene}
&ELCS-M 	        &-	&-		&-	&-		&-	&-  \\
&ELCS-S 	        &-	&-		&-	&-		&-	&-  \\
&ELCS-B 	        &-	&-		&-	&-		&-	&-  \\
&ELCS 	            &\textbf{0.76$\pm$0.01}	&\textbf{0.79$\pm$0.01}		&\textbf{0.79$\pm$0.03}	&\textbf{0.32$\pm$0.01}		&36950$\pm$7876	&11.03$\pm$2.35 \\
&ELCS-II 	        &\textbf{0.76$\pm$0.01}	&\textbf{0.79$\pm$0.01}		&\textbf{0.79$\pm$0.03}	&\textbf{0.32$\pm$0.01}		&\textbf{36051$\pm$7696}	&\textbf{11.02$\pm$2.08} \\\hline
\end{tabular}
\end{table}

\section{Conclusion}
\label{conclusion}
 A new local causal structure learning algorithm (ELCS) has been proposed in this paper, which reduces the search space in distinguishing parents from children of a  target variable of interest. Specifically, ELCS makes  use of the N-structures  to distinguish  parents  from children of the target variable during learning the MB of the target variable. Furthermore, to combine MB learning with the N-structures to infer  edge directions between the target variable and its PC, we design an effective MB discovery  subroutine (EMB). We theoretically analyze the correctness of ELCS.  Extensive experimental results on benchmark BNs indicate that  ELCS not only improves the efficiency for learning the local causal structure, but also  achieves better performance in accuracy. In future, we plan to extend the ELCS algorithm for global causal structures learning and robust machine learning.

% if have a single appendix:
%\appendix[Proof of the Zonklar Equations]
% or
%\appendix  % for no appendix heading
% do not use \section anymore after \appendix, only \section*
% is possibly needed

% use appendices with more than one appendix
% then use \section to start each appendix
% you must declare a \section before using any
% \subsection or using \label (\appendices by itself
% starts a section numbered zero.)
%

%\appendices
%\section{Proof of the First Zonklar Equation}
%Appendix one text goes here.
%
%% you can choose not to have a title for an appendix
%% if you want by leaving the argument blank
%\section{}
%Appendix two text goes here.

% use section* for acknowledgment
\ifCLASSOPTIONcompsoc
  % The Computer Society usually uses the plural form
  \section*{Acknowledgments}
  This work was supported  in part by the National Key Research and Development Program of China (under Grant 2020AAA0106100), the National Natural Science Foundation of China (under Grant 61876206), and  Open Project Foundation of Intelligent Information Processing Key Laboratory of Shanxi Province (under Grant CICIP2020003).
\else
  % regular IEEE prefers the singular form
  \section*{Acknowledgment}
\fi

%The authors would like to thank...

% Can use something like this to put references on a page
% by themselves when using endfloat and the captionsoff option.
\ifCLASSOPTIONcaptionsoff
  \newpage
\fi

\bibliographystyle{IEEEtran}
\bibliography{Refs}

% Generated by IEEEtran.bst, version: 1.13 (2008/09/30)
\begin{thebibliography}{10}
\providecommand{\url}[1]{#1}
\csname url@samestyle\endcsname
\providecommand{\newblock}{\relax}
\providecommand{\bibinfo}[2]{#2}
\providecommand{\BIBentrySTDinterwordspacing}{\spaceskip=0pt\relax}
\providecommand{\BIBentryALTinterwordstretchfactor}{4}
\providecommand{\BIBentryALTinterwordspacing}{\spaceskip=\fontdimen2\font plus
\BIBentryALTinterwordstretchfactor\fontdimen3\font minus
  \fontdimen4\font\relax}
\providecommand{\BIBforeignlanguage}[2]{{%
\expandafter\ifx\csname l@#1\endcsname\relax
\typeout{** WARNING: IEEEtran.bst: No hyphenation pattern has been}%
\typeout{** loaded for the language `#1'. Using the pattern for}%
\typeout{** the default language instead.}%
\else
\language=\csname l@#1\endcsname
\fi
#2}}
\providecommand{\BIBdecl}{\relax}
\BIBdecl

\bibitem{YuTKDD20}
K.~Yu, L.~Liu, and J.~Li, ``A unified view of causal and non-causal feature
  selection,'' \emph{ACM Transactions on Knowledge Discovery from Data (TKDD),
  in press}, 2020.

\bibitem{CaiQZZH19}
R.~Cai, J.~Qiao, K.~Zhang, Z.~Zhang, and Z.~Hao, ``Causal discovery with
  cascade nonlinear additive noise model,'' in \emph{Proceedings of the
  Twenty-Eighth International Joint Conference on Artificial Intelligence,
  {IJCAI}, Macao, China, 10-16 August}, 2019, pp. 1609--1615.

\bibitem{Huang0GG19}
B.~Huang, K.~Zhang, M.~Gong, and C.~Glymour, ``Causal discovery and forecasting
  in nonstationary environments with state-space models,'' in \emph{Proceedings
  of the 36th International Conference on Machine Learning, {ICML}, 9-15 June,
  Long Beach, California, {USA}}, 2019, pp. 2901--2910.

\bibitem{MarxV19a}
A.~Marx and J.~Vreeken, ``Identifiability of cause and effect using regularized
  regression,'' in \emph{Proceedings of the 25th {ACM} {SIGKDD} International
  Conference on Knowledge Discovery {\&} Data Mining, {KDD} 2019, Anchorage,
  AK, USA, 4-8 August}, 2019, pp. 852--861.

\bibitem{yu2019multi}
K.~Yu, L.~Liu, J.~Li, W.~Ding, and T.~D. Le, ``Multi-source causal feature
  selection,'' \emph{IEEE Transactions on Pattern Analysis and Machine
  Intelligence}, vol.~42, no.~9, pp. 2240--2256, 2020.

\bibitem{GourevitchBF06}
B.~Gour{\'{e}}vitch, R.~L. Bouquin{-}Jeann{\`{e}}s, and G.~Faucon, ``Linear and
  nonlinear causality between signals: methods, examples and neurophysiological
  applications,'' \emph{Biol. Cybern.}, vol.~95, no.~4, pp. 349--369, 2006.

\bibitem{ChoiCN20}
J.~Choi, R.~S. Chapkin, and Y.~Ni, ``Bayesian causal structural learning with
  zero-inflated poisson bayesian networks,'' in \emph{Advances in Neural
  Information Processing Systems 33: Annual Conference on Neural Information
  Processing Systems, December 6-12, virtual}, 2020.

\bibitem{maathuis2009}
M.~H. Maathuis, M.~Kalisch, P.~B{\"u}hlmann \emph{et~al.}, ``Estimating
  high-dimensional intervention effects from observational data,'' \emph{The
  Annals of Statistics}, vol.~37, no.~6A, pp. 3133--3164, 2009.

\bibitem{pearl2009causality}
J.~Pearl, \emph{Causality}.\hskip 1em plus 0.5em minus 0.4em\relax Cambridge
  university press, 2009.

\bibitem{MarxV19b}
A.~Marx and J.~Vreeken, ``Testing conditional independence on discrete data
  using stochastic complexity,'' in \emph{The 22nd International Conference on
  Artificial Intelligence and Statistics, {AISTATS}, 16-18 April, Naha,
  Okinawa, Japan}, 2019, pp. 496--505.

\bibitem{TsamardinosBA06}
I.~Tsamardinos, L.~E. Brown, and C.~F. Aliferis, ``The max-min hill-climbing
  bayesian network structure learning algorithm,'' \emph{Machine Learning},
  vol.~65, no.~1, pp. 31--78, 2006.

\bibitem{ZhengARX18}
X.~Zheng, B.~Aragam, P.~Ravikumar, and E.~P. Xing, ``Dags with {NO} {TEARS:}
  continuous optimization for structure learning,'' in \emph{Advances in Neural
  Information Processing Systems 31: Annual Conference on Neural Information
  Processing Systems 2018, NeurIPS 2018, 3-8 December, Montr{\'{e}}al,
  Canada.}, 2018, pp. 9492--9503.

\bibitem{YuCGY19}
Y.~Yu, J.~Chen, T.~Gao, and M.~Yu, ``{DAG-GNN:} {DAG} structure learning with
  graph neural networks,'' in \emph{Proceedings of the 36th International
  Conference on Machine Learning, {ICML}, 9-15 June, Long Beach, California,
  {USA}}, 2019, pp. 7154--7163.

\bibitem{YinZWHZG08}
J.~Yin, Y.~Zhou, C.~Wang, P.~He, C.~Zheng, and Z.~Geng, ``Partial orientation
  and local structural learning of causal networks for prediction,'' in
  \emph{Causation and Prediction Challenge at {WCCI}, Hong Kong, June 1-6},
  2008, pp. 93--105.

\bibitem{GaoJ15}
T.~Gao and Q.~Ji, ``Local causal discovery of direct causes and effects,'' in
  \emph{Advances in Neural Information Processing Systems 28: Annual Conference
  on Neural Information Processing Systems, 7-12 December, Montreal, Quebec,
  Canada}, 2015, pp. 2512--2520.

\bibitem{CCMB}
X.~Wu, B.~Jiang, K.~Yu, C.~Miao, and H.~Chen, ``Accurate markov boundary
  discovery for causal feature selection,'' \emph{{IEEE} Trans. Cybern.},
  vol.~50, no.~12, pp. 4983--4996, 2020.

\bibitem{BAMB}
Z.~Ling, K.~Yu, H.~Wang, L.~Liu, W.~Ding, and X.~Wu, ``{BAMB:} {A} balanced
  markov blanket discovery approach to feature selection,'' \emph{ACM
  Transactions on Intelligent Systems and Technology}, vol.~10, no.~5, pp.
  52:1--52:25, 2019.

\bibitem{NiinimakiP12}
T.~Niinimaki and P.~Parviainen, ``Local structure discovery in bayesian
  networks,'' in \emph{Proceedings of the Twenty-Eighth Conference on
  Uncertainty in Artificial Intelligence, Catalina Island, CA, USA, 14-18
  August}, 2012, pp. 634--643.

\bibitem{S2TMB}
T.~Gao and Q.~Ji, ``Efficient score-based markov blanket discovery,''
  \emph{Int. J. Approx. Reason.}, vol.~80, pp. 277--293, 2017.

\bibitem{GSMB}
D.~Margaritis and S.~Thrun, ``Bayesian network induction via local
  neighborhoods,'' in \emph{Advances in Neural Information Processing Systems
  12, {[NIPS} Conference, Denver, Colorado, USA, November 29 - December 4,
  1999]}, S.~A. Solla, T.~K. Leen, and K.~M{\"{u}}ller, Eds., 1999, pp.
  505--511.

\bibitem{IAMB}
I.~Tsamardinos and C.~F. Aliferis, ``Towards principled feature selection:
  Relevancy, filters and wrappers,'' in \emph{Proceedings of the Ninth
  International Workshop on Artificial Intelligence and Statistics, {AISTATS}
  2003, Key West, Florida, USA, January 3-6, 2003}, C.~M. Bishop and B.~J.
  Frey, Eds., 2003.

\bibitem{tsamardinos2003algorithms}
I.~Tsamardinos, C.~F. Aliferis, A.~R. Statnikov, and E.~Statnikov, ``Algorithms
  for large scale markov blanket discovery.'' in \emph{FLAIRS conference},
  vol.~2, 2003, pp. 376--380.

\bibitem{FastIAMB}
S.~Yaramakala and D.~Margaritis, ``Speculative markov blanket discovery for
  optimal feature selection,'' in \emph{Proceedings of the 5th {IEEE}
  International Conference on Data Mining {(ICDM} 2005), 27-30 November 2005,
  Houston, Texas, {USA}}, 2005, pp. 809--812.

\bibitem{TsamardinosAS03}
I.~Tsamardinos, C.~F. Aliferis, and A.~R. Statnikov, ``Time and sample
  efficient discovery of markov blankets and direct causal relations,'' in
  \emph{Proceedings of the Ninth {ACM} {SIGKDD} International Conference on
  Knowledge Discovery and Data Mining, Washington, DC, USA, 24-27 August},
  2003, pp. 673--678.

\bibitem{AliferisTS03}
C.~F. Aliferis, I.~Tsamardinos, and A.~R. Statnikov, ``{HITON:} {A} novel
  markov blanket algorithm for optimal variable selection,'' in \emph{{AMIA},
  American Medical Informatics Association Annual Symposium, Washington, DC,
  USA, November 8-12}, 2003.

\bibitem{PCMB}
J.~M. Pe{\~{n}}a, R.~Nilsson, J.~Bj{\"{o}}rkegren, and J.~Tegn{\'{e}}r,
  ``Towards scalable and data efficient learning of markov boundaries,''
  \emph{Int. J. Approx. Reason.}, vol.~45, no.~2, pp. 211--232, 2007.

\bibitem{GaoJ17}
T.~Gao and Q.~Ji, ``Efficient markov blanket discovery and its application,''
  \emph{{IEEE} Trans. Cybernetics}, vol.~47, no.~5, pp. 1169--1179, 2017.

\bibitem{yu2020causality}
K.~Yu, X.~Guo, L.~Liu, J.~Li, H.~Wang, Z.~Ling, and X.~Wu, ``Causality-based
  feature selection: Methods and evaluations,'' \emph{ACM Computing Surveys
  (CSUR)}, vol.~53, no.~5, pp. 1--36, 2020.

\bibitem{MargaritisT99}
D.~Margaritis and S.~Thrun, ``Bayesian network induction via local
  neighborhoods,'' in \emph{Advances in Neural Information Processing Systems
  12, {[NIPS} Conference, Denver, Colorado, USA, November 29-December 4]},
  1999, pp. 505--511.

\bibitem{GaoFC17}
T.~Gao, K.~P. Fadnis, and M.~Campbell, ``Local-to-global bayesian network
  structure learning,'' in \emph{Proceedings of the 34th International
  Conference on Machine Learning, {ICML}, Sydney, NSW, Australia, 6-11 August},
  2017, pp. 1193--1202.

\bibitem{GaoW18}
T.~Gao and D.~Wei, ``Parallel bayesian network structure learning,'' in
  \emph{Proceedings of the 35th International Conference on Machine Learning,
  {ICML} 2018, Stockholmsm{\"{a}}ssan, Stockholm, Sweden, 10-15 July, 2018},
  2018, pp. 1671--1680.

\bibitem{Zhu1906}
S.~Zhu, I.~Ng, and Z.~Chen, ``Causal discovery with reinforcement learning,''
  in \emph{8th International Conference on Learning Representations, {ICLR}
  2020, April 26-30 Addis Ababa, Ethiopia}, 2020.

\bibitem{DVAE}
M.~Zhang, S.~Jiang, Z.~Cui, R.~Garnett, and Y.~Chen, ``{D-VAE:} {A} variational
  autoencoder for directed acyclic graphs,'' in \emph{Advances in Neural
  Information Processing Systems 32: Annual Conference on Neural Information
  Processing Systems 2019, NeurIPS 2019, 8-14 December 2019, Vancouver, BC,
  Canada}, 2019, pp. 1586--1598.

\bibitem{neufeld1993pearl}
J.~Pearl, \emph{Probabilistic reasoning in intelligent systems: networks of
  plausible inference}.\hskip 1em plus 0.5em minus 0.4em\relax Morgan Kaufmann
  series in representation and reasoning, 1988.

\bibitem{CPS}
P.~Spirtes, C.~Glymour, and R.~Scheines, \emph{Causation, Prediction, and
  Search}.\hskip 1em plus 0.5em minus 0.4em\relax {MIT} Press, 2000.

\bibitem{Meek1995Causal}
C.~Meek, ``Causal inference and causal explanation with background knowledge,''
  \emph{Proc. Conf. on Uncertainty in Artificial Intelligence (UAI-95)}, pp.
  403--410, 1995.

\bibitem{AliferisTSB03}
C.~F. Aliferis, I.~Tsamardinos, A.~R. Statnikov, and L.~E. Brown, ``Causal
  explorer: {A} causal probabilistic network learning toolkit for biomedical
  discovery,'' in \emph{Proceedings of the International Conference on
  Mathematics and Engineering Techniques in Medicine and Biological Scienes,
  {METMBS}, June 23-26, Las Vegas, Nevada, {USA}}, 2003, pp. 371--376.

\end{thebibliography}

\begin{IEEEbiography}[{\includegraphics[width=1in,height=1.25in,clip,keepaspectratio]{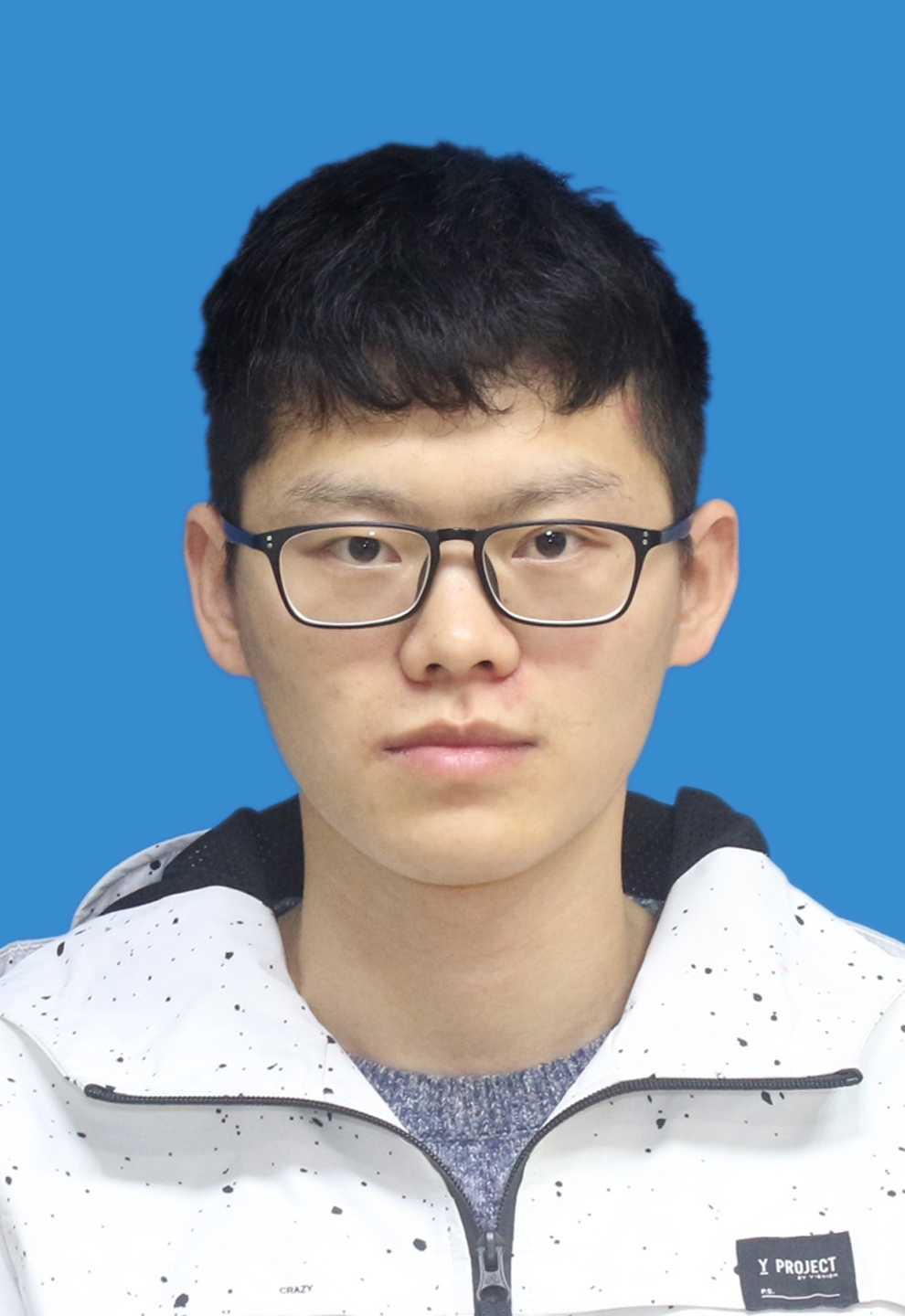}}]{Shuai Yang} received his B.S. and M.S. degrees in computer science from the Hefei University of Technology, Hefei, China, in 2016 and 2019, respectively, where he  is currently pursuing the Ph.D. degree with the School of Computer Science and Information Engineering. His main research interests include domain adaptation and  causal discovery.
\end{IEEEbiography}

\begin{IEEEbiography}[{\includegraphics[width=1in,height=1.25in,clip,keepaspectratio]{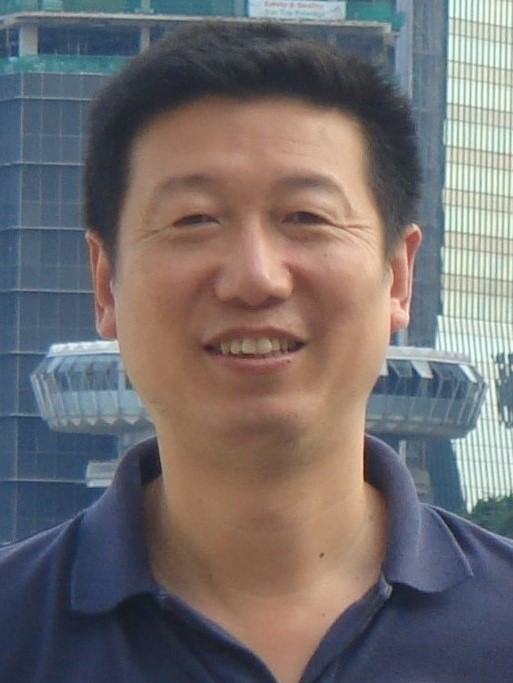}}]{Hao Wang} received the B.S. degree from the Department of Electrical Engineering and Automation, Shanghai Jiao Tong University, Shanghai, China, in 1984, and the M.S. and Ph.D. degrees in Computer Science from the Hefei University of Technology, Hefei, China, in 1989 and 1997, respectively. He is a Professor with the School of Computer Science and Information Engineering, Hefei University of Technology. His current research interests include artificial intelligence and robotics and knowledge engineering.
\end{IEEEbiography}

\begin{IEEEbiography}[{\includegraphics[width=1in,height=1.25in,clip,keepaspectratio]{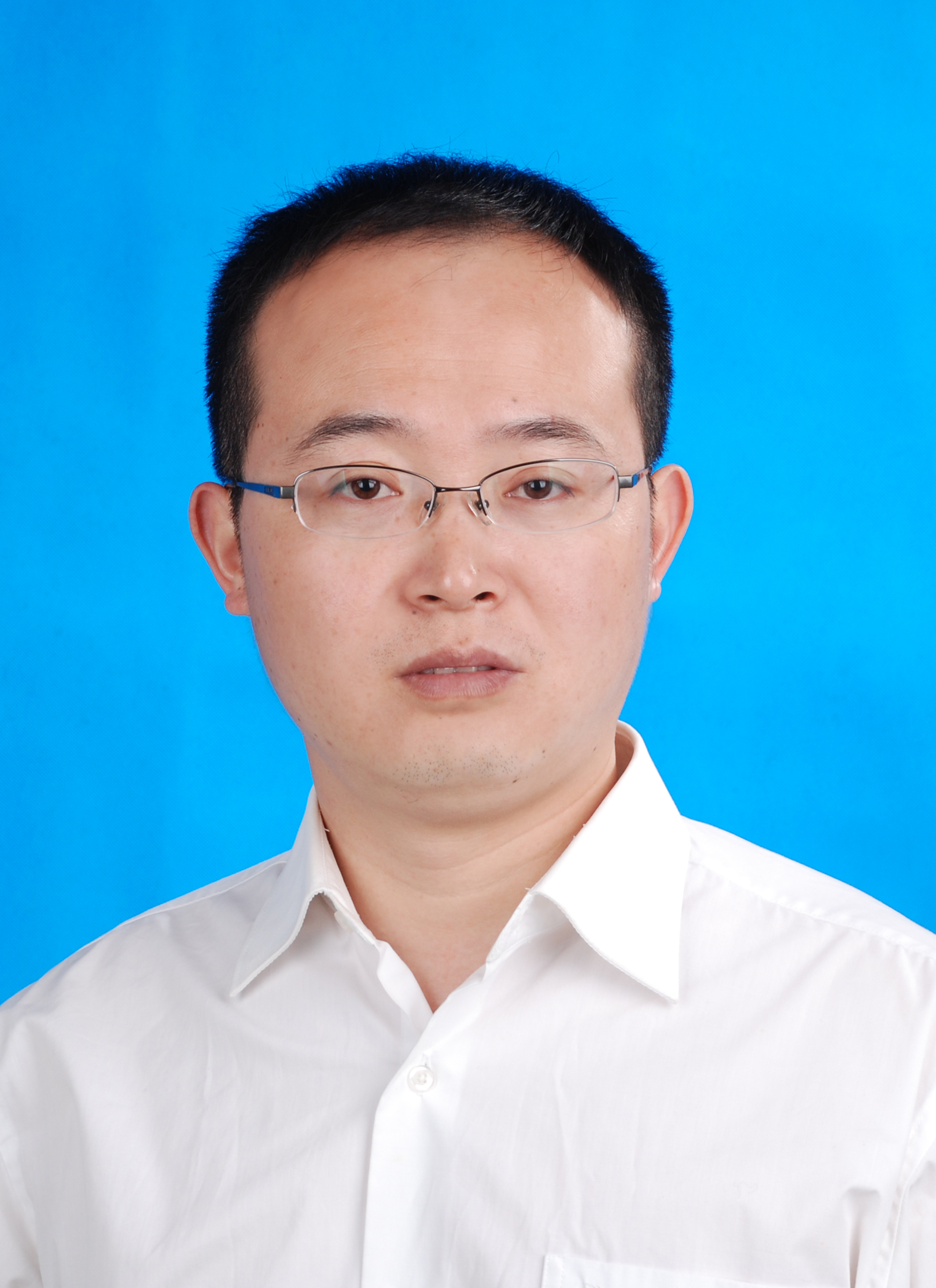}}]{Kui Yu} received the Ph.D. degree in computer science from  the Hefei University of Technology, Hefei, China, in 2013. He is currently a Professor with the School of Computer Science and Information Engineering, Hefei University of Technology. From 2015 to 2018, he was a Research Fellow of Computer Science with STEM, University of South Australia, Adelaide SA, Australia. From 2013 to 2015, he was a Postdoctoral Fellow with the School of Computing Science, Simon Fraser University, Burnaby, BC, Canada. His main research interests include causal discovery and machine learning.
\end{IEEEbiography}

\begin{IEEEbiography}[{\includegraphics[width=1in,height=1.25in,clip,keepaspectratio]{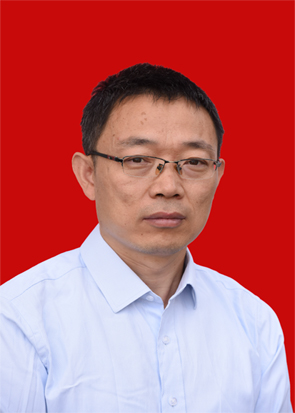}}]{Fuyuan Cao}  received the M.S. and Ph.D. degrees in computer science from the Shanxi University, Taiyuan, China, in 2004 and 2010, respectively. He is currently a professor with the School of Computer
and Information Technology, Shanxi University. His current research interests include machine learning  and clustering analysis.
\end{IEEEbiography}
%\begin{IEEEbiography}[{\includegraphics[width=1in,height=1.25in,clip,keepaspectratio]{Cao}}]{Fuyuan Cao}  received the M.S. and Ph.D. degrees in computer science from Shanxi University, Taiyuan, China, in 2004 and 2010, respectively. He is currently a Professor with the School of Computer and Information Technology, Shanxi University. His current research interests include data mining and machine learning, with a focus on clustering analysis.
%\end{IEEEbiography}

\begin{IEEEbiography}[{\includegraphics[width=1in,height=1.25in,clip,keepaspectratio]{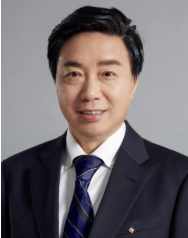}}]{Xindong Wu}(Fellow, IEEE) received the Ph.D. degree in artificial intelligence from the University of Edinburgh, Edinburgh, U.K., in 1993. He is currently a Chang Jiang Scholar with the School of Computer Science and Information Engineering, Hefei University of Technology, China, and also the Chief Scientist with the Mininglamp Academy of Sciences, Mininglamp Technology, Beijing, China. His research interests include data mining, knowledge-based systems, and web information exploration. He is a Fellow of the AAAS. He is also the Steering Committee Chair of the IEEE International Conference on Data Mining (ICDM), the Editor-in-Chief of \emph{Knowledge and Information Systems} and of Springer book series, \emph{Advanced Information and Knowledge Processing}.
\end{IEEEbiography}

% that's all folks
\end{sloppypar}
\end{document}